\documentclass[accepted]{uai2021}

\usepackage[american]{babel}

\usepackage{natbib} 
    \bibliographystyle{plainnat}
    
\usepackage{mathtools} 
\usepackage{booktabs} 
\usepackage{tikz} 



\usepackage{multirow}
\usepackage{amsfonts} 
\usepackage{caption}
\usepackage{subcaption}
\usepackage{xcolor}
\usepackage{bm}
\usepackage{bbm}
\usepackage{amsmath}
\usepackage{mathtools, nccmath}
\usepackage{amsthm}
\usepackage{xparse}
\usepackage{tcolorbox}
\usepackage{comment}
\usepackage{wrapfig}
\DeclarePairedDelimiterXPP\Expect[2]{\mathbb{E}_{#1}}[]{}{#2}%

\DeclareMathOperator*{\btheta}{\bm{\theta}}

\DeclareMathOperator{\bx}{\mathbf{x}}
\DeclareMathOperator{\bz}{\mathbf{z}}
\DeclareMathOperator{\bw}{\mathbf{w}}
\DeclareMathOperator{\bb}{\mathbf{b}}
\DeclareMathOperator{\ba}{\mathbf{a}}
\DeclareMathOperator{\bV}{\mathbf{V}}
\DeclareMathOperator{\bW}{\mathbf{W}}
\DeclareMathOperator{\pprime}{{\prime\prime}}

\DeclareMathOperator{\balpha}{\bm{\alpha}}

\DeclareMathOperator{\bPhi}{\bm{\Phi}}
\DeclareMathOperator{\bDelta}{\bm{\Delta}}
\DeclareMathOperator{\bbnorm}{\bigg|\bigg|}

\newcommand{\indicator}[1]{\mathbbm{1}(#1)}
\newcommand{\bind}{\bm{\mathbbm{1}}}

\newtheorem{theorem}{Theorem}
\newtheorem{definition}{Definition}

\newtheorem{lemma}{Lemma}
\newtheorem{proposition}{Proposition}
\newcommand*\mystrut[1]{\vrule width0pt height0pt depth#1\relax}

\usepackage{url}

\usepackage{breakurl}
\usepackage[breaklinks]{hyperref}

\usepackage{amsthm}

\providecommand{\customgenericname}{}
\newcommand{\newcustomtheorem}[2]{%
  \newenvironment{#1}[1]
  {%
   \renewcommand\customgenericname{#2}%
   \renewcommand\theinnercustomgeneric{##1}%
   \innercustomgeneric
  }
  {\endinnercustomgeneric}
}

\newcustomtheorem{customlemma}{Lemma}

\title{Know Your Limits: Uncertainty Estimation with ReLU Classifiers Fails at Reliable OOD Detection}

%
%
\author[1]{\href{mailto:<dennis.ulmer@mailbox.org>}{Dennis~Ulmer}{}} 
\author[2]{\href{mailto:<giovanni.cina@pacmed.nl>}{Giovanni~Cin\`a}{}}
\affil[1]{%
    ITU Copenhagen\\
    Copenhagen, Denmark
}
\affil[2]{%
    Pacmed BV\\
    Amsterdam, Netherlands\\
}

\begin{document}
\maketitle

\begin{abstract}
    A crucial requirement for reliable deployment of deep learning models for safety-critical applications is the ability to identify out-of-distribution (OOD) data points, samples which differ from the training data and on which a model might underperform. Previous work has attempted to tackle this problem using uncertainty estimation techniques. However, there is empirical evidence that a large family of these techniques do not detect OOD reliably in classification tasks.
    
    This paper gives a theoretical explanation for said experimental findings and illustrates it on synthetic data. We prove that such techniques are not able to reliably identify OOD samples in a classification setting, 
    since their level of confidence is generalized to unseen areas of the feature space.
    This result stems from the interplay between the representation of ReLU networks as piece-wise affine transformations, the saturating nature of activation functions like softmax, and the most widely-used uncertainty metrics. 
\end{abstract}


\section{Introduction}

Notwithstanding the tremendous improvements achieved in recent years by means of novel and larger deep learning architectures, advanced models still lack certain properties that guarantee their safety in high-stakes applications like health care \citep{he2019practical}, autonomous driving \citep{mcdermid2019towards}, and more. 
Among other traits, the capability to discern familiar data samples seen during the training stage (in-distribution) from abnormal inputs (out-of-distribution) is of  paramount importance in certain contexts. Take for instance a hospital, in which an algorithm is used to predict complications for a patient. Due to factors like changing patient demographics or protocols, but also simply different hospital environments, predictions might become less reliable and cause harm to the patient. A degradation of the model performance might only be detected much later, when the shift in the test data becomes more apparent - at which point further damage accumulates, hence the need arises to implement techniques that can detect OOD samples reliably.

Unfortunately, it is well-known that neural network classifiers tend to be overconfident in their predictions \citep{guo2017calibration}, i.e. exhibiting high levels of certainty when it is unwarranted, and often fail to correctly identify OOD samples \citep{ovadia2019can, nalisnick2018deep}. 
A recent study on medical tabular data has shown that even  techniques specifically developed to quantify the model's uncertainty struggle at detecting OOD samples for a relatively simple classification task \citep{ulmer2020trust}. Crucially, it was shown that neural discriminators tend to project vast areas of high certainty far away from the training distribution - a behaviour that seems completely at odds with reliable OOD detection. 
These observations can easily be replicated on synthetic data, as displayed in Figure \ref{subfig:uncertainty}, where one can observe open areas of constant certainty stretching beyond the training data. The reasons for this behavior in a classification setting are hitherto much less studied.

\begin{figure*}[h]
    \begin{subfigure}[t]{0.32\textwidth}
        \includegraphics[width=\textwidth]{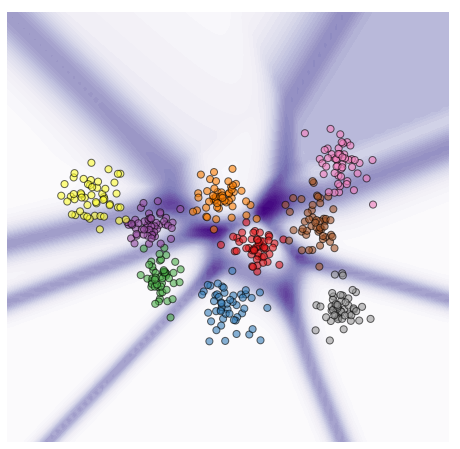}
        \caption{Predictive entropy $\tilde{\mathbb{H}}[p_{\btheta}(y|\bx)]$ of ReLU classifier.}
        \label{subfig:uncertainty}
    \end{subfigure}
    \hfill
    \begin{subfigure}[t]{0.32\textwidth}
        \includegraphics[width=\textwidth]{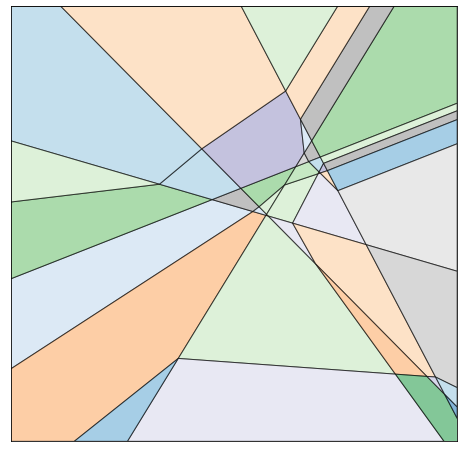}
        \caption{Polytopal linear regions induced by  same classifier \citep{arora2018understanding}.}
        \label{subfig:polytopes}
    \end{subfigure}
    \hfill
    \begin{subfigure}[t]{0.32\textwidth}
        \includegraphics[width=\textwidth]{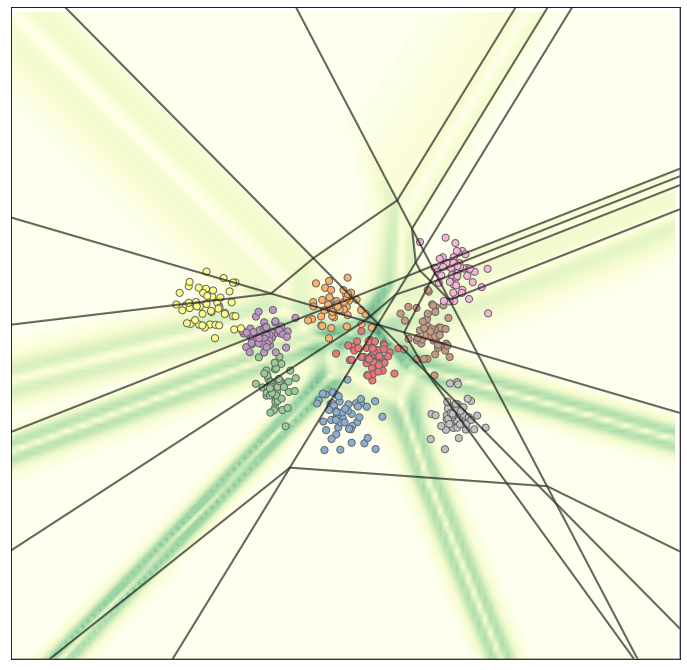}
        \caption{Magnitude of gradient of predictive entropy $||\nabla_{\bx}\tilde{\mathbb{H}}[p_{\btheta}(y|\bx)] ||_2$.}
        \label{subfig:grad-polytopes}
    \end{subfigure}
    \caption{(a) Uncertainty of a neural classifier with ReLU activations measured by predictive entropy on synthetic data, illustrated by increasing shades of purple with white denoting absolute certainty. (b) Polytopal, linear regions in the feature space induced by the same classifier (as introduced by \citet{arora2018understanding}). (c) Norm of the gradient of the predictive entropy plotted by increasing shades of green, showing how small perturbations in the input have a decreasing influence on the uncertainty of the network as we stray away from the training data, creating large areas in which uncertainty levels are overgeneralized. Exceptions to this are the model's decision boundaries, which is discussed in Section \ref{sec:experiments}. Polytopes are drawn using the code of \citet{jordan2019provable}.}\label{fig:intro-figure}
\end{figure*}

In this paper we propose a novel theoretical argument to explain such phenomena, showing that certainty levels are generalized on sub-spaces defined by the network (see Figure \ref{subfig:polytopes} and \ref{subfig:grad-polytopes}). We do this by simulating covariate shift for single feature values of real variables and studying the asymptotic behavior of the model.
Our contributions are as follows:

\begin{enumerate}
    \item Our first result shows that, under mild assumptions about the network's behaviour on certain subspaces, ReLU-based neural network classifiers coupled with widely used uncertainty metrics always converge to a fixed uncertainty level on OOD samples.
    \item We extend this result by proving that variational inference-based and ensembling methods in combination with several uncertainty estimation techniques suffer from the same problem (Theorem \ref{main-theorem}). This phenomenon is illustrated and discussed on synthetic data.
\end{enumerate}

These results entail that, when the conditions of the theorem are met, these models cannot be used to reliably detect OOD: since the level of certainty is generalized from seen to unseen data, the models are unable to differentiate between the two. The findings of this article have bearings on OOD detection for several critical applications using neural classifiers with ReLU activation functions.

\section{Related Work}

Overconfidence in neural networks has been studied from several angles; we summarize here some of the main lines of research.
One way to counteract the problem of overconfidence lies in the quantification of a network's uncertainty, which is usually divided into aleatoric and epistemic uncertainty. The former denotes \emph{non-reducible} uncertainty, e.g. uncertainty intrinsic to the data generating process, while the latter refers to \emph{reducible} uncertainty. This includes the knowledge about the best model class, as well as about the parameters which explain the data best \citep{der2009aleatory, hullermeier2019aleatoric}. 
In this article, we are interested in the effect of methods that estimate uncertainty post-hoc. 
One way to evaluate uncertainty estimation methods is the study of their behaviour in presence of OOD samples \citep{ovadia2019can}. 
\citet{ulmer2020trust} specifically show how the methods used in our work fail in practice to detect clinically relevant OOD groups of patients in a medical context. \citet{kompa2020empirical} conclude that many uncertainty methods produce confidence intervals which do not include the actual observations on OOD data.

A variety of articles approaches the phenomenon of overconfidence from a calibration perspective: starting with the work of  \citet{guo2017calibration}, follow-up work develops improved variants of temperature scaling \citep{laves2019well} or new types of scaling altogether \cite{kumar2018trainable, kull2019beyond}.
A separate line of enquiry investigates the effect of different activation functions 
\citep{bridle1990probabilistic, ramachandran2017searching}, exploring alternatives to the sigmoid or softmax function\footnote{Their relationship is outlined in more detail in Appendix \ref{app-softmax-sigmoid-connection} or \citet{bridle1990probabilistic}.}  as the final component of neural discriminators \citep{de2015exploration, hendrycks2017baseline, laha2018controllable, martins2016softmax}. 
 But to the best of our knowledge, only \citet{hein2019relu} give a theoretical explanation for this behavior in ReLU-networks showing that, when a point is scaled uniformly, the network's probability mass is placed on a single class. However, they do not extend their findings to uncertainty estimation metrics nor investigate the implications for variational approaches or ensembling. 
 \citet{mukhoti2021deterministic} show that epistemic and aleatoric uncertainty cannot be disentangled successfully purely based on the output distribution of a single network.

\section{Preliminaries}

In this section we introduce some notation and relevant definitions for the rest of this work.
\subsection{Notation}\label{sec:notation}

We denote sets in calligraphic letters, e.g. $\mathcal{P}$ or $\mathcal{C}$. Vectors are represented using lower-case bold letters such as $\bx$ or $\btheta$. 
For functions with multiple outputs, a lowercase index refers to a specific component of the output, e.g. with a function $f: \mathbb{R} \rightarrow \mathbb{R}^N$, $f(x)_n$ denotes one of the output components with $n \in 1, ..., N$. Furthermore, we use $\odot$ to denote the Hadamard product and $||\ldots||_2$ for the $l_2$-norm.

\subsection{Out-of-Distribution Data}\label{sec:dataset-shift}

Given a set $\mathcal{C} = \{1, \ldots, C\}$ of numbers denoting class labels, we define a data set to be a finite set $\mathcal{D} \subset \mathbb{R}^D \times \mathcal{C}$ containing $N$ ordered pairs of $D$-dimensional feature vectors $\bx_i$ and corresponding class labels $y_i$ obtained from some (unknown) joint distribution $\bx_i, y_i\ \sim\ p(\bx, y)$ s.t. $\mathcal{D} = \{(\bx_i, y_i)\}_{i=1}^N$. 
Although there exist many different notions of dataset shift 
\citep{shimodaira2000improving, moreno2012unifying}, we particularly focus on \emph{covariate shift}, in which the distribution of feature values - the covariates - differs from the original training distribution $p(\bx)$. 
We focus on this kind of shift as it is especially common in non-stationary environments like healthcare \citep{curth2019transferring} and many other applications.

To simulate covariate shift, we obtain OOD samples by shifting points away from the training distribution by means of a scaling factor. This approach is in line with recent experiments on covariate shift and OOD detection \citep{ovadia2019can,ulmer2020trust}.
We would expect a reliable OOD detection model to output increasingly higher uncertainty as points stray further and further away from the mass of $p(\bx)$, thus we study the behaviour of OOD detection models in the limit, when the scaling factor is allowed to grow indefinitely in at least one dimension.

\subsection{Uncertainty Metrics}\label{sec:uncertainty-metrics}

We begin by defining a neural discriminator, which we assume to follow common architectural conventions, i.e. to consist of a series of affine transformations with ReLU \citep{glorot2011deep} activation functions. Together with a final softmax function \citep{bridle1990probabilistic}, it  parametrizes a categorical distribution over classes.

\begin{definition} \label{def:logit}
    Let $\bx \in \mathbb{R}^D$ be an input vector and $C$ the number of classes in a classification problem.
    The unnormalized output of the network after $L$ layers is a function $f_{\btheta}: \mathbb{R}^D \rightarrow \mathbb{R}^C$ with the final output following after an additional softmax function $\bar{\sigma}(\cdot)$ s.t. $p_{\btheta} = \bar{\sigma}\circ f_{\btheta}$, so $p_{\btheta}(y=c|\bx) \equiv \bar{\sigma}(f_{\btheta}(\bx))_c$. Thus the discriminator is represented by a function $p_{\btheta}: \mathbb{R}^D \rightarrow [0, 1]^C$ which is parametrized by a vector $\btheta$.
\end{definition}

The softmax function $\bar{\sigma}: \mathbb{R}^C \rightarrow \mathbb{R}^C$ is commonly defined as 
\begin{equation}
    \bar{\sigma}(f_{\btheta}(\bx))_c = \frac{\exp(f_{\btheta}(\bx)_c)}{\sum_{c^\prime=1}^C\exp(f_{\btheta}(\bx)_{c^\prime})}
\end{equation}
We will consider a (non-exhaustive) set of popular uncertainty metrics in this work, which we introduce in turn. \citet{hendrycks2017baseline} introduce a simple baseline, which is the highest probability observed for any class:
\begin{equation*}
    y_{\text{max}} = \max_{c \in \mathcal{C}}p_{\btheta}(y=c|\bx)
\end{equation*}
Ideally, the model's predictive distribution would become more uniform for challenging inputs (e.g. in areas of class overlap) and thus produce a lower confidence score  $y_{\text{max}}$.\footnote{Which is why we measure \emph{un}certainty by $1-y_{\text{max}}$.} The other uncertainty estimation techniques introduced below try to approximate the uncertainty of the predictive distribution for a new data point $\bx^\prime$, which is commonly factorized as follows:
\begin{equation}\label{eq:pred-dist-fac}
    p(y|\bx^\prime, \mathcal{D}) = \int p(y|\bx^\prime, \btheta)p(\btheta|\mathcal{D})d\btheta 
\end{equation}
In the following, we use $p(y|\bx, \btheta) \equiv p_{\btheta}(y|\bx)$. This equation is intractable: the weight posterior $p(\btheta|\mathcal{D})$ cannot be computed precisely using Bayes' rule. Hence the weight posterior is often replaced by a variational posterior $q(\btheta)$ and the expectation formed by this expression is commonly approximated by a Monte Carlo approximation:
\begin{equation}\label{eq:mc-approx}
    \Expect[\Big]{p(\btheta|\mathcal{D})}{p_{\btheta}(y|\bx)} \approx \frac{1}{K}\sum_{k=1}^K p^{(k   )}_{\btheta}(y|\bx);\quad {\btheta}^{(k)} \sim q(\btheta)
\end{equation}
Other methods for which a similar aggregation of predictions can be used include Markov-chain Monte-Carlo procedures \citep{welling2011bayesian, involutive2020neklyudov} 
or simply ensembling \citep{lakshminarayanan2017simple}, which does not require sampling. In this work, we will use the term \emph{instance} to refer to a network characterized by $\btheta^{(k)}$ in order to group all of these methods together. We follow the reasoning of \citet{wilson2020bayesian} to interpret both variational and ensembling methods as examples of Bayesian model averaging. A very intuitive method to aggregate all instances' predictions and estimate uncertainty is to compute the variance over all classes, such as in  \citet{gal2018understanding}: 
\begin{equation*}\begin{aligned}
    \bar{\sigma}^2 & = \frac{1}{C}\sum_{c=1}^C \Expect[\bigg]{{p(\btheta|\mathcal{D})}}{\Big(p_{\btheta}(y=c|\bx)\Big)^2}\\
    & - \Expect[\bigg]{{p(\btheta|\mathcal{D})}}{p_{\btheta}(y=c|\bx)}^2
\end{aligned}\end{equation*}
The more predictions disagree, the larger the  average variance per class will become. Another approach lies in measuring the Shannon entropy $\mathbb{H}$ of the predictive distribution \citep{gal2016uncertainty}:
\begin{equation*}
    \tilde{\mathbb{H}}\Big[p_{\btheta}(y|\bx)\Big] = \mathbb{H}\bigg[\Expect[\Big]{{p(\btheta|\mathcal{D})}}{p_{\btheta}(y|\bx)}\bigg]
\end{equation*}
Entropy is maximal when all probability mass is centered on a single class, i.e. all aggregated predictions agree on a label, and minimum when the predictive distribution is uniform. As these two metrics capture only the total uncertainty, we finally also consider the mutual information between model parameters and a data sample \citep{gal2018understanding}, which aims to isolate epistemic uncertainty:
\begin{equation*}\begin{aligned}
    \underbrace{\mathbb{I}(y, {\btheta}| \mathcal{D}, \bx)}_{\text{Model uncertainty}} & \approx \underbrace{\mathbb{H}\bigg[\Expect[\Big]{{p(\btheta|\mathcal{D})}}{p_{\btheta}(y|\bx)}\bigg]}_{\text{Total uncertainty}}\\
    & - \underbrace{\Expect[\bigg]{{p(\btheta|\mathcal{D})}}{\mathbb{H}\Big[p_{\btheta}(y|\bx)\Big]}}_{\text{Data uncertainty}}
\end{aligned}\end{equation*}
The term itself can be interpreted as the gain in information about the ideal model parameters and correct label upon receiving an input. If we can only gain a little, that implies that parameters are already well-specified and that the epistemic uncertainty is low.

\subsection{Additional Definitions}\label{sec:general-definitions}


Here we introduce some concepts related to monotonicity, which will become central in the next sections.
In the univariate case, we call a function strictly increasing on an interval $\mathcal{I} = [a, b]$ with $a < b$ and $a, b \in \mathbb{R}$ if it holds that
\begin{equation*}\label{eq:monotonically-increasing}
    \forall x^\prime \in \mathcal{I}\ \bigg( \frac{\partial}{\partial x}f(x)\Big|_{x=x^\prime} > 0 \bigg)
\end{equation*}
 where $\cdot|_{x=x^\prime}$ refers to evaluating the value of the derivative of $f$ at $x^\prime$. 
This definition can also be extended to multivariate functions by requiring strict monotonicity (strictly increasing or decreasing) in all dimensions:

\begin{definition}\label{def:monotonicity-multivariate}
    We call a multivariate function $f: \mathbb{R}^D \rightarrow \mathbb{R}$ strictly monotonic on a subspace $\mathcal{P} \subseteq \mathbb{R}^D$ if it holds that
    \begin{equation}\label{eq:monotonicity-multivariate}\begin{aligned}
        \forall d \in 1, \ldots, D,\ &  \Bigg(\forall \bx^\prime \in \mathcal{P},\ \Big(\nabla_{\bx} f(\bx)\Big|_{\bx=\bx^\prime}\Big)_d < 0\ \\ 
        & \vee \ 
        \forall \bx^\prime \in \mathcal{P},\ \Big(\nabla_{\bx} f(\bx)\Big|_{\bx=\bx^\prime}\Big)_d > 0 
        \Bigg)
    \end{aligned}\end{equation}
where $(\cdot)_d$ refers to $\frac{\partial f(x_d)}{\partial x_d}|_{x_d=x^\prime_d}$, the $d$-th component of the gradient $\nabla_{\bx} f(\bx)$ evaluated at $\bx^\prime$. We call a multivariate function $f: \mathbb{R}^D \rightarrow \mathbb{R}^C$ \emph{component-wise strictly monotonic} if the above definition holds for the gradient of every output component $\nabla_{\bx} f(\bx)_c$. 
\end{definition}

We note here that the softmax function is an example for a component-wise strictly  monotonic function. 
As later lemmas investigate the behavior of functions in the limit, it is furthermore useful to define regions of the feature space that are unbounded in at least one direction. We call a \emph{partially-unbounded polytope} (henceforth abbreviated by PUP) a convex subspace of $\mathbb{R}^D$ that is unbounded in at least one dimension $d$, i.e. if the polytope's projection onto $d$ is either left-bounded by $-\infty$ or right-bounded by $\infty$, or both.
    
    

\section{Convergence of Predictions on OOD}

\begin{wrapfigure}{r}{0.25\textwidth}
    \centering
    \includegraphics[width=0.25\textwidth]{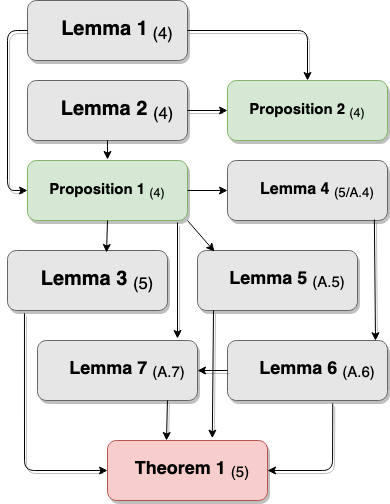}
    \caption{Dependencies between theoretical results. Information in parentheses denotes the section in the document.}
    \label{fig:flow-chart}
\end{wrapfigure}

In this section we will show that, moving the input to the extremes of the feature space, a ReLU classifier will converge to a fixed prediction. To demonstrate this, we must establish how the distance from the training data affects the network's logits. To this end, we utilize a known result stating that
neural networks employing piece-wise linear activation functions partition the input space into polytopes (such as in Figure \ref{subfig:polytopes}; \citeauthor{arora2018understanding}, \citeyear{arora2018understanding}). Given the saturating nature of the softmax, we conclude in Proposition \ref{proposition:overconfidence-softmax} that even for extreme feature values in the limit, the output distribution of the model will not change anymore. In order to help the reader untangle the interdependence of upcoming results, we provide a flow chart in Figure \ref{fig:flow-chart}.

We first describe how to re-write a ReLU network - or any other network with piece-wise linear activation functions - as a piece-wise affine transformation, borrowing from \citet{croce2018randomized, hein2019relu}. We start with the common form of $f_{\btheta}$ as a series of affine transformations, interleaved with ReLU activation functions, which we will denote by $\phi$:
\begin{equation}\begin{aligned}\label{eq:relu-net}
    f_{\btheta}(\bx) = &  \bW_L\phi\big(\bW_{L-1}\phi\big(\ldots \phi\big(\bW_1\bx + \bb_1\big) \ldots\big) \\ 
    & + \bb_{L-1} \big)+ \bb_L
\end{aligned}\end{equation}
In the following, let $f_{\btheta}^l(\bx)$ denote the output of  layer $l$ before applying an activation function. We now define a layer-specific diagonal matrix $\Phi_l \in \mathbb{R}^{n_l \times n_l}$ in the following way, where $n_l$ denotes the hidden units in layer $l$:
\begin{equation*}
    \bPhi_l(\bx) = \begin{bmatrix}
    \indicator{f_{\btheta}^l(\bx)_1 > 0} & \cdots & 0  \\
    \vdots &  \ddots & \vdots \\
    0 & \cdots & \indicator{f_{\btheta}^l(\bx)_{n_l} > 0} \\
    \end{bmatrix}
\end{equation*}
This allows us to rewrite Equation \ref{eq:relu-net} by replacing the usage of $\phi$ with a matrix multiplication using $\bPhi_l$:
\begin{equation}\begin{aligned}\label{eq:relu-net-replaced}
    f_{\btheta}(\bx) = & \bW_L\bPhi_{L-1}(\bx)\big(\bW_{L-1}\bPhi_{L-2}(\bx)\big(\ldots \\
     & \bPhi_1(\bx)\big(\bW_1\bx + \bb_1\big) \ldots\big) + \bb_{L-1} \big)+ \bb_L \\
\end{aligned}\end{equation}
Finally, by distributing the matrix products inside-out (see Appendix \ref{app:relu-linearization} for more detail), we can rewrite the network as a single affine transformation $f_{\btheta}(\bx) = \bV(\bx)\bx + \ba(\bx)$ with
\begin{equation*}
    \bV(\bx) = \bW_L\bigg(\prod_{l=1}^{L-1}\bPhi_l(\bx)\bW_{L-l} \bigg)
\end{equation*}
\begin{equation*}
    \ba(\bx) = \bb_L + \sum_{l=1}^{L-1}\bigg(\prod_{l^\prime=1}^{L-l}\bW_{L+1-l^\prime}\bPhi_{L-l^\prime}(\bx)\bigg)\bb_l
\end{equation*}
Note that the definition of $\bV(\bx)$ corresponds to the Jacobian of $f_{\btheta}(\bx)$, meaning that $v_{cd} = \frac{\partial f_{\btheta}(\bx)_c}{\partial x_{d}}$ This is very useful, as it allows to quickly check whether a network $f_{\btheta}$ is component-wise strictly monotonic by checking $\bV(\bx)$ for entries containing zeros. As \citet{hein2019relu} show, this formulation can also be used to characterize a set of polytopes $\mathcal{Q} = \{Q_1, \ldots, Q_M\}$ induced by $f_{\btheta}$ and that within each polytope, the function has a unique representation as an affine transformation. For this reason, we drop the dependence of $\bV$ and $\ba$ on $\bx$ when we refer to a specific polytope. Such polytopes are constructed by first retrieving the half-spaces induced by each of the network's neurons and then intersecting all said half-spaces to generate convex regions or polytopes.\footnote{We refer the reader to Appendix \ref{app:polytopes} or \citet{hein2019relu} for details on the construction, since it is not central to our reasoning.} We are especially interested in polytopes that are unbounded in at least one direction. In this regard, the results of \citet{croce2018randomized} and \citet{hein2019relu} show that there is a finite number of polytopes corresponding to the given network, and their Lemma 3.1 proves the existence of at least one unbounded polytope. Furthermore, under a mild condition on $\bV$, we can ascertain that $f_{\btheta}$ will be component-wise strictly monotonic on any polytope.



\begin{lemma}\label{lemma:strictly-monotonic}
    Suppose $f_{\btheta}$ is a ReLU network according to Definition \ref{def:logit}. Then $f_{\btheta}$ is a component-wise strictly monotonic function on every of its polytopes $Q\in\mathcal{Q}$, as long as its corresponding matrix $\bV$ has no zero entries. 
\end{lemma}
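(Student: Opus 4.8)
The plan is to lean directly on the piece-wise affine representation established just above the statement. Fix an arbitrary polytope $Q \in \mathcal{Q}$. The defining property of $Q$ is that no neuron switches its activation state inside it, so each indicator matrix $\bPhi_l(\bx)$ is constant for $\bx \in Q$; consequently the matrix $\bV(\bx)$ and vector $\ba(\bx)$ are also constant on $Q$. This is exactly the regime in which the authors drop the dependence on $\bx$, so on $Q$ the network collapses to a genuine affine map $f_{\btheta}(\bx) = \bV\bx + \ba$ with $\bV$ and $\ba$ fixed. The whole argument is then packaged into the fact that the Jacobian of this map is a constant matrix throughout $Q$.

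The first concrete step is to read off the partial derivatives. Writing the $c$-th output component as $f_{\btheta}(\bx)_c = \sum_{d=1}^{D} v_{cd}\, x_d + a_c$ on $Q$ and differentiating, I obtain $\frac{\partial f_{\btheta}(\bx)_c}{\partial x_d} = v_{cd}$, which reconfirms that $\bV$ is the Jacobian but, more importantly, shows that the $d$-th component of $\nabla_{\bx} f_{\btheta}(\bx)_c$ is a single constant $v_{cd}$ independent of the choice of $\bx \in Q$. The second step is to match this against Definition \ref{def:monotonicity-multivariate}. Fix a component $c$ and a coordinate $d$. By hypothesis $\bV$ has no zero entries, hence $v_{cd} \neq 0$, so either $v_{cd} > 0$ or $v_{cd} < 0$. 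Since $\frac{\partial f_{\btheta}(\bx)_c}{\partial x_d} = v_{cd}$ takes this same sign-definite value for every $\bx \in Q$, exactly one of the two disjuncts of Equation \ref{eq:monotonicity-multivariate} holds on the subspace $\mathcal{P} = Q$: the derivative is strictly positive everywhere on $Q$ when $v_{cd} > 0$, and strictly negative everywhere when $v_{cd} < 0$. As this holds for every $d$, the function is strictly monotonic in the sense of Definition \ref{def:monotonicity-multivariate}; as it holds for every component $c$, it is component-wise strictly monotonic on $Q$; and as $Q$ was arbitrary, the conclusion holds on every polytope.

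I do not expect a genuine obstacle here, since the difficult structural work was already done in deriving the affine form $f_{\btheta}(\bx) = \bV\bx + \ba$. The only point deserving a sentence of care is the constancy of the activation pattern: I would justify that each $\bPhi_l(\bx)$ is fixed on $Q$ by appealing to the construction of the polytopes as intersections of the neuron half-spaces $\{f_{\btheta}^l(\bx)_i > 0\}$ and their complements, so that within $Q$ no indicator $\indicator{f_{\btheta}^l(\bx)_i > 0}$ can change value. Given that, the ``no zero entries'' assumption performs precisely the remaining task, upgrading merely constant partial derivatives to strict, sign-definite monotonicity in each coordinate.
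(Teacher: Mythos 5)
Your proof is correct and follows essentially the same route as the paper's: restrict to a polytope where the network is the affine map $f_{\btheta}(\bx) = \bV\bx + \ba$, read off $\frac{\partial f_{\btheta}(\bx)_c}{\partial x_d} = v_{cd}$, and conclude sign-definiteness of each partial derivative from $v_{cd} \neq 0$. Your extra remark justifying the constancy of the activation pattern (and hence of $\bV$) on $Q$ via the half-space construction is a slightly more careful version of what the paper takes for granted with its ``as discussed'' reference to the preceding linearization.
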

\begin{proof}
Let $Q$ be one such polytope. As discussed, when restricted to $Q$, the network corresponds to an affine transformation $f_{\btheta}(\bx) = \bV\bx + \ba$ with $\bV \in \mathbb{R}^{C \times D}$ and $\ba\in \mathbb{R}^{C}$. $f_{\btheta}(\bx)_c$ thus corresponds to the dot product of the $c$-th row of $\bV$ and $\bx$ plus the $c$-th element of $\ba$. It follows that the partial derivative of $f_{\btheta}(\bx)_c$ with respect to a dimension $d$ would then amount to the element $v_{cd}$ in $\bV$. This entails that, if $v_{cd} \neq 0$, at any point $\bx \in Q$ the gradient will be always positive or always negative. 
\end{proof}

We note here that the component-wise strict monotonicity of $f_{\btheta}$ and softmax do not entail the same property for $p_{\btheta}$.\footnote{To see a counterexample, the reader can check that even assuming component-wise strict monotonicity for $f_{\btheta}$, if the matrix $\bV$ associated to $f_{\btheta}$ on a specific polytope has a column $d$ filled with the same value $a$, then the resulting $p_{\btheta}$ will have a gradient of 0 at dimension $d$, regardless of what class we are considering. This is because the partial derivatives of the softmax, when all multiplied by the same constant $a$, add up to zero.}  Nonetheless, the monotonic behaviour of $f_{\btheta}$ is sufficient to drive the logits to plus or minus infinity in the limit, a phenomenon that constrains the output of $p_{\btheta}$ as we scale a data sample away from training data.


We begin our investigation of behaviour in the limit by showing that if we scale a vector only in a single dimension, we eventually always remain within a unique PUP.

\begin{lemma}\label{lemma:unique-pup}
     Let $\bx^\prime \in \mathbb{R}^D$ and $\mathcal{Q} = \{Q_1, \ldots, Q_M\}$ be the finite set of polytopes generated by a network $f_{\btheta}$. Let $\balpha \in \mathbb{R}^D$ be a vector s.t. $\forall d^\prime \neq d, \alpha_{d^\prime} = 1$. There exist a value $\beta >0$ and $m\in {1, \dots, M}$ such that for all $\alpha_d >\beta$, the product $\bx^\prime\odot\balpha$ lies within $Q_m$.
\end{lemma}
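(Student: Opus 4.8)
The plan is to lean on two facts already in hand: that the family $\mathcal{Q}$ is \emph{finite} (by the results of \citet{croce2018randomized, hein2019relu}) and that each $Q_j$ is \emph{convex}, being an intersection of the half-spaces induced by the neuron activation patterns. The point $\bx^\prime\odot\balpha$ traces a half-line as $\alpha_d$ grows, so the whole argument reduces to the elementary observation that a ray cannot be covered by finitely many \emph{bounded} pieces.

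First I would rewrite the scaled point explicitly. Since $\alpha_{d^\prime}=1$ for all $d^\prime\neq d$, we have $\bx^\prime\odot\balpha = \bx^\prime + (\alpha_d-1)x^\prime_d\,\mathbf{e}_d$, where $\mathbf{e}_d$ is the $d$-th standard basis vector. If $x^\prime_d=0$ the point never moves and the claim is immediate, so assume $x^\prime_d\neq 0$. Then, as $\alpha_d$ ranges over $(0,\infty)$, the point sweeps out a ray $L$ emanating from $\bx^\prime$ in the direction $\mathbf{e}_d$ if $x^\prime_d>0$ and $-\mathbf{e}_d$ if $x^\prime_d<0$; I would parametrize $L$ by $t=\alpha_d-1\ge 0$ so that larger $t$ corresponds to larger $\alpha_d$.

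Next I would intersect $L$ with each polytope. Because $L$ is an affine half-line (a convex set) and each $Q_j$ is convex, the intersection $L\cap Q_j$ is convex, hence a sub-interval of $L$ (possibly empty, a point, a segment, or a sub-ray). Since the polytopes cover $\mathbb{R}^D$, we have $L=\bigcup_{j=1}^M (L\cap Q_j)$. As $L$ is unbounded but a finite union of bounded intervals is bounded, at least one index $m$ must give $L\cap Q_m$ unbounded. An unbounded convex subset of $L$ is a sub-ray $\{t\ge t_0\}$, so setting $\beta = t_0+1$ yields $\bx^\prime\odot\balpha\in Q_m$ for every $\alpha_d>\beta$; note $\beta\ge 1>0$, as required.

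The main obstacle is bookkeeping at polytope boundaries rather than anything deep. Taking the $Q_j$ to be closed, a point lying on a shared face belongs to several of them simultaneously, so the covering $L=\bigcup_j(L\cap Q_j)$ holds verbatim and I would simply select the $Q_m$ whose tail-intersection with $L$ is unbounded, breaking ties arbitrarily. The only case warranting a remark is when $L$ runs exactly along a face shared by two polytopes, making two of the intervals unbounded; this does no harm, since the statement requires only the \emph{existence} of one suitable $Q_m$. The sole ingredient to state cleanly is the convexity of each $Q_j$, which was recalled before Lemma \ref{lemma:strictly-monotonic}.
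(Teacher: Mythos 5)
Your proof is correct, but it takes a genuinely different route from the paper's. The paper argues by contradiction, mirroring Lemma 3.1 of \citet{hein2019relu}: if the scaled point kept changing polytopes as $\alpha_d$ grows, finiteness of $\mathcal{Q}$ would force some polytope $Q_m$ to be entered twice, and convexity of $Q_m$ would place the entire segment between the two crossing points inside $Q_m$, contradicting the assumed traversal. You instead argue directly: parametrize the scaled points as a ray $L$, note that each $L\cap Q_j$ is a convex subset of $L$ and hence an interval, that these finitely many intervals cover $L$ because the polytopes cover $\mathbb{R}^D$, and that a finite union of bounded intervals cannot cover an unbounded ray, so some $L\cap Q_m$ contains a tail. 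Both arguments rest on exactly the same two ingredients --- finiteness of $\mathcal{Q}$ and convexity of each $Q_j$ --- but your covering formulation buys some rigor that the paper deliberately omits (it presents only ``the intuition,'' deferring details to \citet{hein2019relu}): you avoid having to formalize what it means for a point to ``traverse'' polytopes, you treat the degenerate case $x^\prime_d = 0$ explicitly, where the point never moves and the paper's traversal picture silently assumes motion, and you dispatch the boundary bookkeeping when $L$ runs along a face shared by two closed polytopes. One negligible imprecision: an unbounded convex subset of $L$ may be a half-open tail $(t_0,\infty)$ rather than $\{t\ge t_0\}$, but since your choice $\beta = t_0+1$ forces the strict inequality $t>t_0$ in either case, nothing breaks.
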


\begin{proof}
The proof mirrors the proof of Lemma 3.1 in \citet{hein2019relu}, so we only provide the intuition. By contradiction, suppose that there is no unique polytope and thus the point $\bx^\prime\odot\balpha$ must traverse  different polytopes as we scale up $\alpha_d$. Since there are finitely many polytopes, eventually the same polytope $Q_m$ will have to be traversed twice. Since the polytopes are convex, all the points on the line connecting the locations of where the boundary of $Q_m$ was crossed the first and second time must lie within $Q_m$, but this contradicts the fact that the scaled point traverses different polytopes.
\end{proof}

From here onward, we adapt the following shorthand to simplify notation: Given a scaling vector  $\balpha \in \mathbb{R}^D$ s.t. $\forall d^\prime \neq d, \alpha_{d^\prime} = 1$, we use $\mathcal{P}(\bx^\prime, d)$ to denote the PUP that $\bx^\prime$ lands in when scaling it with $\alpha_d$ in the limit. This definition implies that we can only scale parallel to the basis vectors (for a discussion on how restrictive this is, see Section \ref{sec:experiments}).   

Finally, in the next lemma we establish that the output distribution converges to a fixed point using the $l_2$-norm of the gradient $\nabla_{\bx} p_{\btheta}(y=c|\bx)$. Generally, in regions of the feature space where the classifier is predicting the same probability distribution over classes, small perturbations in the input $\bx$ will not change the prediction. Therefore, the gradient in these regions w.r.t. the input will be small and potentially even correspond to the zero vector, with a norm of (or close to) zero.

\begin{proposition}[Convergence of predictions in the limit]\label{proposition:overconfidence-softmax}
    Suppose that $f_{\btheta}$ is a ReLU-network. Let $\bx^\prime\in\mathbb{R}^D$, suppose $\balpha$ is a scaling vector and that the associated PUP $\mathcal{P}(\bx^\prime, d)$ has a corresponding matrix $\mathbf{V}$ with no zero entries.
    Then it holds that
    
    \begin{equation*}
        \forall c \in \mathcal{C},\ \lim\limits_{\alpha_d \to \infty}\bbnorm\nabla_{\bx} p_{\btheta}(y=c|\bx)\bigg|_{\bx = \balpha\odot\bx^\prime}\bbnorm_2 = 0
    \end{equation*}
\end{proposition}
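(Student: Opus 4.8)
The plan is to use that, once restricted to the polytope $\mathcal{P}(\bx^\prime, d)$, the network collapses to the single affine map $f_{\btheta}(\bx) = \bV\bx + \ba$, so that along the scaling ray the $c$-th logit is an affine function of the one scalar $\alpha_d$:
\[
    f_{\btheta}(\balpha\odot\bx^\prime)_c = v_{cd}\,x^\prime_d\,\alpha_d + b_c, \qquad b_c := a_c + \sum_{e\neq d} v_{ce}\,x^\prime_e .
\]
Because $\bV$ has no zero entries, every coefficient $v_{cd}$ is nonzero, so each logit genuinely diverges as $\alpha_d\to\infty$ and none stays constant.

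Next I would differentiate the softmax explicitly. Abbreviating $p_c := p_{\btheta}(y=c|\bx)$ and combining the standard softmax Jacobian $\partial\bar\sigma(f)_c/\partial f_{c^\prime} = p_c(\delta_{cc^\prime}-p_{c^\prime})$ with $\partial f_{c^\prime}/\partial x_e = v_{c^\prime e}$, the chain rule gives, for each coordinate $e$,
\[
    \frac{\partial p_c}{\partial x_e} = p_c\Big(v_{ce} - \sum_{c^\prime=1}^C p_{c^\prime}\,v_{c^\prime e}\Big),
\]
and hence
\[
    \big\|\nabla_{\bx}\,p_c\big\|_2 = p_c\,\sqrt{\sum_{e=1}^D\Big(v_{ce} - \sum_{c^\prime} p_{c^\prime}\,v_{c^\prime e}\Big)^2}.
\]

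The core of the argument is the saturation of the predictive distribution. I would let $c^\star$ be the class maximizing $v_{cd}\,x^\prime_d$; the logit gap $f_{c^\star}-f_c = (v_{c^\star d}-v_{cd})\,x^\prime_d\,\alpha_d + (b_{c^\star}-b_c)$ then tends to $+\infty$ for every $c\neq c^\star$, so that $p_{c^\star}\to 1$ and $p_c\to 0$ for all other $c$. Substituting these limits into the norm expression and splitting into two cases finishes the proof: for $c\neq c^\star$ the prefactor $p_c\to 0$ while the square-root factor stays bounded (the entries of $\bV$ are fixed and the $p_{c^\prime}$ lie in $[0,1]$), so the product vanishes; for $c=c^\star$ the prefactor $p_{c^\star}\le 1$ is bounded while $\sum_{c^\prime}p_{c^\prime}v_{c^\prime e}\to v_{c^\star e}$ for every $e$, so each bracket $v_{c^\star e}-\sum_{c^\prime}p_{c^\prime}v_{c^\prime e}\to 0$ and the square-root factor vanishes. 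Either way $\|\nabla_{\bx}\,p_c\|_2\to 0$.

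The step requiring the most care is this saturation argument, specifically guaranteeing that a \emph{single} class dominates in the limit. If two classes tied for the largest value of $v_{cd}\,x^\prime_d$, both probabilities would converge to strictly positive limits and, in the directions $e\neq d$, the off-diagonal brackets would reduce to $p^\infty_{c^{\star\star}}\,(v_{c^\star e}-v_{c^{\star\star} e})$, which need not vanish. I would therefore argue that the no-zero-entries condition on $\bV$, together with the (generic) uniqueness of the maximizer, rules this out, so that exactly one logit outgrows all the others and the clean saturation $p_{c^\star}\to 1$ holds.
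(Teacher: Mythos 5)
Your route is genuinely different from the paper's, and where it overlaps it is tighter. The paper factorizes $\nabla_{\bx}p_{\btheta}(y=c|\bx)$ by the chain rule, proves separately that the softmax Jacobian entries vanish as the logits diverge (its Lemma 9), and then adds a growth-rate comparison (its Lemma 10) to argue that this exponential decay outpaces the logit gradients; you instead exploit the fact that on the PUP the logit Jacobian is the \emph{constant} matrix $\bV$, compute the gradient in closed form as $\partial p_c/\partial x_e = p_c\,(v_{ce}-\sum_{c^\prime}p_{c^\prime}v_{c^\prime e})$, and conclude by saturation alone. This makes the paper's rate-comparison lemma unnecessary and is the more transparent argument.

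However, your final paragraph contains a genuine gap, and you half-see it yourself: the tie case is not ruled out by the stated hypotheses. The no-zero-entries condition on $\bV$ says nothing about duplicate entries in the $d$-th column, and ``generic uniqueness of the maximizer'' is not an assumption of the proposition. Concretely, take $C=D=2$, $\bV=\left(\begin{smallmatrix}1&1\\1&2\end{smallmatrix}\right)$, $\bx^\prime=(1,1)$, $d=1$: the logit difference $f_{\btheta}(\bx)_2-f_{\btheta}(\bx)_1$ is constant along the ray, both probabilities converge to nondegenerate constants, and your own formula gives $\partial p_1/\partial x_2 \to -p_1^{\infty}p_2^{\infty}\neq 0$, so the gradient norm does \emph{not} vanish. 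The clean repair is to import the ``no duplicate entries in the $d$-th column of $\bV$'' assumption, which the paper states only for Proposition 2, into this proposition as well (alternatively, only the $d$-th gradient component vanishes unconditionally, since the tied classes share $v_{cd}$). It is worth noting that the paper's own proof has the same soft spot --- its Lemma 9 takes the limit in $f_{\btheta}(\bx)_c$ alone and tacitly assumes a single logit dominates the partition sum --- so your explicit identification of the obstruction is valuable; but your resolution of it does not follow from the hypotheses as stated.
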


The whole proof exceeds the available space and can be found in Appendix \ref{app:proposition1}, so we present the main intuitions here. Because of Lemma \ref{lemma:unique-pup}, we know the scaled point $\balpha\odot\bx^\prime$ will end in a unique PUP. The assumption on $f_{\btheta}$ then triggers Lemma \ref{lemma:strictly-monotonic}, from which we can infer that scaling the input in a single dimension leads all logits to $\pm\infty$. Because of the saturating property of the softmax,  this will in turn provoke the output of $p_{\btheta}$ to converge to a fixed point.
    
As an aside, we now recast Theorem 3.1 in \citet{hein2019relu} in our framework, showing that the model becomes increasingly certain in a single class, placing all its probability mass on it in the limit. The proof of this additional Proposition is in Appendix \ref{app:softmax-limit}.

\begin{proposition}\label{proposition:softmax-limit}
    Let $f_{\btheta}$ be ReLU network. Let $\bx^\prime \in \mathbb{R}^D$,  suppose $\balpha$ is a scaling vector  and that the associated PUP $\mathcal{P}(\bx^\prime, d)$ has a corresponding matrix $\mathbf{V}$ with no zero entries. Assume the $d$-th column of $\bV$ has no duplicate entries. 
    Then there exists a class $c$ such that  
    
    \begin{equation*}
         \lim_{\alpha_d \rightarrow \infty} \bar{\sigma}(f_{\btheta}(\balpha\odot\bx^\prime))_c = 1
    \end{equation*}
\end{proposition}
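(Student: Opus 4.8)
The plan is to exploit the piecewise-affine representation inside the terminal PUP and then simply read off which logit dominates as $\alpha_d \to \infty$. First I would invoke Lemma \ref{lemma:unique-pup}: there is a threshold $\beta$ beyond which $\balpha\odot\bx^\prime$ remains in the single polytope $\mathcal{P}(\bx^\prime, d)$, on which $f_{\btheta}$ coincides with the affine map $\bx \mapsto \bV\bx + \ba$. Evaluating this at the scaled point and using $\alpha_{d^\prime}=1$ for $d^\prime \neq d$, the $c$-th logit becomes $f_{\btheta}(\balpha\odot\bx^\prime)_c = v_{cd}\,x^\prime_d\,\alpha_d + \kappa_c$, where $\kappa_c = a_c + \sum_{d^\prime \neq d} v_{cd^\prime}x^\prime_{d^\prime}$ collects the terms independent of $\alpha_d$. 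Thus each logit is an affine function of $\alpha_d$ with slope $v_{cd}x^\prime_d$, and the whole problem reduces to comparing these slopes.

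Next I would single out the dominant class. Assuming $x^\prime_d \neq 0$, the hypothesis that the $d$-th column of $\bV$ has no duplicate entries makes the values $\{v_{cd}x^\prime_d\}_{c\in\mathcal{C}}$ pairwise distinct, so there is a unique class $c^\ast$ maximizing $v_{cd}x^\prime_d$ over $c$ (this is the class with largest $v_{cd}$ when $x^\prime_d>0$ and smallest $v_{cd}$ when $x^\prime_d<0$). Consequently, for every $c \neq c^\ast$ the slope gap $(v_{cd}-v_{c^\ast d})x^\prime_d$ is \emph{strictly negative}.

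Finally I would push the softmax to its limit using only relative rates, so that no logit needs to diverge to $+\infty$. Writing $\bar{\sigma}(f_{\btheta}(\balpha\odot\bx^\prime))_{c^\ast} = \big(1 + \sum_{c\neq c^\ast}\exp(f_{\btheta}(\balpha\odot\bx^\prime)_c - f_{\btheta}(\balpha\odot\bx^\prime)_{c^\ast})\big)^{-1}$ and substituting the affine expressions, each exponent equals $(v_{cd}-v_{c^\ast d})x^\prime_d\,\alpha_d + (\kappa_c - \kappa_{c^\ast})$, whose leading term tends to $-\infty$ by the previous step. Hence every summand vanishes as $\alpha_d \to \infty$ and the expression converges to $1$, establishing the claim with $c = c^\ast$.

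The algebra here is routine, so the genuinely load-bearing points are the two hypotheses rather than any difficult estimate. The no-duplicate-entries assumption is exactly what rules out ties in the slopes: if two classes shared the maximal slope, their logits would grow at the same rate and the softmax mass would split between them according to the constants $\kappa_c$, so no single class would capture all the probability and the limit would be strictly below $1$. The main thing to be careful about, then, is to state the implicit requirement $x^\prime_d \neq 0$ explicitly, since a vanishing $d$-th coordinate collapses every slope to zero and destroys the unique maximizer on which the argument depends.
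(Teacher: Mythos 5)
Your proof is correct and takes essentially the same route as the paper's: confine the scaled point to the terminal PUP via Lemma \ref{lemma:unique-pup}, view each logit as an affine function of $\alpha_d$, use the no-duplicate-entries assumption on the $d$-th column of $\bV$ to isolate a unique dominant class, and let the saturating softmax drive that class's probability to $1$. Your write-up is in fact slightly more careful than the paper's, since you track the sign of $x^\prime_d$ when identifying the maximizer of $v_{cd}\,x^\prime_d$ (the paper simply picks the largest $v_{cd}$ and treats the $\pm\infty$ cases somewhat loosely) and you explicitly flag the implicit requirement $x^\prime_d \neq 0$, without which scaling leaves the point fixed and the conclusion can fail.
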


In conclusion, we have shown in this section that the output probabilities of ReLU networks are less and less sensitive to small perturbations of the input in the limit and, under the assumptions of Proposition \ref{proposition:softmax-limit}, will converge to favor a single class, in which they appear to be very confident in. In the next section we prove that all other uncertainty metrics also converge to fixed values in the limit. 
        
\section{Convergence of Uncertainty Estimation Metrics}\label{sec:overconfidence-metrics}

In Proposition \ref{proposition:overconfidence-softmax}, we have established how the prediction of a model converges to a fixed point when feature values become extreme. We now want to show a similar property about the uncertainty estimation techniques introduced in Section \ref{sec:uncertainty-metrics}. To this end, we have to establish how the predictions coming from multiple model instances interact, a point we analyze in Lemma \ref{aggregation-theorem}. Then, we demonstrate how the uncertainty metrics also converge to a fixed value in the limit by proving the case for each of them in turn, before bundling our results in Theorem \ref{main-theorem}. We start with the easiest metric, which also applies to a single model.

\begin{lemma}{(Max. softmax probability)}\label{lemma:max-prob}
     Suppose that $f_{\btheta}$ is a ReLU-network. Let $\bx^\prime\in\mathbb{R}^D$,  suppose $\balpha$ is a scaling vector and that the associated PUP $\mathcal{P}(\bx^\prime, d)$ has a corresponding matrix $\mathbf{V}$ with no zero entries. It holds that
     
    \begin{equation*}
        \lim\limits_{\alpha_d \to \infty}\bbnorm\nabla_{\bx} \max_{c \in \mathcal{C}}\Big(p_{\btheta}(y=c|\bx)\Big)\Big|_{\bx = \balpha\odot\bx^\prime}\bbnorm_2 = 0
    \end{equation*}
\end{lemma}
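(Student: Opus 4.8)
The plan is to reduce the statement to Proposition~\ref{proposition:overconfidence-softmax}, which already controls the gradient of each individual class probability, so the only new work is transferring that control through the $\max$ operation. First I would invoke Lemma~\ref{lemma:unique-pup} to fix a threshold $\beta > 0$ such that for all $\alpha_d > \beta$ the scaled point $\balpha\odot\bx^\prime$ remains inside the single PUP $\mathcal{P}(\bx^\prime, d)$. On this region $f_{\btheta}$ is a fixed affine map $\bV\bx + \ba$, so each component $p_{\btheta}(y=c|\bx) = \bar{\sigma}(f_{\btheta}(\bx))_c$ is smooth (a composition of the smooth softmax with an affine map), and in particular $\nabla_{\bx} p_{\btheta}(y=c|\bx)$ is well defined for every $c \in \mathcal{C}$.

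The key observation is that $\max_{c\in\mathcal{C}} p_{\btheta}(y=c|\bx)$ is a pointwise maximum of finitely many smooth functions. At any input where the maximum is attained by a single class $c^\star$, the maximum coincides locally with $p_{\btheta}(y=c^\star|\bx)$, so its gradient is exactly $\nabla_{\bx} p_{\btheta}(y=c^\star|\bx)$; consequently its norm is bounded above by $\max_{c\in\mathcal{C}} ||\nabla_{\bx} p_{\btheta}(y=c|\bx)||_2$. I would then apply Proposition~\ref{proposition:overconfidence-softmax} separately to each of the finitely many classes: since every term in this finite maximum tends to $0$ as $\alpha_d\to\infty$, so does the maximum, which yields the claim.

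The one step requiring care is the set of inputs where two or more classes tie for the maximum, since there the pointwise maximum need not be differentiable. I would handle this by passing to the Clarke generalized gradient (equivalently, one-sided directional derivatives): at a tie the generalized gradient is contained in the convex hull of the gradients $\{\nabla_{\bx} p_{\btheta}(y=c|\bx)\}$ over the active classes, and any element of this convex hull has norm no larger than $\max_{c\in\mathcal{C}} ||\nabla_{\bx} p_{\btheta}(y=c|\bx)||_2$. Hence the same bound, and therefore the same limit of $0$, applies regardless of ties. Note that this argument deliberately avoids Proposition~\ref{proposition:softmax-limit}, since the present lemma does not assume the $d$-th column of $\bV$ to be free of duplicates; the weaker hypothesis that $\bV$ has no zero entries is exactly what Proposition~\ref{proposition:overconfidence-softmax} needs, and that is all the reduction relies on.

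I expect the convex-hull bookkeeping for the tie case to be the only genuine obstacle, and even that is minor; everything else follows directly from the finiteness of $\mathcal{C}$ together with the per-class convergence already established.
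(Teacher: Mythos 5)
Your proposal is correct and follows essentially the same route as the paper's proof, which is a one-line reduction observing that the gradient of the maximum is a specific $\nabla_{\bx}p_{\btheta}(y=c|\bx)$ and then invoking Proposition~\ref{proposition:overconfidence-softmax}. Your additional treatment of tie points via the Clarke generalized gradient (bounding it by the convex hull of the active classes' gradients) is a refinement the paper silently skips, and your remark that Proposition~\ref{proposition:softmax-limit} is deliberately not needed --- since this lemma does not assume the $d$-th column of $\bV$ is duplicate-free --- is likewise accurate.
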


\begin{proof}
    The gradient of the $\max$ function will be a specific $\nabla_{\bx}p_{\btheta}(y=c|\bx)$, which reduces this to the case already proven in Proposition \ref{proposition:overconfidence-softmax}.
\end{proof}

Note that for this metric, the combination with Proposition \ref{proposition:softmax-limit} shows that the model be fully confident in a single class in the limit. For our following lemmas, we want to consider uncertainty scores that are based on multiple instances, e.g. different ensemble members or forward passes using re-sampled dropout masks. What all of these approaches have in common is that for every $k$, the network parameters $\btheta^{(k)}$ will differ, and thus also the polytopal tesselation of the feature space. Hence, we have to adjust our assumptions accordingly. For every instance $k$, let us denote the affine function on a polytope $Q^{(k)}$ as $f_{\btheta}^{(k)}(\bx)=\bV^{(k)}\bx + \ba^{(k)}$. In order for our previous strategy to hold, we now assume $\forall k \in 1,\ldots,K$  that $\mathcal{P}^{(k)}(\bx^\prime, d)$ has a matrix $\bV^{(k)}$ which does not have any zero entries. Note that even though this assumption has to hold for all $k$, this does not mean that the matrices have to be identical.

\begin{lemma}[Convergence of aggregated predictions in the limit]\label{aggregation-theorem}
    Suppose that $f_{\btheta}^{(1)}, \ldots, f_{\btheta}^{(K)}$ are ReLU networks. Let $\bx^\prime\in\mathbb{R}^D$, suppose $\balpha$ is a scaling vector  and that the for all $k$, the associated PUP $\mathcal{P}^{(k)}(\bx^\prime, d)$ has a corresponding matrix $\mathbf{V}^{(k)}$ with no zero entries. It holds that
    
    \begin{equation*}
        \lim\limits_{\alpha_d \to \infty}\bbnorm\nabla_{\bx}\ \Expect[\bigg]{{p(\btheta|\mathcal{D})}}{p_{\btheta}(y=c|\bx)}\bigg|_{\bx = \balpha\odot\bx^\prime}\bbnorm_2 = 0
    \end{equation*}

\end{lemma}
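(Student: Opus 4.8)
The plan is to reduce the aggregated statement to the single-network result of Proposition~\ref{proposition:overconfidence-softmax}, exploiting that the Bayesian model average is here a \emph{finite} aggregation over the $K$ instances. Following the Monte Carlo and ensembling interpretation of Equation~\ref{eq:mc-approx}, I would read the posterior expectation as the finite average $\frac{1}{K}\sum_{k=1}^K p^{(k)}_{\btheta}(y=c|\bx)$, so that the quantity whose gradient we examine is a finite convex combination of individual instance predictions, each of which is a genuine ReLU-network output to which the earlier machinery applies.

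First I would use linearity of the gradient operator to move $\nabla_{\bx}$ inside the finite sum, giving $\nabla_{\bx}\frac{1}{K}\sum_{k=1}^K p^{(k)}_{\btheta}(y=c|\bx) = \frac{1}{K}\sum_{k=1}^K \nabla_{\bx} p^{(k)}_{\btheta}(y=c|\bx)$. Then I would apply the triangle inequality together with absolute homogeneity of the $l_2$-norm to obtain the bound $\big\|\frac{1}{K}\sum_{k=1}^K \nabla_{\bx} p^{(k)}_{\btheta}\big\|_2 \le \frac{1}{K}\sum_{k=1}^K \big\|\nabla_{\bx} p^{(k)}_{\btheta}\big\|_2$, with everything understood to be evaluated at $\bx = \balpha\odot\bx^\prime$.

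Next, for each fixed $k$ I would invoke Proposition~\ref{proposition:overconfidence-softmax}. By hypothesis, every instance's PUP $\mathcal{P}^{(k)}(\bx^\prime, d)$ has a matrix $\bV^{(k)}$ with no zero entries, so the proposition applies verbatim to $f^{(k)}_{\btheta}$ and yields $\lim_{\alpha_d\to\infty}\big\|\nabla_{\bx} p^{(k)}_{\btheta}(y=c|\bx)\big|_{\bx=\balpha\odot\bx^\prime}\big\|_2 = 0$ for each $k$. Because the sum is finite, the limit of the upper bound is the average of $K$ zeros, i.e. zero; and since the left-hand norm is nonnegative and dominated by this vanishing bound, a squeeze argument forces it to zero as well, establishing the claim.

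The main obstacle I anticipate is conceptual rather than analytic: I must make explicit that the posterior expectation is to be read as the finite instance-aggregation, so that linearity of the gradient over a finite sum is legitimate and no dominated-convergence-style care for interchanging limit and integral is needed. I also want to emphasize that although each instance induces its own polytopal tessellation---so the PUPs $\mathcal{P}^{(k)}(\bx^\prime, d)$ and the thresholds beyond which each scaled point settles into its unique PUP (via Lemma~\ref{lemma:unique-pup}) differ across $k$---the \emph{uniform} assumption that every $\bV^{(k)}$ has no zero entries is exactly what lets Proposition~\ref{proposition:overconfidence-softmax} be applied independently to each instance, which is the only place the per-instance hypothesis is used.
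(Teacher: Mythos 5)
Your proof is correct and takes essentially the same route as the paper's: the paper likewise moves $\nabla_{\bx}$ inside the posterior expectation by linearity, bounds the norm of the expectation by the expectation of the norm (via Jensen's inequality, which on your explicit finite $K$-instance average is exactly your triangle-inequality step), applies Proposition~\ref{proposition:overconfidence-softmax} to each instance, and concludes by a squeeze argument. Your insistence on reading the expectation as a finite aggregation is a minor but welcome refinement, since it renders the limit--expectation interchange in the paper's final step trivially legitimate without any dominated-convergence consideration.
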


The full proof of this lemma can be found in appendix section \ref{app:aggregation-theorem}. The analogous lemmas for the remaining uncertainty metrics are stated and proved in appendix sections \ref{app:asymptotic-softmax-variance}, \ref{app:asymptotic-predictive-entropy} and  \ref{app:asymptotic-mutual-information}. The proof strategy for all further metrics is to simplify and reduce the uncertainty metrics such that Lemma \ref{aggregation-theorem} or Proposition \ref{proposition:overconfidence-softmax} can be applied. All of these result combined now pave the way for our central theorem.

\begin{tcolorbox}
\begin{theorem}[Convergence of uncertainty level in the limit]\label{main-theorem}
    Suppose that $f_{\btheta}^{(1)}, \ldots, f_{\btheta}^{(K)}$ are ReLU networks. Let $\bx^\prime\in\mathbb{R}^D$, suppose $\balpha$ is a scaling vector and that the for all $k$, the associated PUP $\mathcal{P}^{(k)}(\bx^\prime, d)$ has a corresponding matrix $\mathbf{V}^{(k)}$ with no zero entries. 
    Then, whenever uncertainty is measured via either of the following metrics
    \begin{enumerate}
        \item Maximum softmax probability
        \item Class variance
        \item Predictive entropy
        \item Approximate mutual information
    \end{enumerate}
    the network(s) will converge to fixed uncertainty scores for $\bx^\prime \odot \balpha$ in the limit of $\alpha_d \rightarrow \infty$
    \end{theorem}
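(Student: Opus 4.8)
The plan is to recognize this theorem as a bundling of the four metric-specific convergence results and to supply the single mechanism that drives all of them. First I would fix $\alpha_d$ large enough that, simultaneously for every instance $k$, the scaled point $\balpha\odot\bx^\prime$ has entered its unique PUP $\mathcal{P}^{(k)}(\bx^\prime, d)$; since there are finitely many instances and each admits its own threshold $\beta^{(k)}$ from Lemma \ref{lemma:unique-pup}, taking $\alpha_d > \max_k \beta^{(k)}$ suffices. On this common tail, each $f_{\btheta}^{(k)}$ restricts to the affine map $\bV^{(k)}\bx + \ba^{(k)}$, so the $c$-th logit is affine in $\alpha_d$ with slope $v^{(k)}_{cd}x^\prime_d$. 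Because $\bV^{(k)}$ has no zero entries (invoking Lemma \ref{lemma:strictly-monotonic}), every slope is nonzero, so each logit diverges to $\pm\infty$ and the softmax saturates. The key intermediate claim I would establish is therefore \emph{per-instance value convergence}: for every $k$ and $c$, $p_{\btheta}^{(k)}(y=c\mid\balpha\odot\bx^\prime)$ converges to a fixed limit $\pi^{(k)}_c$ as $\alpha_d\to\infty$, with support on the classes of maximal logit slope. This is the genuine content underlying Proposition \ref{proposition:overconfidence-softmax}.

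With per-instance convergence in hand, I would finish each metric by continuity. Every metric in the list is a continuous function of the finite tuple $(\pi^{(1)},\dots,\pi^{(K)})$ of limit distributions and of their aggregate mean $\bar p_c = \frac{1}{K}\sum_k p^{(k)}_{\btheta}(y=c\mid\bx)$, so convergence of the inputs transfers to convergence of the value. Concretely, the maximum softmax probability is handled by Lemma \ref{lemma:max-prob} together with continuity of $\max_c$ on the simplex; the aggregate mean converges by Lemma \ref{aggregation-theorem}, which immediately controls the predictive entropy $\mathbb{H}[\bar p]$ through continuity of entropy; the class variance is a polynomial in the first and second per-instance moments $\frac{1}{K}\sum_k (p^{(k)}_c)^2$ and $\bar p_c$, both of which converge, as established in appendix \ref{app:asymptotic-softmax-variance}; and the approximate mutual information is the difference $\mathbb{H}[\bar p]-\frac{1}{K}\sum_k\mathbb{H}[p^{(k)}]$ of two convergent entropy terms, as in appendices \ref{app:asymptotic-predictive-entropy} and \ref{app:asymptotic-mutual-information}. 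In each case the limit value is explicitly $\pi$-dependent, confirming a fixed score.

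The main obstacle I anticipate is the gap between what the cited lemmas literally assert and what the theorem claims: Lemma \ref{lemma:max-prob} and Lemma \ref{aggregation-theorem} state that a \emph{gradient norm} tends to zero, whereas the theorem asserts that the \emph{scalar score} converges, and a vanishing gradient alone does not imply a limit (a bounded function can have derivative tending to zero yet oscillate). I would close this gap not by arguing from the gradient lemmas but by deriving value convergence directly from the explicit softmax-of-affine-logits form, whose numerator and denominator are exponentials of linear functions of $\alpha_d$; such a ratio provably converges, and its derivative decays exponentially, so the gradient statements become corollaries rather than hypotheses. A secondary care point is simultaneity across the $K$ instances, already dispatched above by the common threshold, together with the implicit nondegeneracy $x^\prime_d\neq 0$ needed for the slopes to be nonzero; if $x^\prime_d=0$ the scaling is vacuous and the distribution is trivially constant, so the statement holds a fortiori.
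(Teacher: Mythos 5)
Your proposal is correct, but it takes a genuinely different --- and in fact more rigorous --- route than the paper. The paper's proof of Theorem \ref{main-theorem} is a one-line dispatch to the four metric-specific lemmas (Lemma \ref{lemma:max-prob} and the appendix Lemmas \ref{lemma:asymptotic-softmax-variance}, \ref{lemma:asymptotic-predictive-entropy}, \ref{lemma:asymptotic-mutual-information}), each of which is proved by decomposing the metric's input-gradient via chain rule, triangle inequality, and Jensen's inequality until Proposition \ref{proposition:overconfidence-softmax} (gradient norm of each softmax output tending to zero) can be invoked. You instead establish per-instance \emph{value} convergence of the softmax outputs directly from the softmax-of-affine-logits form on the common tail $\alpha_d > \max_k \beta^{(k)}$, and then transfer convergence to each metric by continuity on the simplex. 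The gap you flag is real: the paper's lemmas literally assert that gradient norms vanish along the path, and a vanishing derivative does not by itself entail that the scalar score has a limit (a bounded function like $\sin(\log \alpha)$ has derivative tending to zero yet oscillates), so the paper's stated conclusion ``converge to fixed uncertainty scores'' is strictly stronger than what its cited lemmas prove; your direct argument via the ratio of exponentials of functions affine in $\alpha_d$ closes this, handles ties among the logit slopes (the limit distribution concentrates on the maximal-slope classes, weighted by intercepts), and correctly treats the degenerate case $x^\prime_d = 0$, which the paper never discusses. What the paper's route buys in exchange is the quantitative gradient-magnitude statement itself, which is what its Figures actually visualize (insensitivity of the score to small input perturbations); what your route buys is the theorem as literally stated, with the gradient results recovered as corollaries of exponential saturation rather than used as hypotheses.
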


    \begin{proof}
    
    The four parts of the theorem are proven separately by lemmas \ref{lemma:max-prob}, \ref{lemma:asymptotic-softmax-variance} (\ref{app:asymptotic-softmax-variance}), \ref{lemma:asymptotic-predictive-entropy} (\ref{app:asymptotic-predictive-entropy}) and \ref{lemma:asymptotic-mutual-information} (\ref{app:asymptotic-mutual-information}).
    \end{proof}
\end{tcolorbox}

What follows from this result is that methods based on multiple instances of ReLU classifiers will suffer from the aforementioned problem as long as uncertainty is estimated with one of the techniques listed above. Next we demonstrate how these assumptions and results apply on synthetic data.

\section{Synthetic Data Experiments}\label{sec:experiments}

\begin{figure*}[ht!]
    \begin{subfigure}[t]{0.24\textwidth}
        \includegraphics[width=0.9\textwidth]{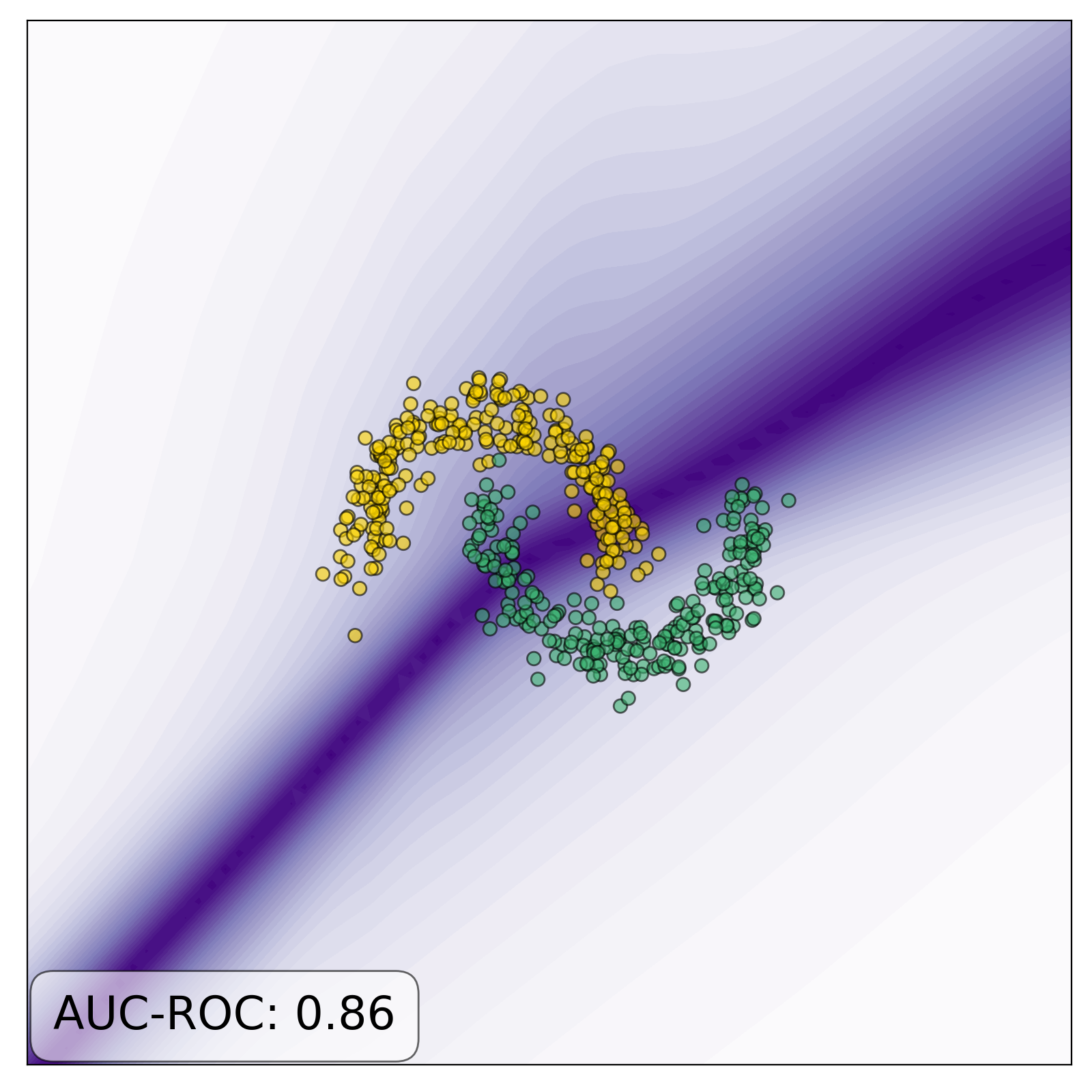}
        \includegraphics[width=\textwidth]{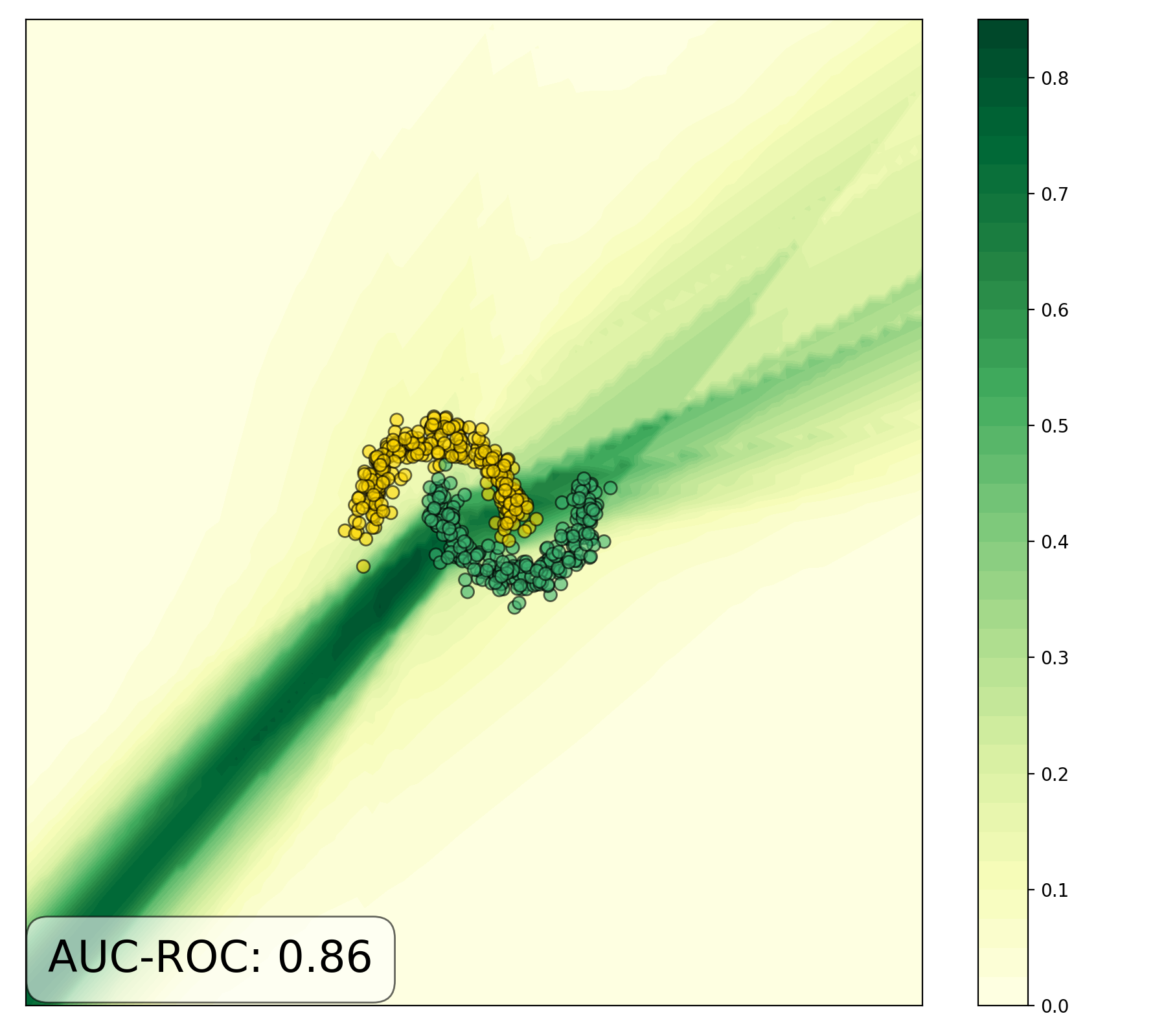}
        \caption{Neural discriminator with maximum probability \citep{hendrycks2017baseline}.}
        \label{subfig:nn-maxprob}
    \end{subfigure}
    \hfill
    \begin{subfigure}[t]{0.24\textwidth}
        \includegraphics[width=0.9\textwidth]{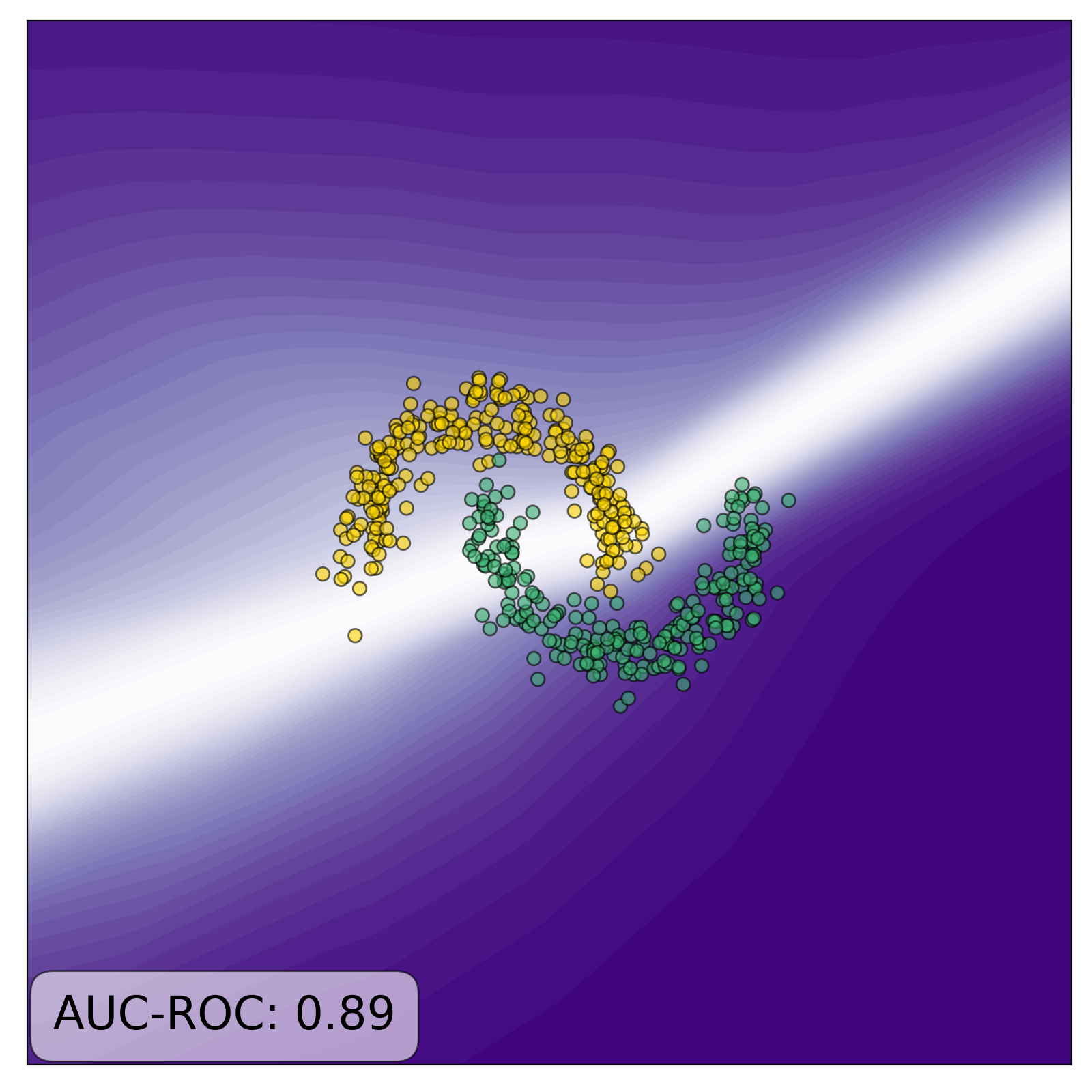}
        \includegraphics[width=\textwidth]{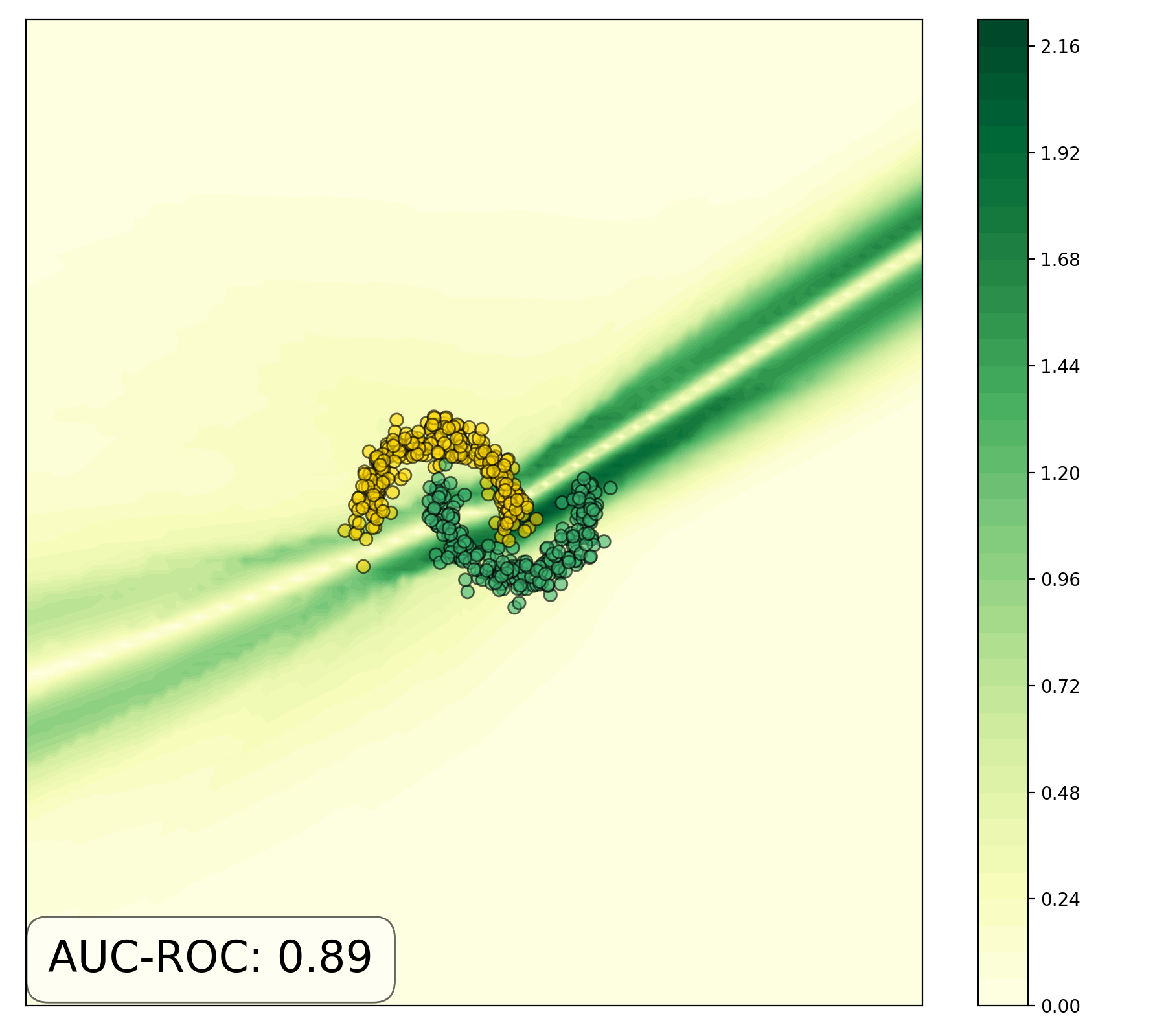}
        \caption{MC Dropout \citep{gal2016dropout} with mutual info. \citep{gal2018understanding}.}
        \label{subfig:mcdropout-mi}
    \end{subfigure}
    \hfill
    \begin{subfigure}[t]{0.24\textwidth}
        \includegraphics[width=0.9\textwidth]{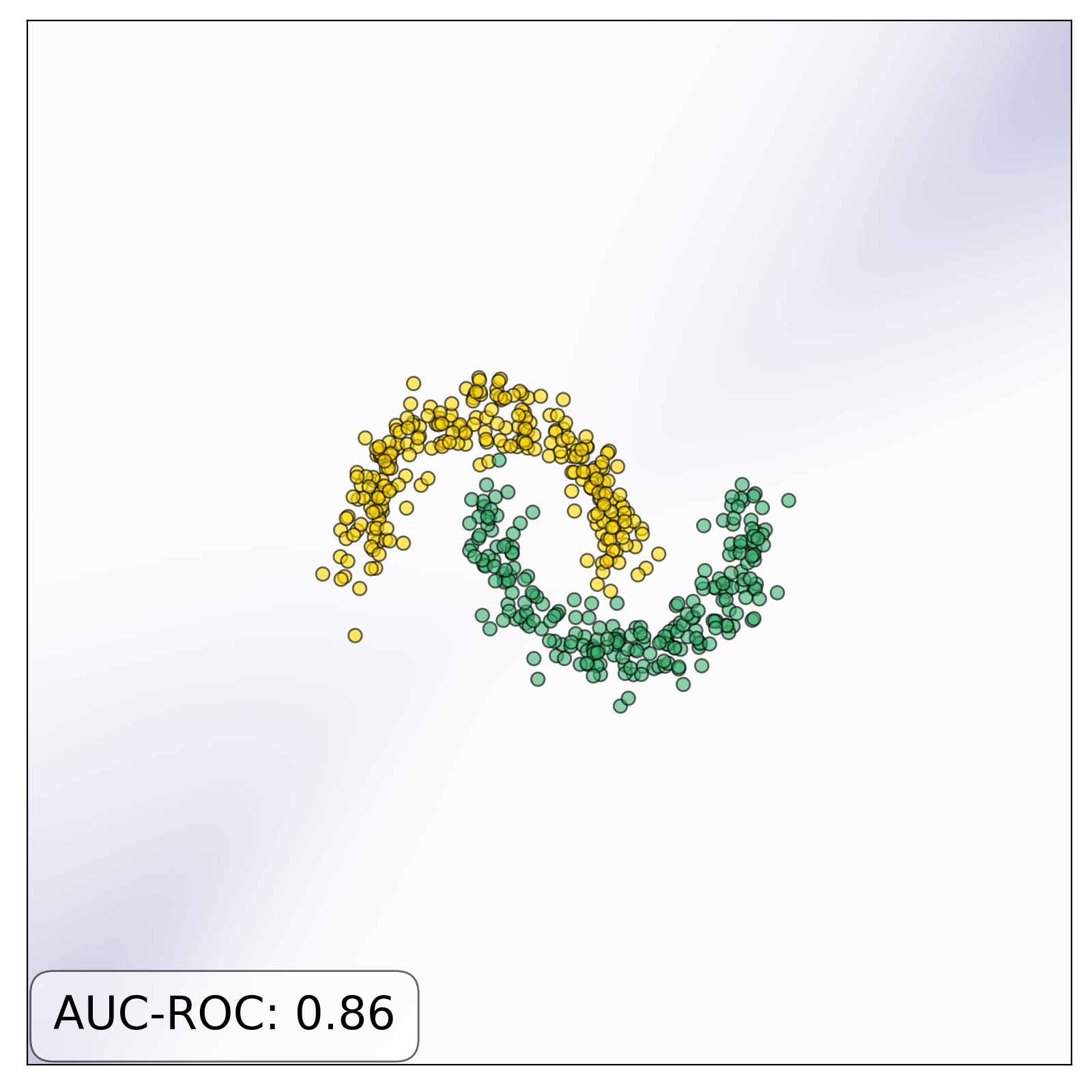}
        \includegraphics[width=\textwidth]{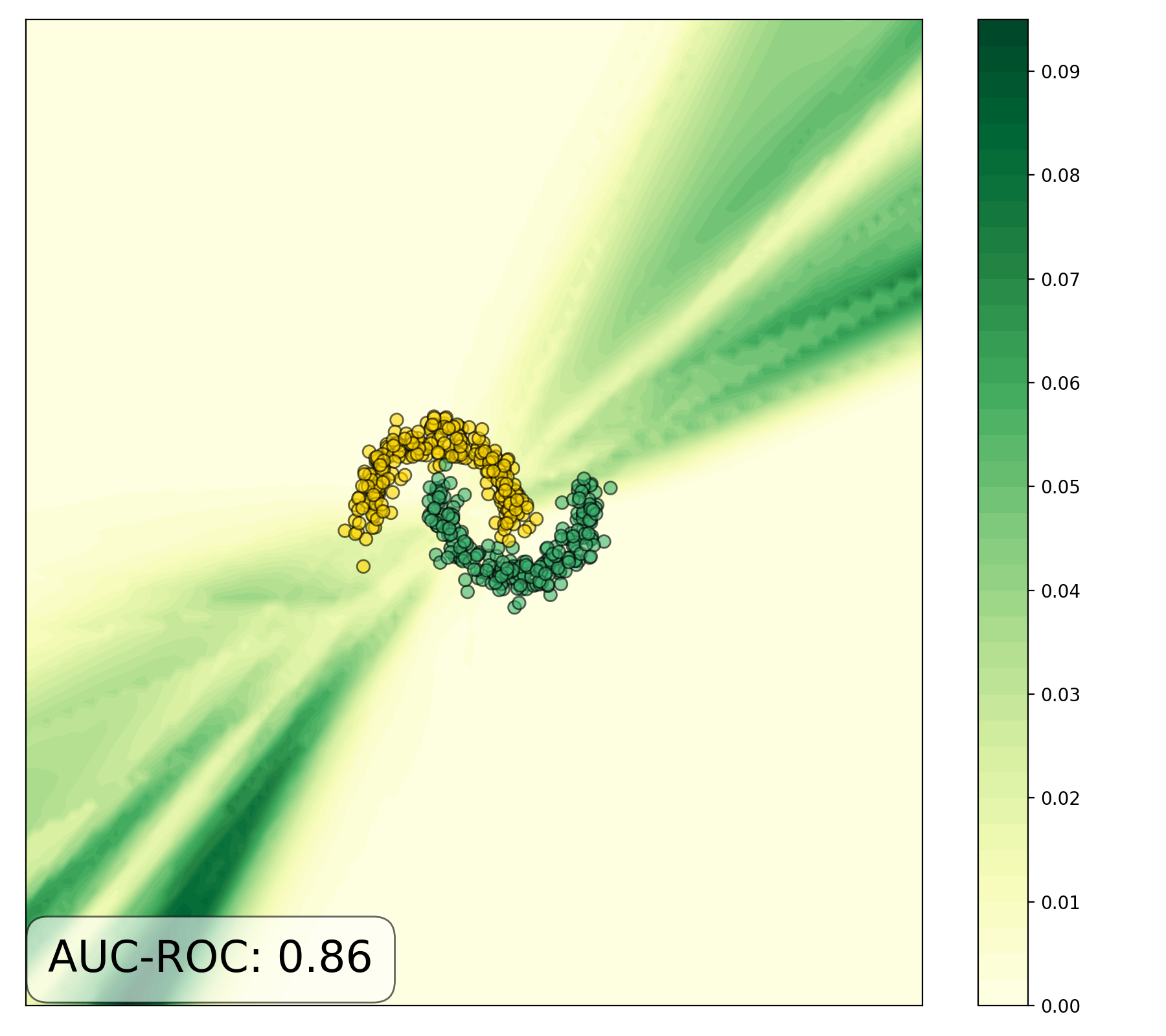}
        \caption{Neural ensemble \citep{lakshminarayanan2017simple} with class variance.}
        \label{subfig:ensemble-var}
    \end{subfigure}
    \hfill
    \begin{subfigure}[t]{0.24\textwidth}
        \includegraphics[width=0.9\textwidth]{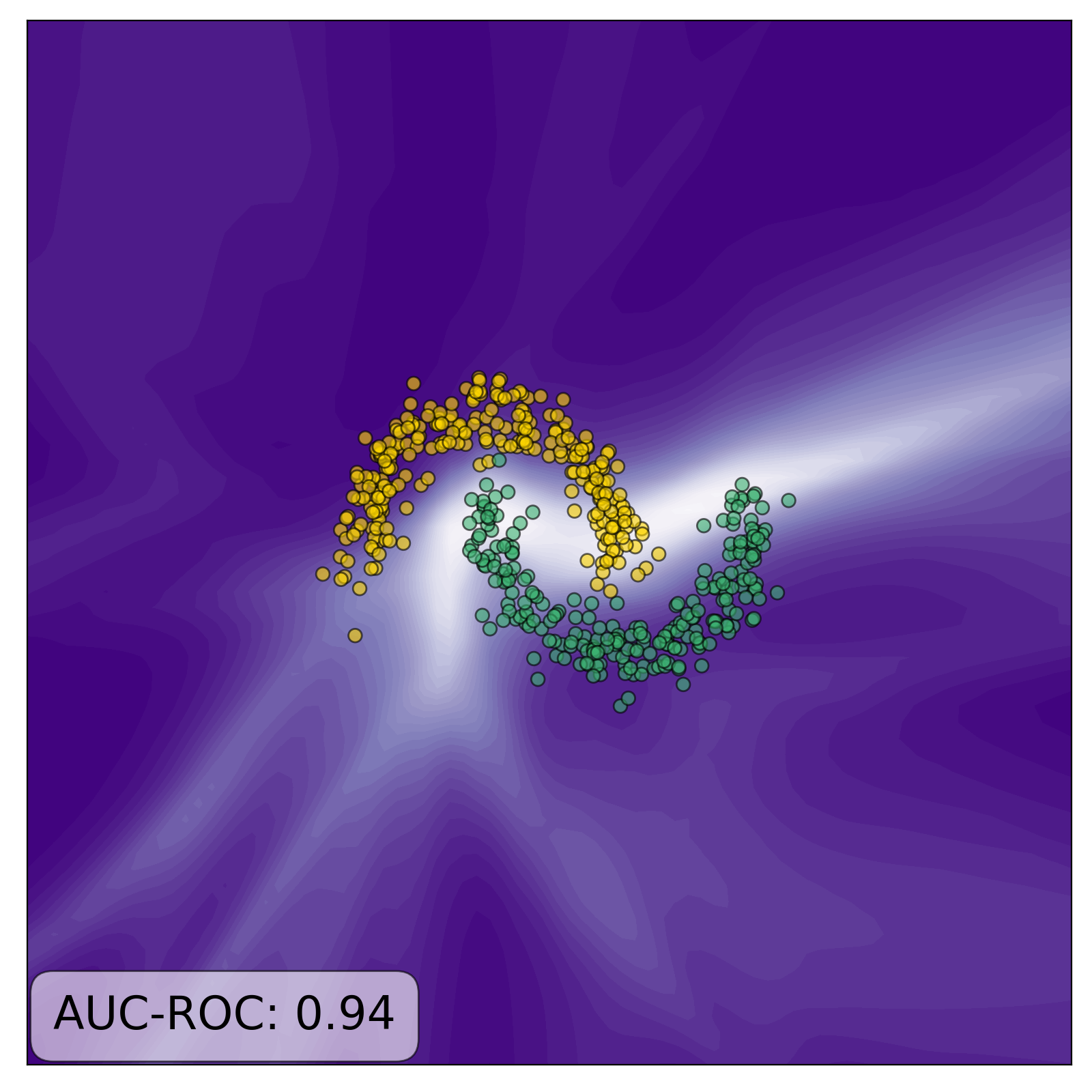}
        \includegraphics[width=\textwidth]{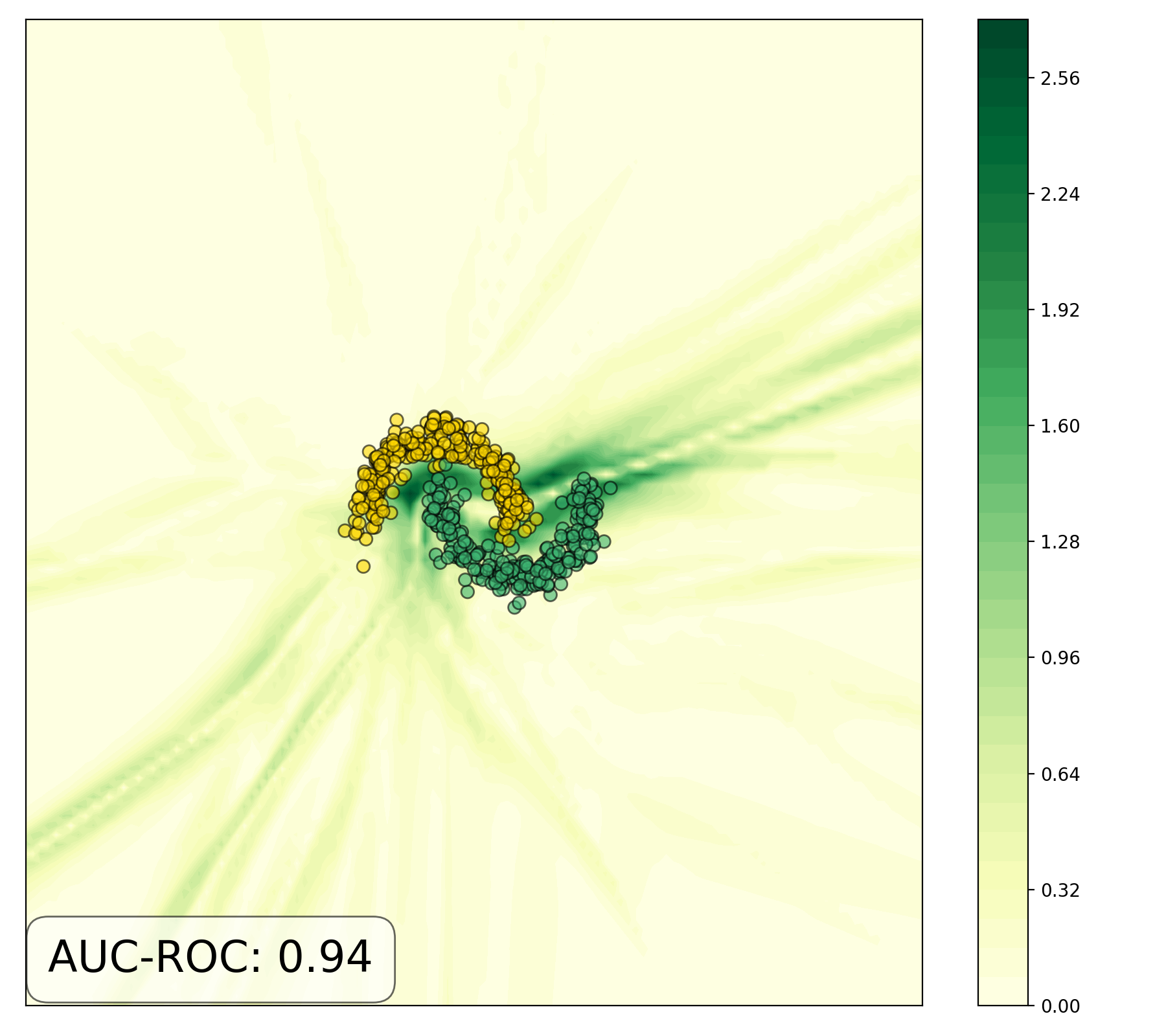}
        \caption{Anchored ensemble \citep{pearce2020uncertainty} with mutual information \citep{gal2018understanding}.}
        \label{subfig:anchoredensemble-mi}
    \end{subfigure}
    \caption{Uncertainty on the half-moon dataset, including the binary classification AUC-ROC. (Top row) The uncertainty surface is represented with increasingly darker shades of purple, with white being the lowest uncertainty. Open-ended regions of static certainty appear across different models and metrics, bein extrapolated to unseen data (see \ref{subfig:nn-maxprob}-\ref{subfig:ensemble-var}); this phenomenon is less apparent in some instances (\ref{subfig:anchoredensemble-mi}). (Bottom row) Increasing shades of green indicate the magnitude of the gradient of the uncertainty score w.r.t. the input. All metrics show open ended regions where the magnitude approaches zero.}
    \label{fig:entropy-plots}
\end{figure*}

To illustrate our findings, we plot the uncertainty surfaces and the gradient magnitudes of different models and uncertainty metric pairings on the half moons dataset, which we generate using the corresponding function in the \texttt{scikit-learn} package \citep{pedregosa2011scikit}. Detailed information about the procedure can be found in Appendix \ref{app:synthetic-data-experiments} along with additional plots.\footnote{The code used for the experiments is publicly available under \url{https://github.com/Kaleidophon/know-your-limits}.}

For a single network, we can observe in Figure \ref{subfig:nn-maxprob}) that there exist vast open-ended regions of stable confidence, confirming the findings of Theorem \ref{main-theorem}. However, in the bottom part of Figure \ref{subfig:nn-maxprob}) we can observe green regions with high gradient magnitude which do not seem to comply with our findings. In this case, we can see that these regions follow the decision boundaries. Due to the exponential function in the softmax, it is intuitive that small perturbation in these areas would have a large impact on the uncertainty score, resulting in a high gradient magnitude. But why does the magnitude not decrease in the limit as predicted by Theorem \ref{main-theorem}? We formulated our scaling vector $\balpha$ in way that only allows scaling along one of the coordinate axes. Therefore, if the decision boundaries are not parallel to the axes, by scaling we eventually escape the green areas and arrive at an area with gradient of magnitude zero. 
If the green regions were parallel to the axes then this would result in a violation of our main assumption. Traversing the input space parallel to a decision boundary in direction $d$ will not influence the prediction within the polytope, meaning that there will be entries  $v_{cd}=0$.\footnote{A decision boundary in a polytope is not the only way in which this assumption can be broken, but it still appears to hold reasonably often. For instance, just around $6.3 \%$ of plotted points in Figure \ref{fig:intro-figure} possess a matrix $\bV$ with at least one zero entry - all located in the PUP in the top right corner.}

Turning to predictions aggregated from multiple network instances in Figures \ref{subfig:mcdropout-mi}-\ref{subfig:anchoredensemble-mi}, we again observe large regions of constant uncertainty. The high-confidence region in the plots using mutual information displays a different behaviour from the others. As this metric aims to isolate epistemic uncertainty, it makes sense that uncertainty would be lowest around the training data, i.e. where the model is best specified. The character of the green regions in the bottom part of Figures \ref{subfig:ensemble-var} and \ref{subfig:anchoredensemble-mi} can again be explained by decision boundaries: In these cases, we have multiple instances with parameters $\btheta^{(k)}$, all with their own polytopal structure. When they overlap, the regions of the feature space where the assumption of our theorem is violated can either extend (Figure \ref{subfig:ensemble-var}) or shrink (Figure \ref{subfig:anchoredensemble-mi}), based on the diversity among instances. The fact that the anchored ensemble in Figure \ref{subfig:anchoredensemble-mi} does not exhibit such uniform regions of uncertainty like the vanilla ensemble could be explained by the fact that its training procedure encourages diversification between members. In turn, the difference between MC Dropout and ensemble models can be elucidated using recent insights that variational methods tend to only explore a single mode of the weight posterior $p(\btheta|\mathcal{D})$, while ensemble members often spread across multiple modes \citep{wilson2020bayesian}.

Overall, we have seen that our theorem can explain why an overgeneralization of uncertainty scores beyond the training data results in failure in OOD detection. We also explored the cases in which our assumptions are violated, i.e. by multiple, diverse model instances. In such scenarios, identification of OOD samples could in theory succeed, but often fails to do so reliably, see e.g. \citet{ovadia2019can, ulmer2020trust}. These insights can also help explain many other empirical findings in this regard on a variety of real-world datasets, e.g. \citet{gal2018understanding,kompa2020empirical}.

\section{Discussion}\label{sec:discussion}

In the past sections, we have proven that a single model will produce very confident softmax probabilities and that even for models using multiple network instances like ensembling and MC Dropout, all the uncertainty metrics analyzed will tend to fixed scores on far away samples, generalizing along open-ended polytopes induced by their architecture.
The significance of this observation is that, albeit some modest success at OOD detection that might take place locally, the models we analyzed have an inherent overgeneralization bias: by extrapolating their level of uncertainty beyond the seen data, they hinder their ability to discern between in-distribution and OOD data.

For a single network, it can even be proven that the predictions attained on OOD samples is not just fixed but also unreasonably high in one class, as shown in our Proposition \ref{proposition:softmax-limit}, which adapts the main result in \citet{hein2019relu} to our framework. Our formulation shows that this result is also obtained when scaling a point along a single dimension.

Most of our results depend on an assumption on the matrix $\bV$ corresponding to the PUP containing the OOD sample. Like discussed in Section \ref{sec:experiments}, except for the case of decision boundaries that run parallel to a basis vector, this assumptions should rarely be broken for ReLU networks, as their are known to be resistant to the problem of vanishing gradients \citep{glorot2011deep}. We thus expect the behaviour described in our theoretical results to be very common, as confirmed by our experiments.

It remains to be explored how the stable level of certainty that we derive for all metrics relates to the degree of diversity of the underlying set of model instances, but given our experimental results it appears to be dependent on the level of ``disagreement'' among them, i.e. when polytopes of different instances overlap to an increasing degree. 
Since it is hard to flag instances of OOD reliably this way, the aforementioned methods run a concrete risk of missing OOD samples, with potentially unintended, negative side-effects. 
These investigations bring us closer to a theoretically-motivated explanation to observations such as reported in \citet{gal2018understanding,ovadia2019can,kompa2020empirical,ulmer2020trust} and to enable the discovery of more effective methods.

Future research might be divided in two categories: efforts to solve the problem of OOD detection, and attempts at sharpening our theoretical understanding of the issue. 
The following approaches fall into the first category. One way consists of complementing neural discriminators with density-based approaches such as in \citet{grathwohl2020classifier}. Other lines of research try to have neural discriminators parametrize Dirichlet instead of categorical distributions \citep{malinin2018predictive, joo2020being, charpentier2020posterior}
or making models distance-aware \citep{liu2020simple} or supplementing the network architecture with Bayesian capabilities \citep{kristiadi2020being}.
In the second category we mention 
that more work is needed to cover the case of discrete features. Reasoning in the limit is not available for categorical or ordinal variables and one would probably have to resort to techniques that address directly the difference between the training and the new distribution. Furthermore, it remains an open question whether the results presented here can be extended be extended to GELU activations \citep{hendrycks2016gaussian}, which represent a continuous approximation of the ReLU function with the same asymptotic behavior. The GELU has recently become a very popular alternative in deep neural networks \citep{devlin2019bert, lan2020albert, brown2020language}. Lastly, the similarity between lemmas used for Theorem \ref{main-theorem} suggest that similar results could be derived for a whole family of uncertainty metrics. These investigations are left to future work.

\section*{Author Contributions}

Both authors contributed equally in writing the paper and deriving theoretical results. Dennis Ulmer implemented the code to run experiments and create plots and illustrations in this work. 

\section*{Acknowledgements}

We would like to thank Mareike Hartmann, Adam Izdebski, Natalie Schluter, Emese Tham\'{o} and Christina Winkler for their tremendously helpful feedback on this work.


\bibliography{uai2021-template}

\appendix

\section{Additional Proofs}
This appendix section contains additional proofs and derivations that could not be included in the main paper due to spatial constraints. 

\subsection{Connection between Softmax and Sigmoid}\label{app-softmax-sigmoid-connection}

In this section we briefly outline the connection between the softmax and the sigmoid function, which was originally shown in \citet{bridle1990probabilistic}. Let the sigmoid function be defined as 
\begin{equation*}
    \sigma(x) = \frac{\exp(x)}{1 + \exp(x)}
\end{equation*}

and softmax according to the definition Section \ref{sec:uncertainty-metrics}. The output of $f_{\btheta}$ in a multi-class classification problem with $C$ classes corresponds to a $C$-dimensional column vector that is based on an affine transformation of the network's last intermediate hidden representation $\bx_L$, such that $f_{\btheta}(\bx) = \bW_L\bx_L$.\footnote{The bias term $\bb_L$ was omitted here for clarity.} Correspondingly, the output of $f_{\btheta}$ for a single class $c$ can be written as the dot product between $\bx_L$ and the corresponding row vector of $\bW_L$ denoted as $\bw_L^{(c)}$, such that $f_{\btheta}(\bx)_c \equiv {\bw_L^{(c)T}}\bx_L$. For a classification problem with $C=2$ classes, we can now rewrite the softmax probabilities in the following way:\footnote{The following argument holds without loss of generality for $p_{\btheta}(y=0|\bx)$.}
\begin{equation*}\begin{aligned}
    p_{\btheta}(y=1|\bx) & = \frac{\exp({\bw_L^{(1)T}}\bx_L)}{\exp({\bw_L^{(0)T}}\bx_L) + \exp({\bw_L^{(1)T}}\bx_L)} \\
\end{aligned}\end{equation*} 

Subtracting a constant from the weight term inside the exponential function does not change the output of the softmax function. Using this property, we can show the sigmoid function to be a special case of the softmax for binary classification:
\begin{equation*}\begin{aligned}
    & p_{\btheta}(y=1|\bx) \\
    & = \frac{\exp((\bw_L^{(1)} - \bw_L^{(0)})^T\bx_L)}{\exp((\bw_L^{(0)} - \bw_L^{(0)})^T\bx_L) + \exp((\bw_L^{(1)} - \bw_L^{(0)})^T\bx_L)} \\
    & = \frac{\exp((\bw_L^{(1)} - \bw_L^{(0)})^T\bx_L)}{1 + \exp((\bw_L^{(1)} - \bw_L^{(0)})^T\bx_L}  = \frac{\exp({\bw_L^{*T}}\bx_L)}{1 + \exp({\bw_L^{*T}}\bx_L)}
\end{aligned}\end{equation*} 

where $\bw_L^* = \bw_L^{(1)} - \bw_L^{(0)}$ corresponds to the new parameter vector which is used to parametrize a single output unit for a network in the binary classification setting.

\subsection{Linearization of ReLU networks}\label{app:relu-linearization}

In the section we give a more detailed version of the derivation of the linearization $f_{\btheta}(\bx) = \bV(\bx)\bx + \ba(\bx)$ with 

\begin{equation*}
    \bV(\bx) = \bW_L\bigg(\prod_{l=1}^{L-1}\bPhi_l(\bx)\bW_{L-l} \bigg)
\end{equation*}
\begin{equation*}
    \ba(\bx) = \bb_L + \sum_{l=1}^{L-1}\bigg(\prod_{l^\prime=1}^{L-l}\bW_{L+1-l^\prime}\bPhi_{L-l^\prime}(\bx)\bigg)\bb_l
\end{equation*}

We start from Equation \ref{eq:relu-net-replaced}:

\begin{equation*}\begin{aligned}
    f_{\btheta}(\bx) = & \bW_L\bPhi_{L-1}(\bx)\big(\bW_{L-1}\bPhi_{L-2}(\bx)\big(\ldots \\
     & \bPhi_1(\bx)\big(\bW_1\bx + \bb_1\big) \ldots\big) + \bb_{L-1} \big)+ \bb_L \\
\end{aligned}\end{equation*}

To make the steps more intuitive and to retain readability, we illustrate the necessary steps on a simple three layer network:

\begin{equation*}\begin{aligned}
    f_{\btheta}(\bx) = & \bW_3 \bPhi_{2}(\bx)\big(\bW_2\bPhi_1(\bx)\big(\bW_1\bx + \bb_1) + \bb_2) + \bb_3 \\
    = & \bW_3 \bPhi_{2}(\bx)\big(\bW_2\bPhi_1(\bx)\bW_1\bx + \bW_2\bPhi_1(\bx)\bb_1)\\
    & + \bb_2) + \bb_3 \\
    = & \underbrace{\bW_3 \bPhi_{2}(\bx)\bW_2\bPhi_1(\bx)\bW_1}_{=\bV(x)}\bx \\
    & + \underbrace{\bW_3 \bPhi_{2}(\bx)\bW_2\bPhi_1(\bx)\bb_1 + \bW_3\bPhi_{2}(\bx)\bb_2 + \bb_3}_{=\ba(\bx)} \\
\end{aligned}\end{equation*}

which we can identify as the parts of the affine transformation above.

\subsection{Construction of polytopal regions}\label{app:polytopes}

In this section, we reiterate the reasoning by \citet{hein2019relu} behind the construction the polytopal regions. For this purpose, the authors define an additional diagonal matrix $\bDelta_l(\bx)$ per layer $l$: 

\begin{equation*}
    \bDelta_l(\bx) = \begin{bmatrix}
    \text{sign}({f_{\btheta}^l(\bx)_1}) & \cdots & 0  \\
    \vdots &  \ddots & \vdots \\
    0 & \cdots & \text{sign}({f_{\btheta}^l(\bx)_{n_l}}) \\
    \end{bmatrix}
\end{equation*}

Together with the linearization of the network at $\bx$ explained in Appendix \ref{app:relu-linearization}, this is used to define a set of half-spaces for every neuron in the network:

\begin{equation*}
    \mathcal{H}_{l, i}(\bx) = \Big\{\bz \in \mathbb{R}^d\ \Big|\ \bDelta_l(\bx)\big(\bV_l(\bx)_i\bz + \ba_l(\bx)_i\big) \ge 0 \Big\}
\end{equation*}

Here, $\bV_l(\bx)_i$ and $\bb_l(\bx)_i$ denote the parts of the affine transformation obtained for the $i$-th neuron of the $l$-th layer, so the $i$-th row vector in $\bV_l(\bx)$ and the $i$-th scalar in $\bb_l(\bx)$, respectively. Finally, the polytope $Q$ containing $\bx$ is obtained by taking the intersection of all half-spaces induced by every neuron in the network: 

\begin{equation*}
    Q(\bx) = \bigcap_{l \in 1, \ldots, L}\bigcap_{i \in 1, ..., n_l} \mathcal{H}_{l, i}(\bx) 
\end{equation*}

\subsection{Proof of Proposition 1}\label{app:proposition1}

\begin{figure}
    \centering
    \includegraphics[width=0.9\columnwidth]{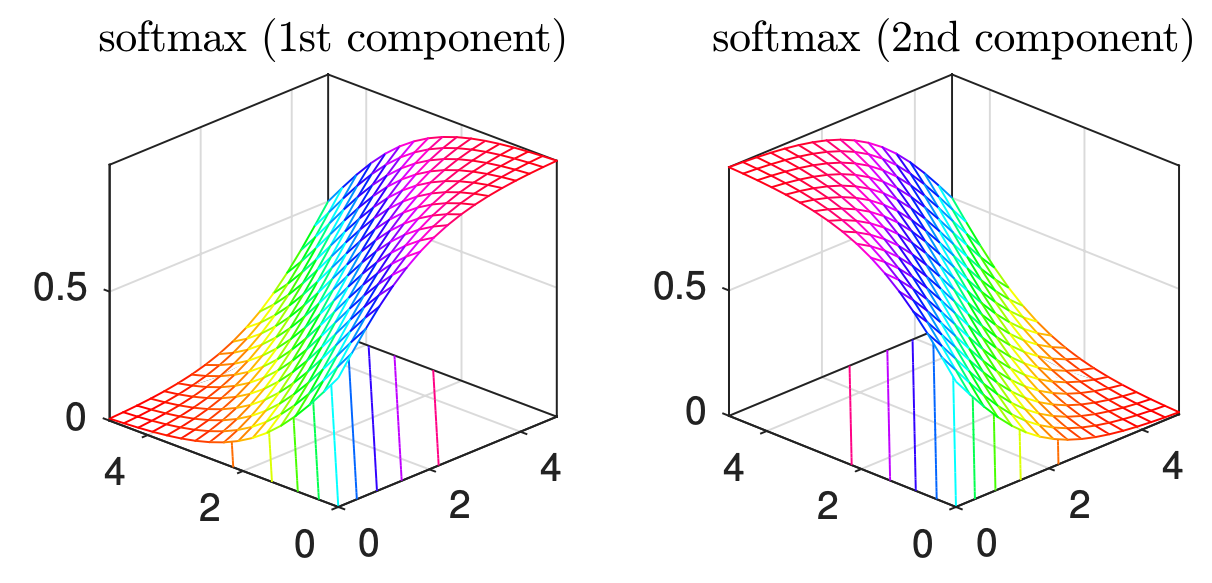}
    \caption{Illustration taken from the work of \citet{gao2017properties}, illustrating the interplay of softmax probabilities between components for $C=2$ in $\mathbb{R}^2$.}
    \label{fig:softmax-components}
\end{figure}

 We proceed to analyze the behaviour of gradients in the limit via two more lemmas; First, we establish the saturating property of the softmax In Lemma \ref{lemma:softmax-properties}, i.e. the model doesn't change its decision anymore in the limit.

\begin{customlemma}{9}\label{lemma:softmax-properties}
    Let $c, c^\prime \in \mathcal{C}$ be two arbitrary classes. It then holds for their corresponding output components (logits) that
    \begin{equation}\label{eq:softmax-asymptotic-behavior}
          \lim_{f_{\btheta}(\bx)_c \rightarrow \pm \infty} \frac{\partial}{\partial f_{\btheta}(\bx)_{c^\prime}}\bar{\sigma}(f_{\btheta}(\bx))_c = 0
    \end{equation}
\end{customlemma}

\begin{proof}
    Here, we first begin by evaluating the derivative of one component of the function w.r.t to an arbitrary component:
     \begin{equation*}\begin{aligned}
        &\ \ \ \frac{\partial}{\partial f_{\btheta}(\bx)_{c^\prime}}\bar{\sigma}(f_{\btheta}(\bx))_c = \frac{\partial}{\partial f_{\btheta}(\bx)_{c^\prime}} \frac{\exp(f_{\btheta}(\bx)_c)}{\sum_{c^{\pprime} \in \mathcal{C}} \exp(f_{\btheta}(\bx)_{c^{\pprime}})}  \\
        & =  \frac{\indicator{c = c^\prime} \exp(f_{\btheta}(\bx)_c)}{\sum_{c^{\pprime} \in \mathcal{C}} \exp(f_{\btheta}(\bx)_{c^{\pprime}})} - \frac{\exp(f_{\btheta}(\bx)_c)\exp(f_{\btheta}(\bx)_{c^\prime})}{\big(\sum_{c^{\pprime} \in \mathcal{C}} \exp(f_{\btheta}(\bx)_{c^{\pprime}})\big)^2} \\
    \end{aligned}\end{equation*}
    
    This implies that $\frac{\partial}{\partial f_{\btheta}(\bx)_{c^\prime}}\bar{\sigma}(f_{\btheta}(\bx))_c=$
    \begin{equation}\label{eq:softmax-derivative-cases}
        \begin{cases}\begin{aligned}
        \displaystyle
        & - \frac{\exp(2f_{\btheta}(\bx)_c)}{\big(\sum_{c^{\pprime} \in \mathcal{C}} \exp(f_{\btheta}(\bx)_{c^{\pprime}})\big)^2}\\
        & + \frac{\exp(f_{\btheta}(\bx)_{c})}{\sum_{c^{\pprime} \in \mathcal{C}}\exp(f_{\btheta}(\bx)_{c^{\pprime}})}  & \quad\text{If }c=c^\prime\\[0.5cm]
        \displaystyle
        & - \frac{\exp(f_{\btheta}(\bx)_c + f_{\btheta}(\bx)_{c^\prime})}{\big(\sum_{c^{\pprime}\in \mathcal{C}} \exp(f_{\btheta}(\bx)_{c^{\pprime}})\big)^2} & \quad\text{If }c\neq c^\prime\\
        \end{aligned}\end{cases}
    \end{equation}
    or more compactly:
    \begin{equation*}
        \frac{\partial}{\partial f_{\btheta}(\bx)_{c^\prime}}\bar{\sigma}(f_{\btheta}(\bx))_c  = \bar{\sigma}(f_{\btheta}(\bx))_c\big(\indicator{c = c^\prime} - \bar{\sigma}(f_{\btheta}(\bx))_{c^\prime} \big) 
    \end{equation*}
    
    Based on Equation \ref{eq:softmax-derivative-cases}, we can now investigate the asymptotic behavior for $f_{\btheta}(\bx)_c \rightarrow \infty$ more easily, starting with the $c = c^\prime$ case:
    \begin{equation}\begin{aligned}\label{eq:softmax-derivative-equal}
       & \lim_{f_{\btheta}(\bx)_c \rightarrow \infty} \frac{\partial}{\partial f_{\btheta}(\bx)_{c^\prime}}\bar{\sigma}(f_{\btheta}(\bx))_c \\
       & = \underbrace{ \lim_{f_{\btheta}(\bx)_c \rightarrow \infty}-\frac{\exp(f_{\btheta}(\bx)_c)}{\sum_{c^{\pprime} \in \mathcal{C}}\exp(f_{\btheta}(\bx)_{c^{\pprime}})}\frac{\exp(f_{\btheta}(\bx)_c)}{\sum_{c^{\pprime} \in \mathcal{C}}\exp(f_{\btheta}(\bx)_{c^{\pprime}})}}_{\text{-1}} \\
       & + \underbrace{ \lim_{f_{\btheta}(\bx)_c \rightarrow \infty}\frac{\exp(f_{\btheta}(\bx)_c)}{\sum_{c^{\pprime} \in \mathcal{C}}\exp(f_{\btheta}(\bx)_{c^{\pprime}})}}_{1} = 0 \\
    \end{aligned}\end{equation}
    
    With the numerator and denominator being dominated by the exponentiated $f_{\btheta}(\bx)_c$ in Equation \ref{eq:softmax-derivative-equal}, the first term will tend to $-1$, while the second term will tend to $1$, resulting in a derivative of $0$. The $c \neq c^\prime$ can be analyzed the following way:
    \begin{equation}\begin{aligned}\label{eq:softmax-derivative-unequal}
         & \lim_{f_{\btheta}(\bx)_c \rightarrow \infty} \frac{\partial}{\partial f_{\btheta}(\bx)_{c^\prime}}\bar{\sigma}(f_{\btheta}(\bx))_c \\
         & =  \underbrace{\lim_{f_{\btheta}(\bx)_c \rightarrow \infty}\bigg(-\frac{\exp(f_{\btheta}(\bx)_c)}{\sum_{c^{\pprime} \in \mathcal{C}}\exp(f_{\btheta}(\bx)_{c^{\pprime}})} \bigg)}_{-1} \\ 
         & \cdot \underbrace{\lim_{f_{\btheta}(\bx)_c \rightarrow  \infty}\bigg(\frac{\exp(f_{\btheta}(\bx)_{c^\prime})}{\sum_{c^{\pprime} \in \mathcal{C}}\exp(f_{\btheta}(\bx)_{c^{\pprime}})}\bigg)}_{0} = 0\\ 
    \end{aligned}\end{equation}
    
    Again, we factorize the fraction in Equation \ref{eq:softmax-derivative-unequal} into the product of two softmax functions, one for component $c$, one for $c^\prime$. The first factor will again tend to $-1$ as in the other case, however the second will approach $0$, as only the sum in the denominator will approach infinity. As the limit of a product is the products of its limits, this lets the whole expression approach $0$ in the limit.\\
    
    When $f_{\btheta}(\bx)_c \rightarrow -\infty$, both cases approach $0$ due to the exponential function, which proves the lemma.
\end{proof}
How to interplay between different softmax components produces zero gradients in the limit is illustrated in Figure \ref{fig:softmax-components}. In Lemma \ref{lemma:growth-rate-softmax}, we compare the rate of growth of different components of $p_{\btheta}$. We show that for the decomposed function $p_{\btheta}$, the rate at which the softmax function converges to its output distribution in the limit outpaces the change in the underlying logits w.r.t. the network input. 

\begin{customlemma}{10}\label{lemma:growth-rate-softmax}
    Suppose that $f_{\btheta}$ is a ReLU-network. Let $\bx^\prime\in\mathbb{R}^D$,  suppose $\balpha$ is a scaling vector and that the associated PUP $\mathcal{P}(\bx^\prime, d)$ has a corresponding matrix $\mathbf{V}$ with no zero entries. Then it holds that
    \begin{equation}\begin{aligned}\label{eq:growth-rate-softmax}
        \forall c^\prime \in \mathcal{C}, & \lim\limits_{\alpha_d \to \infty} \bigg(\frac{\partial}{\partial f_{\btheta}(\bx)_{c^\prime}}\bar{\sigma}(f_{\btheta}(\bx))_c\bigg)^{-1}\bigg|_{\bx = \balpha\odot\bx^\prime}\\
        & - \bigg(\frac{\partial}{\partial x_d}f_{\btheta}(\bx)_{c^\prime}\bigg)\bigg|_{\bx = \balpha\odot\bx^\prime}= \infty
    \end{aligned}\end{equation}
\end{customlemma}

\begin{proof}
    We evaluate the first term of Equation \ref{eq:growth-rate-softmax} to show that it grows exponentially in the limit. By Lemma \ref{lemma:unique-pup} we know that in the limit $\alpha_d \to \infty$ the vector $\balpha\odot\bx^\prime$ will remain within $\mathcal{P}(\bx^\prime, d)$. Since the matrix associated with this PUP has no zero entries, we know by Lemma \ref{lemma:strictly-monotonic} that the gradient of $f_{\btheta}(\bx)_c$ on dimension $d$ is either always positive or negative, hence $f_{\btheta}(\bx)_c \rightarrow \pm \infty$.  Given Lemma \ref{lemma:softmax-properties} describing the asymptotic behavior in the limit, it follows that 
    \begin{equation*}
        \lim_{f_{\btheta}(\bx)_c \rightarrow \pm \infty} \bigg(\frac{\partial}{\partial f_{\btheta}(\bx)_{c^\prime}}\bar{\sigma}(f_{\btheta}(\bx))_c\bigg)^{-1} = \infty
    \end{equation*}
    
    where we can see that the result is a symmetrical function displaying exponential growth in the limit of $f_{\btheta}(\bx)_c \rightarrow \pm \infty$. 
    We now show that because we assumed $f_{\btheta}$ to be a neural network consisting of $L$ affine transformations with ReLU activation functions, the output of the final layer is only going to be a linear combination of its inputs.\footnote{Here we make the argument for the whole function $f_{\btheta}: \mathbb{R}^D \rightarrow \mathbb{R}^C$, but the conclusions also applies to every output component of the function $f_{\btheta}(\bx)_c$.} This can be proven by induction. Let us first look at the base case $L=1$. In the rest of this proof, we denote $\bx_l$ as the input to layer $l$, with $\bx_1 \equiv \bx$, and $\bW_l, \bb_l$ the corresponding layer parameters. $\ba_l$ signifies the result of the affine transformation that is then fed into the activation function.
    \begin{equation}\begin{aligned}\label{eq:linear-trans-derivative}
        f_{\btheta}(\bx) & = \phi(\ba_1) = \phi(\bW_1\bx_1 + \bb_1) \\
        \frac{\partial f_{\btheta}(\bx)}{\partial \bx_1} & = \frac{\phi(\ba_1) }{\partial \ba_1} \frac{\partial \ba_1}{\partial \bx_1}= \bind(\bx_1 > \mathbf{0})^T\bW_1\ \\
        \frac{\partial f_{\btheta}(\bx)}{\partial x_{1d}} & = \indicator{x_d > 0}w_{1d}
    \end{aligned}\end{equation}
    
    where $\bind(\bx_1 > \mathbf{0}) = [\indicator{x_{11} > 0}, \ldots, \indicator{x_{1d} > 0}]^T$, $w_{1d}$ denotes the $d$-th column of $\bW_1$. This is a linear function, which proves the base case. Let now $\frac{\partial \bx_l}{\partial \bx_1}$ denote the partial derivative of the input to the $l$-th layer w.r.t to the input and suppose that it is linear by the inductive hypothesis. Augmenting the corresponding network by another linear adds another term akin to the second expression in Equation \ref{eq:linear-trans-derivative} to the chain of partial derivatives:
    \begin{equation}\label{eq:layer-induction-step}
        \frac{\partial \bx_{l+1}}{\partial \bx_1} = \frac{\partial \bx_{l+1}}{\partial \bx_l}\frac{\partial \bx_l}{\partial \bx_1}
    \end{equation}
    
    which is also a linear function of, proving the induction step. Because we know that both terms of the product in Equation \ref{eq:layer-induction-step} are linear, the second term of the Equation \ref{eq:growth-rate-softmax} is as well. Together with the previous insight that the first term is exponential, this implies that it will outgrow the second in the limit, creating an infinitively-wide gap between them and thereby proving the lemma.
\end{proof}

Equipped with the results of Lemmas \ref{lemma:softmax-properties} and \ref{lemma:growth-rate-softmax}, we can finally prove the proposition:
\begin{proof}
    We show that one scalar factor contained in the factorization of the gradient $\nabla_{\bx}p_{\btheta}(y=c|\bx)$ tends to zero under the given assumptions, having the whole gradient become the zero vector in the limit. We begin by again factorizing the gradient $\nabla_{\bx} p_{\btheta}(y=c|\bx)$ using the multivariate chain rule:
    \begin{equation}\label{eq:fac-gradient-softmax}
        \nabla_{\bx}p_{\btheta}(y=c|\bx) = \sum_{c^\prime=1}^C \frac{\partial}{\partial  f_{\btheta}(\bx)_{c^\prime}}\bar{\sigma}(f_{\btheta}(\bx))_c\cdot  \nabla_{\bx}f_{\btheta}(\bx)_{c^\prime}
    \end{equation}
    By Lemma \ref{lemma:strictly-monotonic} and \ref{lemma:unique-pup} we know that $f_{\btheta}$ is a component-wise strictly monotonic function on $\mathcal{P}(\bx^\prime, d)$, which implies for the limit of $\alpha_d \rightarrow \infty$ that $\forall c \in \mathcal{C}: f_{\btheta}(\bx)_c \rightarrow \pm \infty$. Then, Lemma \ref{lemma:softmax-properties} implies that the first factor of every part in the sum of Equation \ref{eq:fac-gradient-softmax} will tend to zero in the limit. Lemma \ref{lemma:growth-rate-softmax} ensures that the first factor approximates zero quicker than every component of the gradient $\nabla_{\bx}f_{\btheta}(\bx)_{c^\prime}$ potentially approaching infinity, causing the product to result in the zero vector. As this results in a sum over $C$ zero vectors in the limit, this proves the lemma.
    \end{proof}

\subsection{Proof of Proposition 2}\label{app:softmax-limit}

\begin{proof}
    We start by rewriting the softmax probability for the $c$-th logit:
    \begin{equation*}\begin{aligned}
        \bar{\sigma}(f_{\btheta}(\bx))_c & = \frac{\exp(f_{\btheta}(\bx)_c)}{\sum_{c^\prime \in \mathcal{C}}\exp(f_{\btheta}(\bx)_{c^\prime})} \\
         & = 1 - \frac{\sum_{c^{\prime\prime} \in \mathcal{C} \setminus \{ c\}}\exp(f_{\btheta}(\bx)_{c^{\prime\prime}})}{\sum_{c^\prime \in \mathcal{C}}\exp(f_{\btheta}(\bx)_{c^\prime})}
    \end{aligned}\end{equation*}
    By Lemma \ref{lemma:strictly-monotonic} and \ref{lemma:unique-pup} we have shown that $f_{\btheta}$ is a component-wise strictly monotonic function on $\mathcal{P}(\bx^\prime, d)$, 
    so we know that $\forall c^\prime \in \mathcal{C}: f_{\btheta}(\bx)_{c^\prime} \rightarrow \pm \infty$ as $\alpha_d \rightarrow \infty$. We now treat the two limits $\pm \infty$ in order.
    Because of the assumption that $d$-column of $\mathbf{V}$ has no duplicate entries, this implies that  there must be a $c \in \mathcal{C}$ s.t. $\forall c^\prime \neq c:\ v_{cd} > v_{c^\prime d}$. Thus, in the limit of $f_{\btheta}(\bx)_c \rightarrow \infty$, the sum in the \emph{denominator} of the fraction including the logit of $c$ will tend to infinity faster than the the sum in the \emph{numerator} not including $c$'s logit, and thus the fraction itself will tend to $0$, proving this case. In the case of $f_{\btheta}(\bx)_c \rightarrow -\infty$, the \emph{numerator} of the fraction will tend to $0$ faster than the \emph{denominator}, having the fraction approach $0$ in the limit as well, proving the second case and therefore the lemma. 
\end{proof}

\subsection{Proof of Lemma 4}\label{app:aggregation-theorem}
\begin{proof}
    \begin{align*}
        & \lim\limits_{\alpha \to \infty}\bbnorm\nabla_{\bx}\ \Expect[\bigg]{{p(\btheta|\mathcal{D})}}{p_{\btheta}(y=c|\bx)}\bigg|_{\bx = \balpha\odot\bx^\prime}\bbnorm_2\\
        \intertext{Linearity of gradient:}
        = & \lim\limits_{\alpha \to \infty}\bbnorm \Expect[\bigg]{{p(\btheta|\mathcal{D})}}{\nabla_{\bx} p_{\btheta}(y=c|\bx)}\bigg|_{\bx = \balpha\odot\bx^\prime}\bbnorm_2 \\
        \intertext{Utilize Jensen's inequality  $\phi(\Expect{}{\bx}) \le \Expect{}{\phi(\bx)}$ as $l_2$-norm is a convex function and Proposition \ref{proposition:overconfidence-softmax}:}
        \le & \lim\limits_{\alpha \to \infty} \Expect[\bigg]{{p(\btheta|\mathcal{D})}}{\ \underbrace{\bbnorm \nabla_{\bx} p_{\btheta}(y=c|\bx)\bigg|_{\bx = \balpha\odot\bx^\prime}\bbnorm_2}_{= 0\ (\text{Proposition } \ref{proposition:overconfidence-softmax})}\ } = 0 \\
    \end{align*}
    Because the last expression is an upper bound to the original expression and the $l_2$ norm is lower-bounded by $0$, this proves the lemma.
\end{proof}

\subsection{Proof of Lemma 5}\label{app:asymptotic-softmax-variance} 

\begin{lemma}{(Asymptotic behavior with softmax variance)}\label{lemma:asymptotic-softmax-variance}
      Suppose that $f_{\btheta}^{(1)}, \ldots, f_{\btheta}^{(K)}$ are ReLU networks. Let $\bx^\prime\in\mathbb{R}^D$,  suppose $\balpha$ is a scaling vector and that for all $k$, the associated PUP $\mathcal{P}^{(k)}(\bx^\prime, d)$ has a corresponding matrix $\mathbf{V}^{(k)}$ with no zero entries. It holds that 
    \begin{equation*}\begin{aligned}
        & \lim\limits_{\alpha_d \to \infty}\bbnorm\nabla_{\bx}\ \frac{1}{C}\sum_{c=1}^C \Expect[\bigg]{{p(\btheta|\mathcal{D})}}{\Big(p_{\btheta}(y=c|\bx)\Big)^2}\\ 
        & - \Expect[\bigg]{{p(\btheta|\mathcal{D})}}{p_{\btheta}(y=c|\bx)}^2\bigg|_{\bx = \balpha\odot\bx^\prime}\bbnorm_2 = 0
    \end{aligned}\end{equation*}

\end{lemma}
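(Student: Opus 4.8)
The plan is to differentiate the class-variance metric, push the gradient inside the expectation over instances, and apply the product and chain rules so that every surviving factor is either a probability bounded in $[0,1]$ or a gradient of the form $\nabla_{\bx}p_{\btheta}(y=c|\bx)$ or $\nabla_{\bx}\mathbb{E}_{p(\btheta|\mathcal{D})}[p_{\btheta}(y=c|\bx)]$ that is already known to vanish in the limit by Proposition \ref{proposition:overconfidence-softmax} and Lemma \ref{aggregation-theorem}, respectively. Since the $l_2$-norm is non-negative, it then suffices to exhibit an upper bound that tends to zero and invoke a squeeze argument, exactly as in the proof of Lemma \ref{aggregation-theorem}.

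Concretely, I would first use the triangle inequality together with the $\tfrac{1}{C}$ factor to reduce the claim to showing, for each class $c$, that the gradient of the per-class variance $V_c = \mathbb{E}_{p(\btheta|\mathcal{D})}[(p_{\btheta}(y=c|\bx))^2] - \mathbb{E}_{p(\btheta|\mathcal{D})}[p_{\btheta}(y=c|\bx)]^2$ has norm converging to zero. I would then treat the two summands of $V_c$ separately. For the second moment, linearity of the gradient (the expectation is a finite average over the $K$ instances) lets me write $\nabla_{\bx}\mathbb{E}_{p(\btheta|\mathcal{D})}[(p_{\btheta}(y=c|\bx))^2] = \mathbb{E}_{p(\btheta|\mathcal{D})}[2\,p_{\btheta}(y=c|\bx)\,\nabla_{\bx}p_{\btheta}(y=c|\bx)]$ by the chain rule; applying Jensen's inequality to pull the convex $l_2$-norm inside the expectation and bounding the scalar factor by $p_{\btheta}(y=c|\bx)\le 1$ yields the upper bound $2\,\mathbb{E}_{p(\btheta|\mathcal{D})}[\,\|\nabla_{\bx}p_{\btheta}(y=c|\bx)\|_2\,]$, each summand of which vanishes in the limit by Proposition \ref{proposition:overconfidence-softmax} applied to the corresponding instance.

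For the squared mean term, the chain rule gives $\nabla_{\bx}\mathbb{E}_{p(\btheta|\mathcal{D})}[p_{\btheta}(y=c|\bx)]^2 = 2\,\mathbb{E}_{p(\btheta|\mathcal{D})}[p_{\btheta}(y=c|\bx)]\,\nabla_{\bx}\mathbb{E}_{p(\btheta|\mathcal{D})}[p_{\btheta}(y=c|\bx)]$; since the aggregated probability again lies in $[0,1]$, its norm is bounded by $2\,\|\nabla_{\bx}\mathbb{E}_{p(\btheta|\mathcal{D})}[p_{\btheta}(y=c|\bx)]\|_2$, which tends to zero directly by Lemma \ref{aggregation-theorem}. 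Combining the two bounds, summing over $c$, and squeezing against the trivial lower bound of $0$ completes the argument.

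The main obstacle is the second-moment term: unlike in Lemma \ref{aggregation-theorem}, differentiating produces a product $p_{\btheta}\,\nabla_{\bx}p_{\btheta}$ inside the expectation, so I cannot appeal to Lemma \ref{aggregation-theorem} directly and must instead combine Jensen's inequality with the boundedness $p_{\btheta}(y=c|\bx)\le 1$ before invoking the single-network Proposition \ref{proposition:overconfidence-softmax} instance-by-instance. The one point requiring care is that Proposition \ref{proposition:overconfidence-softmax} must be legitimately applicable to every instance $k$, which is guaranteed here by the hypothesis that each $\mathbf{V}^{(k)}$ associated with $\mathcal{P}^{(k)}(\bx^\prime,d)$ has no zero entries.
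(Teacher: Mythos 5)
Your proposal is correct and follows essentially the same route as the paper's proof: triangle inequality over classes, chain rule on both the second-moment and squared-mean terms, and the vanishing of $\nabla_{\bx}p_{\btheta}(y=c|\bx)$ from Proposition \ref{proposition:overconfidence-softmax} (respectively Lemma \ref{aggregation-theorem}) to kill each term, closed off by the squeeze against the lower bound $0$. Your explicit use of Jensen's inequality and the bound $p_{\btheta}(y=c|\bx)\le 1$ on the second-moment term is in fact slightly more careful than the paper, which simply notes that the inner gradients tend to $\bm{0}$ inside the (finite) expectation; the substance is identical.
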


\begin{proof}   
    \begin{align*}
         & \lim\limits_{\alpha \to \infty}\bbnorm \nabla_{\bx} \frac{1}{C}\sum_{c=1}^C \Expect[\bigg]{{p(\btheta|\mathcal{D})}}{ \Big(p_{\btheta}(y=c|\bx)\Big)^2} \\
          - & \Expect[\bigg]{{p(\btheta|\mathcal{D})}}{p_{\btheta}(y=c|\bx)}^2\bigg|_{\bx = \balpha\odot\bx^\prime}\bbnorm_2 \\
          \intertext{Linearity of gradient:}
          = & \lim\limits_{\alpha_d \to \infty}\bbnorm\frac{1}{C}\sum_{c=1}^C \nabla_{\bx} \Expect[\bigg]{{p(\btheta|\mathcal{D})}}{\Big(p_{\btheta}(y=c|\bx)\Big)^2} \\ 
          - & \nabla_{\bx}\Expect[\bigg]{{p(\btheta|\mathcal{D})}}{p_{\btheta}(y=c|\bx)}^2\bigg|_{\bx = \balpha\odot\bx^\prime}\bbnorm_2 \\
        \intertext{Apply triangle inequality $||x + y|| \le ||x|| + ||y||$ to sum over all $c$:}
         \le & \lim\limits_{\alpha_d \to \infty}\frac{1}{C}\sum_{c=1}^C \bbnorm \nabla_{\bx} \Expect[\bigg]{{p(\btheta|\mathcal{D})}}{\Big(p_{\btheta}(y=c|\bx)\Big)^2}\\
         - & \nabla_{\bx}\Expect[\bigg]{{p(\btheta|\mathcal{D})}}{p_{\btheta}(y=c|\bx)}^2\bigg|_{\bx = \balpha\odot\bx^\prime}\bbnorm_2 \\
          \intertext{On the first term use linearity of gradients and apply chain rule, do it in the reverse order on the second term:}
          = & \lim\limits_{\alpha_d \to \infty}\\ 
          & \frac{1}{C}\sum_{c=1}^C \bbnorm  \Expect[\bigg]{p(\btheta|\mathcal{D})}{2p_{\btheta}(y=c|\bx)\underbrace{\mystrut{0.325cm}{\nabla_{\bx}p_{\btheta}(y=c|\bx)}}_{= \bm{0} \text{ (Proposition \ref{proposition:overconfidence-softmax})}}\bigg|_{\bx = \balpha\odot\bx^\prime}} \\
            - & \Bigg(2\Expect[\bigg]{{p(\btheta|\mathcal{D})}}{p_{\btheta}(y=c|\bx)}\Bigg)\\
           \cdot & \Expect[\bigg]{{p(\btheta|\mathcal{D})}}{\underbrace{\mystrut{0.4cm}{\nabla_{\bx}p_{\btheta}(y=c|\bx)}}_{= \bm{0} \text{ (Proposition  \ref{proposition:overconfidence-softmax})}}\bigg|_{\bx = \balpha\odot\bx^\prime}}\bbnorm_2 = 0 \\
    \end{align*} 
    We can see that due to an intermediate result of Proposition \ref{proposition:overconfidence-softmax}, i.e. that $\nabla_{\bx}p_{\btheta}(y=c|\bx)$ approaches the zero vector in the limit, the innermost gradients tend to zero, bringing the whole expression to 0.

    Because the final is an upper bound to the original expression and because the $l_2$ norm has a lower bound of $0$, this proves the lemma.
\end{proof}

\subsection{Proof of Lemma 6}\label{app:asymptotic-predictive-entropy} 

\begin{lemma}{(Asymptotic behavior for predictive entropy)}\label{lemma:asymptotic-predictive-entropy}
      Suppose that $f_{\btheta}^{(1)}, \ldots, f_{\btheta}^{(K)}$ are ReLU networks. Let $\bx^\prime\in\mathbb{R}^D$, suppose $\balpha$ is a scaling vector  and that for all $k$, the associated PUP $\mathcal{P}^{(k)}(\bx^\prime, d)$ has a corresponding matrix $\mathbf{V}^{(k)}$ with no zero entries. It holds that
    \begin{equation*}
        \lim\limits_{\alpha_d \to \infty}\bbnorm\nabla_{\bx}\mathbb{H}\bigg[\Expect[\Big]{{p(\btheta|\mathcal{D})}}{p_{\btheta}(y|\bx)}\bigg]\bigg|_{\bx = \balpha\odot\bx^\prime}\bbnorm_2 = 0
    \end{equation*}
\end{lemma}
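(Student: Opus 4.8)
The plan is to reduce the claim to Lemma~\ref{aggregation-theorem} together with the saturating behaviour of the softmax, the only genuinely new difficulty being a $0\cdot\infty$ indeterminacy created by the logarithm in the entropy. Writing $\bar{p}_c(\bx)$ for the posterior-averaged class probability $\Expect[\Big]{{p(\btheta|\mathcal{D})}}{p_{\btheta}(y=c|\bx)}$, which is strictly positive for every finite input, we have $\mathbb{H}[\bar{\mathbf{p}}(\bx)] = -\sum_{c\in\mathcal{C}}\bar p_c(\bx)\log\bar p_c(\bx)$. First I would apply the chain rule to get $\nabla_{\bx}\mathbb{H} = -\sum_c(\log\bar p_c + 1)\nabla_{\bx}\bar p_c$, and then use the identity $\sum_c\nabla_{\bx}\bar p_c = \nabla_{\bx}\big(\sum_c\bar p_c\big) = \nabla_{\bx}1 = \bm{0}$ to cancel the constant terms, leaving the cleaner expression $\nabla_{\bx}\mathbb{H} = -\sum_c(\log\bar p_c)\nabla_{\bx}\bar p_c$. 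The triangle inequality then gives $||\nabla_{\bx}\mathbb{H}||_2 \le \sum_{c\in\mathcal{C}}|\log\bar p_c|\,||\nabla_{\bx}\bar p_c||_2$, so it suffices to show that each of the finitely many summands vanishes as $\alpha_d\to\infty$.

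Next I would record two facts that hold once $\alpha_d$ is large enough that, by Lemma~\ref{lemma:unique-pup}, every instance $k$ has settled into its terminal PUP $\mathcal{P}^{(k)}(\bx^\prime,d)$, so that each $\bV^{(k)}$, and hence each $\nabla_{\bx}f_{\btheta}^{(k)}(\bx)_{c^\prime}$, is a fixed bounded vector. The first fact is that $||\nabla_{\bx}\bar p_c||_2\to 0$, which is exactly the content of Lemma~\ref{aggregation-theorem}. The second is a pointwise bound controlling the gradient by the probability itself: from the decomposition $\nabla_{\bx}p_{\btheta}^{(k)}(y=c|\bx) = \sum_{c^\prime}\frac{\partial}{\partial f_{\btheta}(\bx)_{c^\prime}}\bar\sigma(f_{\btheta}^{(k)}(\bx))_c\,\nabla_{\bx}f_{\btheta}^{(k)}(\bx)_{c^\prime}$ and the identity $\frac{\partial}{\partial f_{c^\prime}}\bar\sigma_c = \bar\sigma_c(\indicator{c=c^\prime}-\bar\sigma_{c^\prime})$ established in Lemma~\ref{lemma:softmax-properties}, we get $|\partial\bar\sigma_c/\partial f_{c^\prime}|\le p^{(k)}_c$, whence $||\nabla_{\bx}p_{\btheta}^{(k)}(y=c|\bx)||_2\le M\,p^{(k)}_c$ for a constant $M$ bounding the (finitely many) rows of the matrices $\bV^{(k)}$. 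Averaging over instances, by linearity and the triangle inequality as in Lemma~\ref{aggregation-theorem}, then yields the crucial estimate $||\nabla_{\bx}\bar p_c||_2\le M\,\bar p_c$.

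With these two facts I would split on the limiting value $\bar p_c^{\,*}$ of $\bar p_c$, which exists because inside a fixed PUP each $p^{(k)}_c$ is a softmax evaluated at functions affine in $\alpha_d$ and therefore converges. If $\bar p_c^{\,*}>0$, then $|\log\bar p_c|$ stays bounded while $||\nabla_{\bx}\bar p_c||_2\to 0$ by Lemma~\ref{aggregation-theorem}, so the product vanishes. If $\bar p_c^{\,*}=0$, I instead invoke the second fact to bound the summand by $M\,\bar p_c|\log\bar p_c|$, which tends to $0$ because $t\log t\to 0$ as $t\to 0^+$. In both cases every summand tends to $0$; the finite sum therefore tends to $0$, and since the $l_2$-norm is bounded below by $0$ this forces the limit to be exactly $0$.

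The main obstacle is precisely the case $\bar p_c^{\,*}=0$: the prefactor $|\log\bar p_c|$ blows up for any class whose aggregated probability is driven to zero, so the direct reduction to Lemma~\ref{aggregation-theorem} that sufficed for the bounded prefactors in the variance lemma (Lemma~\ref{lemma:asymptotic-softmax-variance}) no longer closes the argument. The resolution is to show that $\nabla_{\bx}\bar p_c$ decays at least as fast as $\bar p_c$ itself; the softmax-Jacobian bound $|\partial\bar\sigma_c/\partial f_{c^\prime}|\le p_c$ supplies exactly this, converting the indeterminate $0\cdot\infty$ form into the harmless $t\log t$.
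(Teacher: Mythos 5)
Your proof is correct, and it follows the same skeleton as the paper's own proof (expand the entropy as a sum over classes, chain/product rule giving $\sum_{c}(1+\log\bar p_c)\nabla_{\bx}\bar p_c$, triangle inequality, reduction to Lemma~\ref{aggregation-theorem}), but it adds one ingredient the paper's proof lacks---and in doing so it repairs a genuine gap in that proof. The paper pulls the factor $1+\log\bar p_c$ out of the norm as a ``scalar'' and concludes via Lemma~\ref{aggregation-theorem} that each summand vanishes; this step is only valid if that scalar stays bounded along the limit, whereas $\log\bar p_c\to-\infty$ for every class whose aggregated probability is driven to zero---which, by Proposition~\ref{proposition:softmax-limit}, is the generic situation for all but one class. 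You identify exactly this $0\cdot\infty$ indeterminacy and close it with the decay estimate $\bbnorm\nabla_{\bx}\bar p_c\bbnorm_2\le M\,\bar p_c$, obtained from $\big|\tfrac{\partial}{\partial f_{c^\prime}}\bar{\sigma}(f)_c\big|=\bar{\sigma}(f)_c\big|\indicator{c=c^\prime}-\bar{\sigma}(f)_{c^\prime}\big|\le \bar{\sigma}(f)_c$ together with boundedness of the rows of the finitely many matrices $\bV^{(k)}$; combined with $t\log t\to 0$ as $t\to 0^+$ this handles the $\bar p_c^{\,*}=0$ branch, while Lemma~\ref{aggregation-theorem} handles the $\bar p_c^{\,*}>0$ branch. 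Your estimate plays the same role here that Lemma~\ref{lemma:growth-rate-softmax} plays in the proof of Proposition~\ref{proposition:overconfidence-softmax}: the exponential saturation of the softmax outpaces the (here merely logarithmic) divergence of the prefactor, so the paper's stated conclusion is true even though its written argument skips this point. Two minor caveats you share with the paper: both arguments implicitly require $x^\prime_d\neq 0$ (otherwise the point never moves and nothing converges to zero), and your case split needs $\bar p_c$ to converge, which you correctly justify by noting that within each terminal PUP the logits are affine in $\alpha_d$. Your cancellation $\sum_{c}\nabla_{\bx}\bar p_c=\bm{0}$, eliminating the $+1$ terms, is a further valid simplification absent from the paper.
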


\begin{proof}
    \begin{align*}
        & \lim\limits_{\alpha_d \to \infty}\bbnorm\nabla_{\bx}\mathbb{H}\bigg[\Expect[\Big]{{p(\btheta|\mathcal{D})}}{p_{\btheta}(y|\bx)}\bigg]\bigg|_{\bx = \balpha\odot\bx^\prime}\bbnorm_2 \\
         & = \lim\limits_{\alpha_d \to \infty}\bbnorm\nabla_{\bx}\Bigg(\sum_{c=1}^C\ \Expect[\Big]{{p(\btheta|\mathcal{D})}}{p_{\btheta}(y=c|\bx)}\\
         & \cdot \log\bigg(\Expect[\Big]{{p(\btheta|\mathcal{D})}}{p_{\btheta}(y=c|\bx)}\bigg)\Bigg)\bigg|_{\bx = \balpha\odot\bx^\prime}\bbnorm_2 \\
         \intertext{Linearity of gradient:}
         & = \lim\limits_{\alpha_d \to \infty}\bbnorm\sum_{c=1}^C\ \nabla_{\bx}\Bigg(\Expect[\Big]{{p(\btheta|\mathcal{D})}}{p_{\btheta}(y=c|\bx)}\\
         & \cdot\log\bigg(\Expect[\Big]{{p(\btheta|\mathcal{D})}}{p_{\btheta}(y=c|\bx)}\bigg)\Bigg)\bigg|_{\bx = \balpha\odot\bx^\prime}\bbnorm_2 \\
         \intertext{Apply product rule:}
         & = \lim\limits_{\alpha_d \to \infty}\bbnorm\Bigg(\sum_{c=1}^C\ \Expect[\Big]{{p(\btheta|\mathcal{D})}}{p_{\btheta}(y=c|\bx)}\\
         & \cdot\bigg(\Expect[\Big]{{p(\btheta|\mathcal{D})}}{p_{\btheta}(y=c|\bx)}\bigg)^{-1}\\ & \cdot\nabla_{\bx}\Bigg(\Expect[\Big]{{p(\btheta|\mathcal{D})}}{p_{\btheta}(y=c|\bx)}\Bigg) 
        \\
        & + \nabla_{\bx}\bigg(\Expect[\Big]{{p(\btheta|\mathcal{D})}}{p_{\btheta}(y=c|\bx)}\bigg)\\
        & \cdot \log\bigg(\Expect[\Big]{{p(\btheta|\mathcal{D})}}{p_{\btheta}(y=c|\bx)}\bigg) \bigg|_{\bx = \balpha\odot\bx^\prime}\bbnorm_2 \\
        \intertext{Factor out gradient:}
        & = \lim\limits_{\alpha_d \to \infty}\bbnorm\sum_{c=1}^C\ \nabla_{\bx}\Expect[\Big]{{p(\btheta|\mathcal{D})}}{p_{\btheta}(y=c|\bx)}\\
        & \cdot \bigg(1 + \log\bigg(\Expect[\Big]{{p(\btheta|\mathcal{D})}}{p_{\btheta}(y=c|\bx)}\bigg)\bigg) \bigg|_{\bx = \balpha\odot\bx^\prime}\bbnorm_2 \\
        \intertext{Apply triangle inequality to sum over all $c$:}
        \le & \lim\limits_{\alpha_d \to \infty}\sum_{c=1}^C\bbnorm \nabla_{\bx}\Expect[\bigg]{{p(\btheta|\mathcal{D})}}{p_{\btheta}(y=c|\bx)} \\
        & \cdot \bigg(1 + \log\bigg(\Expect[\bigg]{{p(\btheta|\mathcal{D})}}{p_{\btheta}(y=c|\bx)}\bigg)\bigg) \bigg|_{\bx = \balpha\odot\bx^\prime}\bbnorm_2 \\
        \intertext{As the log expectation just evaluates to a scalar, it can be pulled out of the norm and we can apply Lemma \ref{aggregation-theorem}}
         & = \lim\limits_{\alpha_d \to \infty}\sum_{c=1}^C\ \underbrace{\bigg(1 + \log\bigg( \Expect[\Big]{{p(\btheta|\mathcal{D})}}{p_{\btheta}(y=c|\bx)}\bigg)\bigg)}_{\text{Scalar}}\\
         & \cdot\underbrace{\bbnorm\nabla_{\bx}\Expect[\Big]{{p(\btheta|\mathcal{D})}}{ p_{\btheta}(y=c|\bx)}\bigg|_{\bx = \balpha\odot\bx^\prime}\bbnorm_2}_{=0\ (\text{Lemma } \ref{aggregation-theorem})} = 0 \\
    \end{align*}
    As the final result is an upper bound to the original expression and is lower-bounded by $0$ due to the $l_2$ norm, this proves the lemma.
\end{proof}

\subsection{Proof of Lemma 7}\label{app:asymptotic-mutual-information} 

\begin{lemma}{(Asymptotic behavior for approximate mutual information)}\label{lemma:asymptotic-mutual-information}
      Suppose that $f_{\btheta}^{(1)}, \ldots, f_{\btheta}^{(K)}$ are ReLU networks. Let $\bx^\prime\in\mathbb{R}^D$, suppose $\balpha$ is a scaling vector and that for all $k$, the associated PUP $\mathcal{P}^{(k)}(\bx^\prime, d)$ has a corresponding matrix $\mathbf{V}^{(k)}$ with no zero entries. It holds that
    \begin{equation*}\begin{aligned}
        & \lim\limits_{\alpha_d \to \infty}\bbnorm\nabla_{\bx}\bigg(\mathbb{H}\bigg[\Expect[\Big]{{p(\btheta|\mathcal{D})}}{p_{\btheta}(y|\bx)}\bigg]\\
        & -  \Expect[\bigg]{{p(\btheta|\mathcal{D})}}{\mathbb{H}\Big[p_{\btheta}(y|\bx)\Big]}\bigg)\bigg|_{\bx = \balpha\odot\bx^\prime}\bbnorm_2 = 0
    \end{aligned}\end{equation*}
    
\end{lemma}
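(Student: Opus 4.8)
The plan is to reduce this statement to the two results already in hand: Lemma \ref{lemma:asymptotic-predictive-entropy} for the total-uncertainty term and Proposition \ref{proposition:overconfidence-softmax} for the data-uncertainty term. Writing the approximate mutual information as the difference of predictive entropy and expected entropy, I would first invoke linearity of the gradient to split the gradient into $\nabla_{\bx}\mathbb{H}[\Expect{p(\btheta|\mathcal{D})}{p_{\btheta}(y|\bx)}] - \nabla_{\bx}\Expect{p(\btheta|\mathcal{D})}{\mathbb{H}[p_{\btheta}(y|\bx)]}$ and then apply the triangle inequality, so that the $l_2$ norm is bounded above by the sum of the norms of these two gradients. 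The first term vanishes in the limit directly by Lemma \ref{lemma:asymptotic-predictive-entropy}, so all the real work concentrates on the expected-entropy (data uncertainty) term.

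For that second term I would follow exactly the scaffolding of Lemma \ref{aggregation-theorem}: push the gradient inside the expectation by linearity, then use Jensen's inequality (the norm is convex) to obtain $\|\Expect{p(\btheta|\mathcal{D})}{\nabla_{\bx}\mathbb{H}[p_{\btheta}(y|\bx)]}\|_2 \le \Expect{p(\btheta|\mathcal{D})}{\|\nabla_{\bx}\mathbb{H}[p_{\btheta}(y|\bx)]\|_2}$. It then suffices to control the per-instance quantity $\nabla_{\bx}\mathbb{H}[p_{\btheta}(y|\bx)]$, and since the no-zero-entry assumption transfers to each instance $k$, I can argue for a single network as in Proposition \ref{proposition:overconfidence-softmax}. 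Expanding $\mathbb{H}[p_{\btheta}(y|\bx)] = -\sum_c p_{\btheta}(y=c|\bx)\log p_{\btheta}(y=c|\bx)$ and applying the product rule gives $\nabla_{\bx}\mathbb{H}[p_{\btheta}(y|\bx)] = -\sum_c (1 + \log p_{\btheta}(y=c|\bx))\,\nabla_{\bx}p_{\btheta}(y=c|\bx)$, and one more triangle inequality over $c$ reduces the whole lemma to showing that each product $(1+\log p_{\btheta}(y=c|\bx))\,\nabla_{\bx}p_{\btheta}(y=c|\bx)$ tends to $\bm{0}$.

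The main obstacle is precisely this last product, which is an indeterminate $0\cdot\infty$ form: Proposition \ref{proposition:overconfidence-softmax} guarantees $\nabla_{\bx}p_{\btheta}(y=c|\bx)\to\bm{0}$, but the scalar prefactor $1+\log p_{\btheta}(y=c|\bx)$ diverges to $-\infty$ for any class whose limiting probability is $0$, so the ``pull the scalar out of the norm'' shortcut used in Lemma \ref{lemma:asymptotic-predictive-entropy} is not by itself legitimate here. To close this gap rigorously I would make the decay rate explicit: inside the unique PUP $\mathcal{P}(\bx^\prime,d)$ the logits are affine in $\alpha_d$, so each $p_{\btheta}(y=c|\bx)$ either tends to a strictly positive constant (in which case $1+\log p$ stays bounded and $\nabla_{\bx}p\to\bm{0}$ already closes it) or decays like $\exp(-\gamma_c\alpha_d)$ for some $\gamma_c>0$. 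In the latter case the softmax value and its derivatives, and hence $\nabla_{\bx}p_{\btheta}(y=c|\bx)$, decay exponentially in $\alpha_d$ by the same saturation mechanism established in Lemmas \ref{lemma:softmax-properties} and \ref{lemma:growth-rate-softmax}, while $|\log p_{\btheta}(y=c|\bx)|$ grows only linearly; exponential decay dominates linear growth, so the product still vanishes. Summing the finitely many vanishing terms over $c$ and over the $K$ instances, and using the lower bound of $0$ on the norm, completes the argument.
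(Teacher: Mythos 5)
Your proposal is correct and, at the structural level, follows the same decomposition as the paper's proof: split the mutual information into the predictive-entropy term (dispatched by Lemma \ref{lemma:asymptotic-predictive-entropy}) and the expected-entropy term, push the gradient through the expectation, expand $\nabla_{\bx}\mathbb{H}[p_{\btheta}(y|\bx)] = -\sum_c (1+\log p_{\btheta}(y=c|\bx))\nabla_{\bx}p_{\btheta}(y=c|\bx)$ by the chain rule, and invoke Proposition \ref{proposition:overconfidence-softmax} on the per-class gradients. The genuine difference is in the final step, and it is to your credit: the paper's proof simply underbraces $\nabla_{\bx}p_{\btheta}(y=c|\bx) \to \bm{0}$ and declares the second term zero, silently treating $1+\log p_{\btheta}(y=c|\bx)$ as a harmless bounded factor. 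As you observe, this is an indeterminate $0\cdot\infty$ form --- indeed by Proposition \ref{proposition:softmax-limit} all but one class probability generically tends to $0$, so the log factor diverges for those classes, and the same lacuna appears in the paper's proof of Lemma \ref{lemma:asymptotic-predictive-entropy}, where the divergent quantity $1+\log\,\mathbb{E}[p_{\btheta}(y=c|\bx)]$ is pulled out of the norm as a ``scalar.'' Your repair is sound: inside the limiting PUP the logits are affine in $\alpha_d$, so each class probability either tends to a positive constant (log bounded, gradient vanishing by Proposition \ref{proposition:overconfidence-softmax}) or decays as $\exp(-\gamma_c\alpha_d)$ with $\gamma_c>0$; since $|\partial\bar{\sigma}(f_{\btheta}(\bx))_c/\partial f_{\btheta}(\bx)_{c^\prime}| \le \bar{\sigma}(f_{\btheta}(\bx))_c$ and $\nabla_{\bx}f_{\btheta}(\bx)_{c^\prime}$ is a constant row of $\bV$ on the polytope, the gradient norm inherits the exponential decay while $|\log p_{\btheta}(y=c|\bx)|$ grows only linearly in $\alpha_d$, so the product vanishes --- precisely the exponential-versus-linear comparison the paper itself makes in Lemma \ref{lemma:growth-rate-softmax} but fails to redeploy here. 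Note that since your first term relies on Lemma \ref{lemma:asymptotic-predictive-entropy} as a black box, to be fully self-contained you would want to remark that the identical rate argument patches that lemma too (an average of $K$ exponentially decaying probabilities decays exponentially at the slowest rate, again dominating the logarithm). In short: correct, same skeleton as the paper, but your version closes a real gap that the published proof leaves open.
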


\begin{proof}
    \begin{align*}
        & \lim\limits_{\alpha_d \to \infty}\bbnorm\nabla_{\bx}\bigg(\mathbb{H}\bigg[\Expect[\Big]{{p(\btheta|\mathcal{D})}}{p_{\btheta}(y|\bx)}\bigg]\\
        & - \Expect[\bigg]{{p(\btheta|\mathcal{D})}}{\mathbb{H}\Big[p_{\btheta}(y|\bx)\Big]}\bigg)\bigg|_{\bx = \balpha\odot\bx^\prime}\bbnorm_2
        \intertext{Linearity of gradients:}
        \le & \lim\limits_{\alpha_d \to \infty}\bbnorm\bigg(\nabla_{\bx}\mathbb{H}\bigg[\Expect[\Big]{{p(\btheta|\mathcal{D})}}{p_{\btheta}(y|\bx)}\bigg]\\
        & - \nabla_{\bx}\Expect[\bigg]{{p(\btheta|\mathcal{D})}}{\mathbb{H}\Big[p_{\btheta}(y|\bx)\Big]}\bigg)\bigg|_{\bx = \balpha\odot\bx^\prime}\bbnorm_2 \\
        \intertext{Linearity of gradients on second part of difference:}
        = & \lim\limits_{\alpha_d \to \infty}\bbnorm\bigg(\nabla_{\bx}\mathbb{H}\bigg[\Expect[\Big]{{p(\btheta|\mathcal{D})}}{p_{\btheta}(y|\bx)}\bigg] \\
        & - \Expect[\bigg]{{p(\btheta|\mathcal{D})}}{\nabla_{\bx}\mathbb{H}\Big[p_{\btheta}(y|\bx)\Big]}\bigg)\bigg|_{\bx = \balpha\odot\bx^\prime}\bbnorm_2 \\
        \intertext{Applying chain rule and intermediate result of Proposition \ref{proposition:overconfidence-softmax}:}
        = & \lim\limits_{\alpha_d \to \infty}\bbnorm\nabla_{\bx}\mathbb{H}\bigg[\Expect[\Big]{{p(\btheta|\mathcal{D})}}{p_{\btheta}(y|\bx)}\bigg]\bigg|_{\bx = \balpha\odot\bx^\prime} \\
        & - \Expect[\bigg]{{p(\btheta|\mathcal{D})}}{\sum_{c=1}^C\Big( 1 + \log p_{\btheta}(y=c|\bx)\Big)\underbrace{\mystrut{0.275cm}{\nabla_{\bx}p_{\btheta}(y=c|\bx)}}_{= \bm{0} \text{  Proposition \ref{proposition:overconfidence-softmax})}}}\\
        & \bigg|_{\bx = \balpha\odot\bx^\prime}\bbnorm_2 \\
        \intertext{Because this lets the entire second term become the zero vector in the limit, the remaining part reduces to the case proven in Lemma \ref{lemma:asymptotic-predictive-entropy}:}
        = & \underbrace{\lim\limits_{\alpha_d \to \infty}\bbnorm\nabla_{\bx}\mathbb{H}\bigg[\Expect[\Big]{{p(\btheta|\mathcal{D})}}{p_{\btheta}(y|\bx)}\bigg]\bigg|_{\bx = \balpha\odot\bx^\prime} \bbnorm_2}_{\text{Lemma } \ref{lemma:asymptotic-predictive-entropy}} = 0 \\
    \end{align*}
   As the final result is an upper bound to the original expression and the $l_2$ norm provides a lower bound of $0$, this proves the lemma.
\end{proof}

\section{Synthetic Data Experiments}\label{app:synthetic-data-experiments}

We perform our experiments on the half-moons dataset, using the corresponding function to generate the dataset in \texttt{scikit-learn} \citep{pedregosa2011scikit}, producing $500$ samples for training and $250$ samples for validation using a noise level of $.125$.

We do hyperparameter search using the ranges listed in Table \ref{tab:hyperparameters-search-space}, settling on the values given in Table \ref{tab:best_hyperparameters} after $200$ evaluation runs per model (for \texttt{NN} and \texttt{MCDropout}; the hyperparameters found for \texttt{NN} were then used for \texttt{PlattScalingNN}, \texttt{AnchoredNNEnsemble}, \texttt{NNEnsemble} as well). We also performed a similar hyperparameter search for the Bayes-by-backprop \citep{blundell2015weight} model, which seemed to not have yielded a suitable configuration even after extensive search, which is why results were omitted here. All models were trained with a batch size of $64$ and for $20$ epochs at most using early stopping with a patience of $5$ epochs and the Adam optimizer. 

All of the plots produced can be found in Figure \ref{fig:app-single-pred-nn} and \ref{fig:app-multiple-pred-nn}, where uncertainty values where plotted for different ranges depending on the metric (variance: $0$-$0.25$; (negative) entropy: $0$-$1$; mutual information: $4-5$; (1 -) max. prob: $0 - 0.5$), with deep purple signifying high uncertainty and white signifying low uncertainty / high certainty.

\begin{table}[h!]
    \centering
    \caption{Best hyperparameters found on the half-moon dataset.}
    \resizebox{0.9\columnwidth}{!}{%
        \begin{tabular}{rrl}
            \toprule
            Model & Hyperparameter & Value  \\
            \midrule
            \texttt{NN} & \texttt{hidden\_sizes} & $[25, 25, 25]$  \\
            \texttt{NN} & \texttt{dropout\_rate} & $0.014552$ \\
            \texttt{NN} & \texttt{lr} & $0.000538$  \\
            \texttt{MCDropout} & \texttt{hidden\_sizes} & $[25, 25, 25, 25]$  \\
            \texttt{MCDropout} & \texttt{dropout\_rate} & $0.205046$  \\
            \texttt{MCDropout} & \texttt{lr} & $0.000526$\\
            \bottomrule
        \end{tabular}%
    }
    \label{tab:best_hyperparameters}
\end{table}

\begin{table}[ht!]
    \caption{Distributions or options that hyperparameters were sampled from during the random hyperparameter search.} 
    \resizebox{\columnwidth}{!}{%
        \begin{tabular}{rrrl}
            \toprule
            Hyperparameter & Description & Chosen from \\
            \midrule
            \tt{hidden\_sizes} & Hidden layers & 1-5 layers of $15$, $20$, $25$ \\
            \tt{lr} & Learning rate &  $\mathcal{U}(\log(10^{-4}), \log(0.1))$ \\
            \tt{dropout\_rate} & Dropout rate & $\mathcal{U}(0, 0.5)$ \\ 
            \bottomrule
        \end{tabular}%
    }
     \label{tab:hyperparameters-search-space}
\end{table}

We can see in Figure \ref{fig:app-single-pred-nn} that maximum probability and predictive entropy behave quite similarly, forming a tube-like region of high uncertainty along what appear to be the decision boundary. In both cases, the region appears to be sharper in the case of maximum probability (right column) and also more defined after additional temperature scaling (bottom row). For all models and metrics, we see that the gradient magnitude decreases and approaches zero away from the training data (yellow / green plots), except for the cases discussed in Section \ref{sec:experiments}.

\begin{figure}[h!]
    \centering
    \resizebox{\columnwidth}{!}{%
    \begin{tabular}{rll}
         & \multicolumn{1}{c}{Predictive Entropy} & \multicolumn{1}{c}{Maximum probability} \\
        \rotatebox{90}{\hspace{-0.5cm}\texttt{NN}}             & \includegraphics[width=0.28\textwidth]{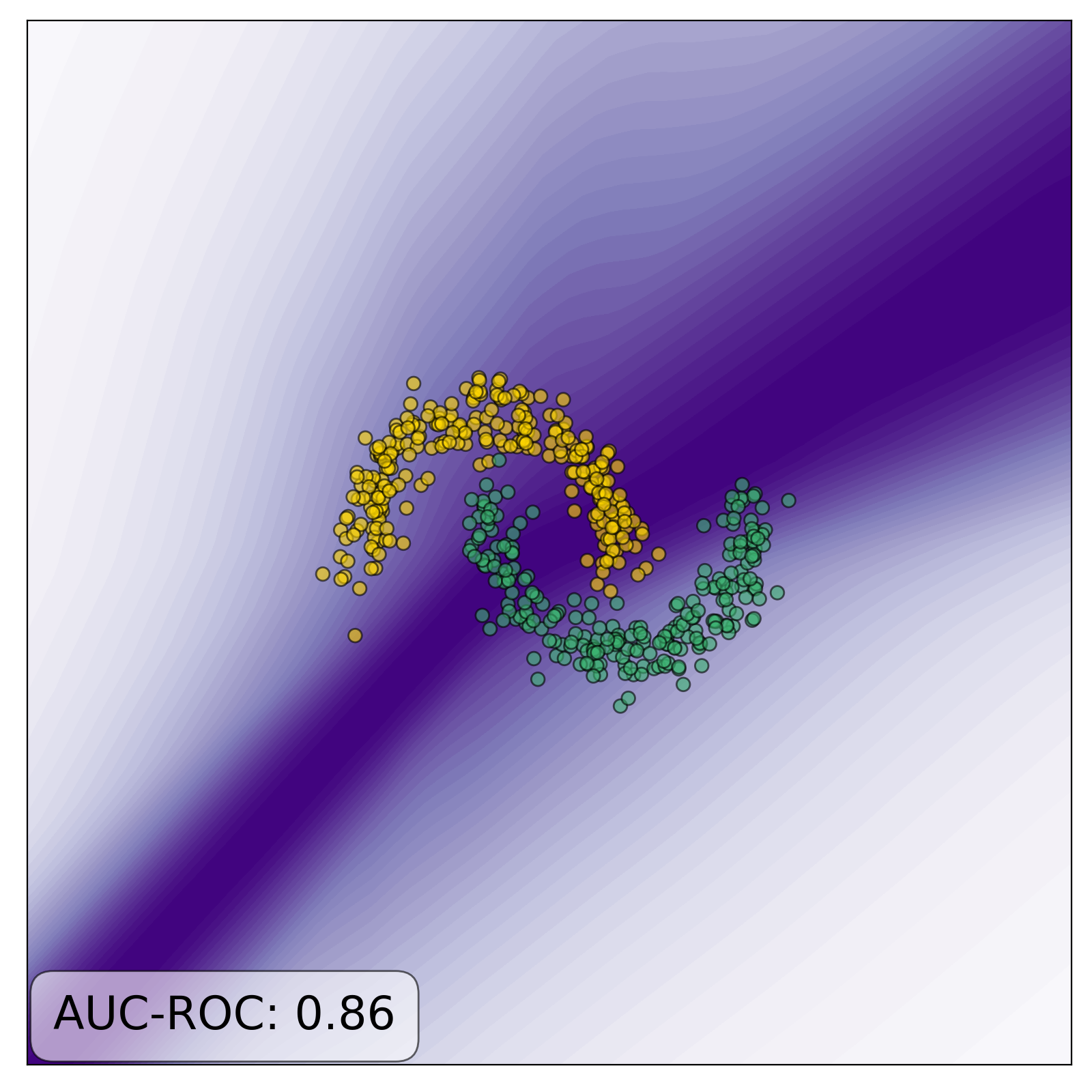} & \includegraphics[width=0.28\textwidth]{img/nn_max_prob.png} \\
        & \includegraphics[width=0.3\textwidth]{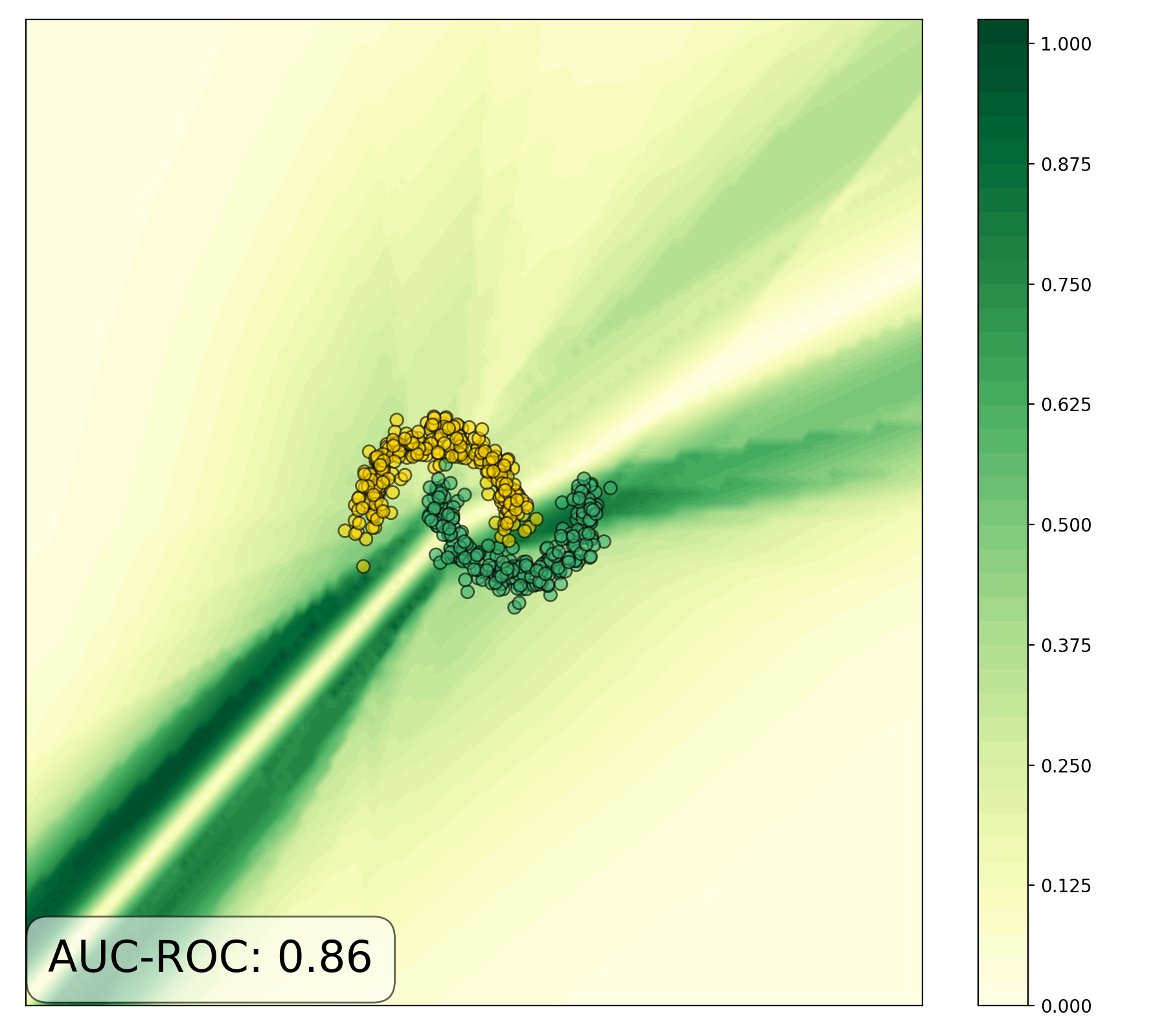} & \includegraphics[width=0.3\textwidth]{img/nn_max_prob_grads.png} \\
        \midrule
        \rotatebox{90}{\hspace{-1.5cm}\texttt{PlattScalingNN}} & \includegraphics[width=0.28\textwidth]{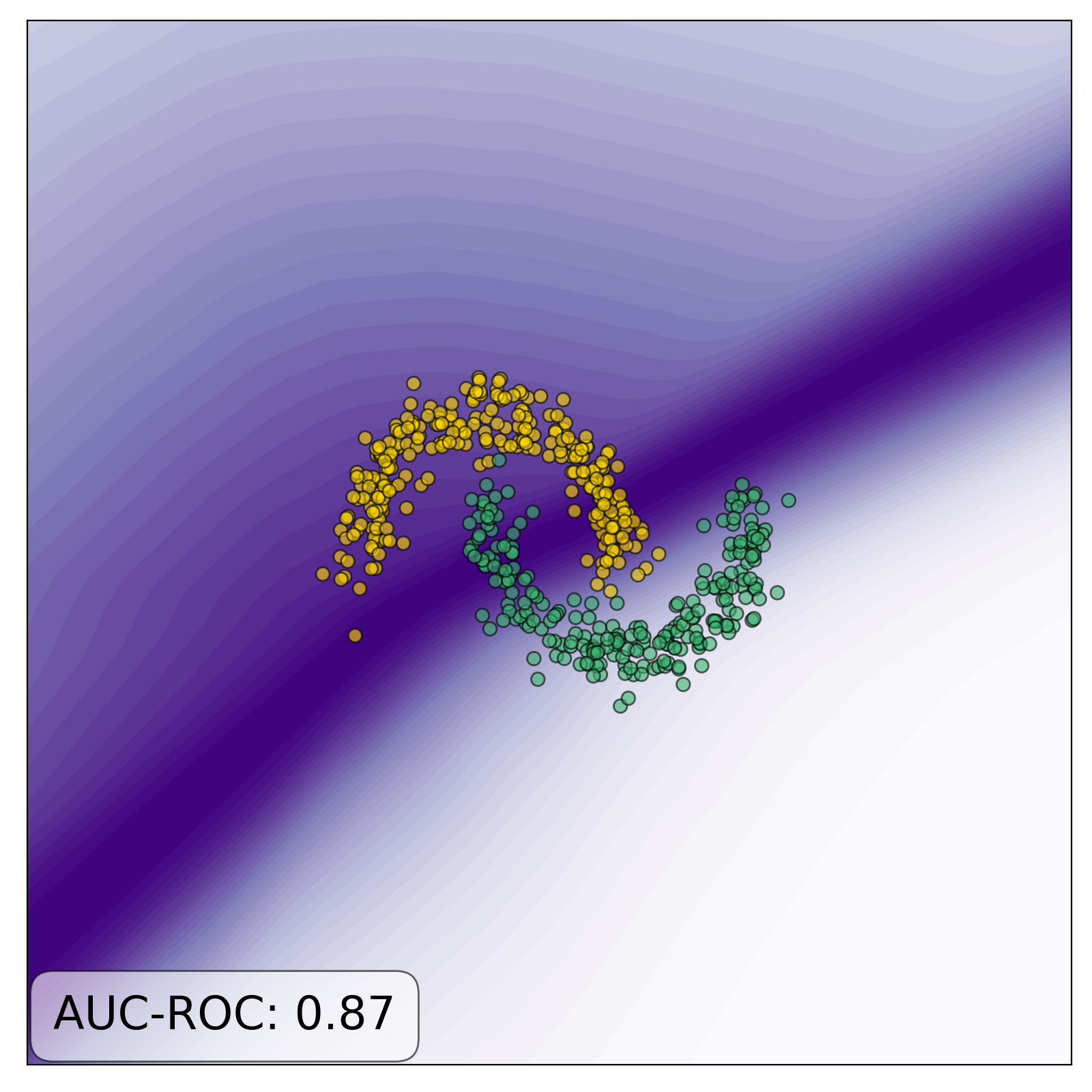} & \includegraphics[width=0.28\textwidth]{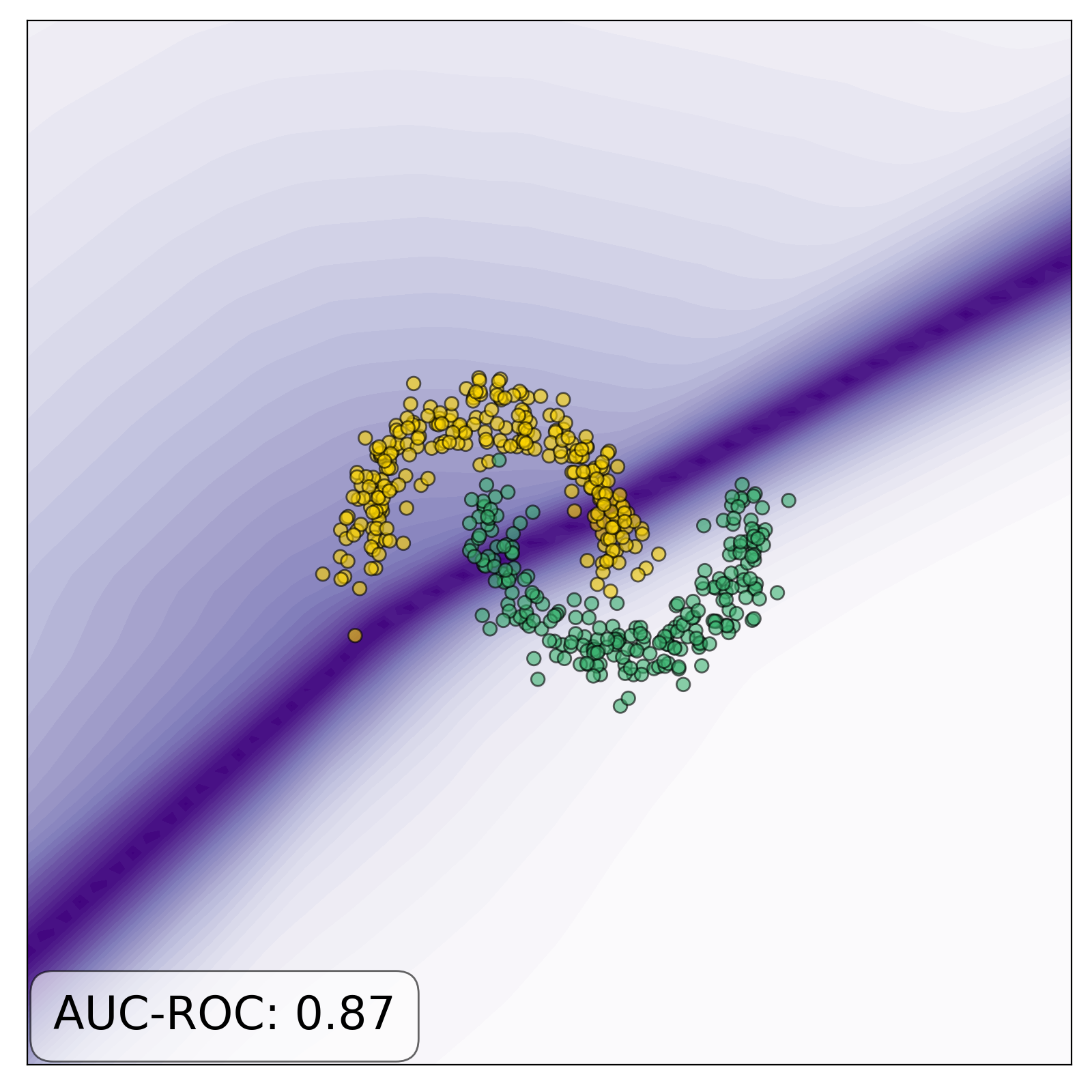}  \\
        & \includegraphics[width=0.3\textwidth]{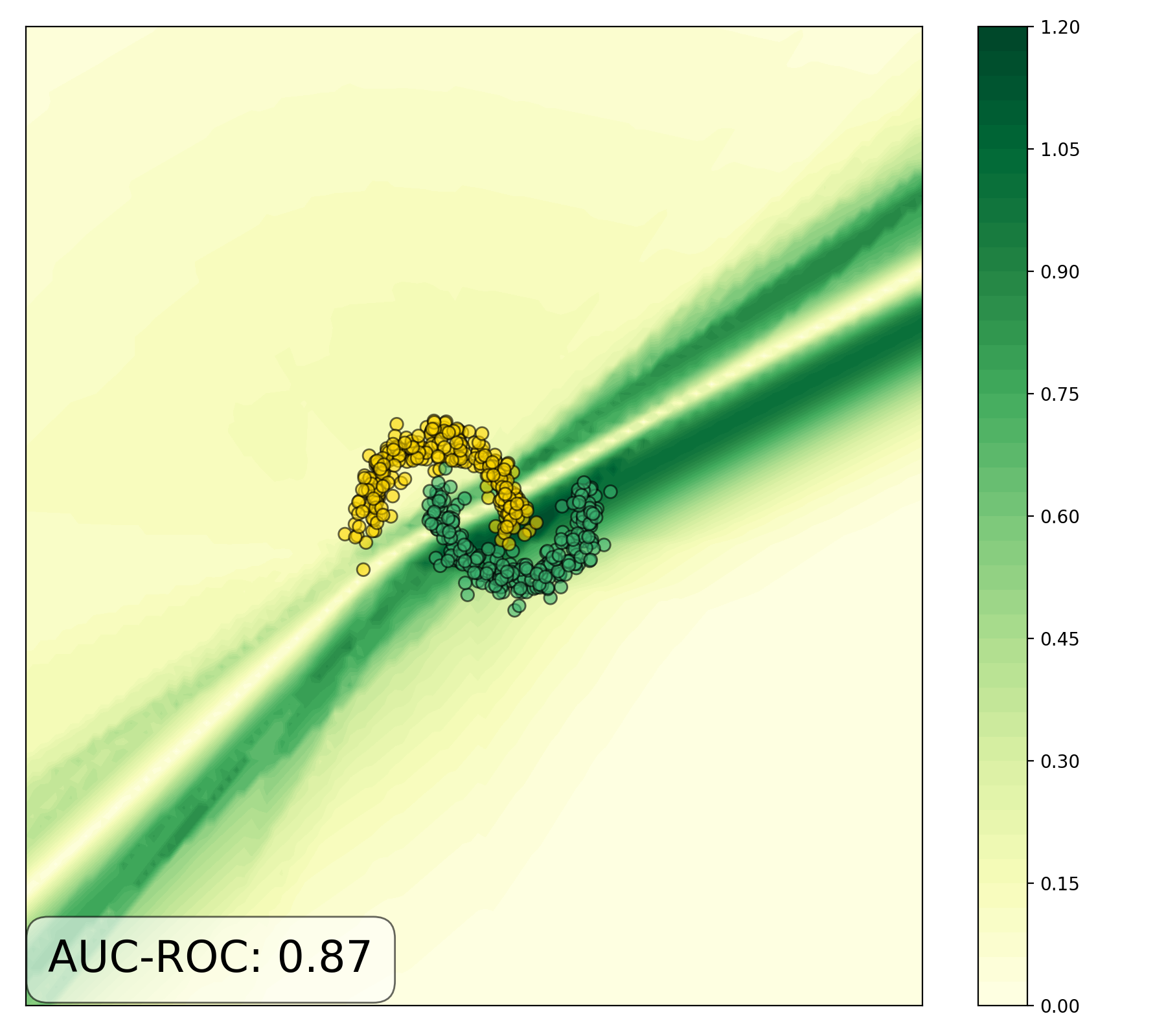} & \includegraphics[width=0.3\textwidth]{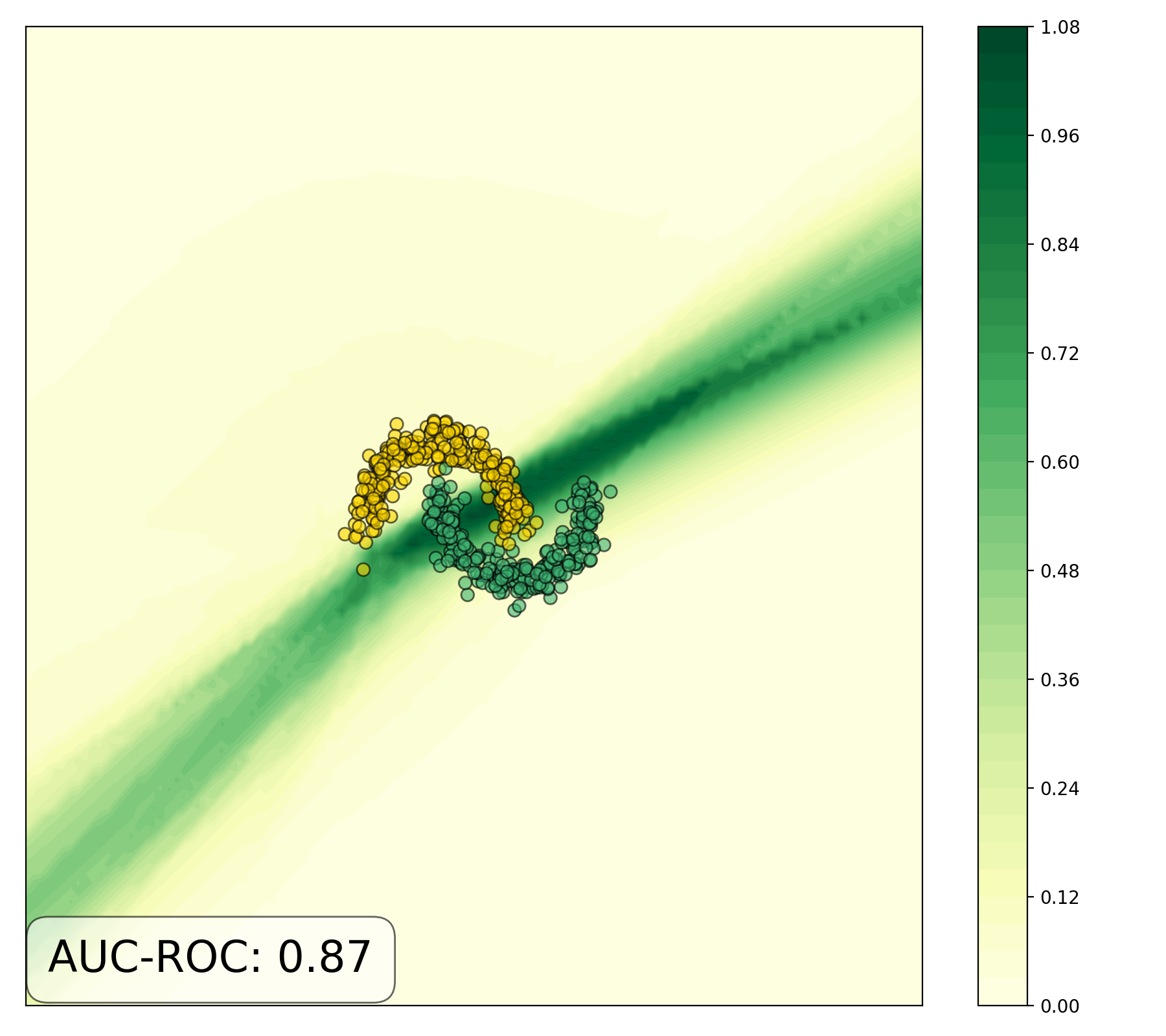}  \\
    \end{tabular}%
    }
    \caption{Uncertainty measured by different metrics for single-instance models (purple plots) and their gradient magnitude (yellow / green plots).}
    \label{fig:app-single-pred-nn}
\end{figure}

In the next figure, Figure \ref{fig:app-multiple-pred-nn}, we observe the uncertainty surfaces for models using multiple network instances. 
For the remaining models it is interesting to see that class variance (left column) didn't seem to produce significantly different values across the feature space except for the anchored ensemble. For predictive entropy (central column), we can see a similar behaviour compared to the single-instances models. Interestingly, the ``fuzziness'' of the high-uncertainty region increases with the ensemble and becomes increasing large with its anchored variant. Nevertheless, regions with static levels of certainty still exist in this case. For the mutual information plots (right column), epistemic uncertainty is lowest around the training data, where the model is best specified, which creates another tube-like region of high confidence even where there is no training data, an effect that is reduced with the neural ensemble and almost completely solved by the anchored ensemble. For all metrics, we see a magnitude close to zero for the uncertainty gradient away from the training data, except for the decision boundaries, as discussed in Section \ref{sec:experiments}.

\begin{figure*}[h!]
    \centering
    \resizebox{1.6\columnwidth}{!}{%
        \begin{tabular}{rlll}
             & \multicolumn{1}{c}{Class variance} & \multicolumn{1}{c}{Predictive Entropy} & \multicolumn{1}{c}{Mutual Information} \\
             \multirow{2}{*}{\rotatebox{90}{\texttt{MCDropout}\hspace{-0.5cm}}}        & \includegraphics[width=0.24\textwidth]{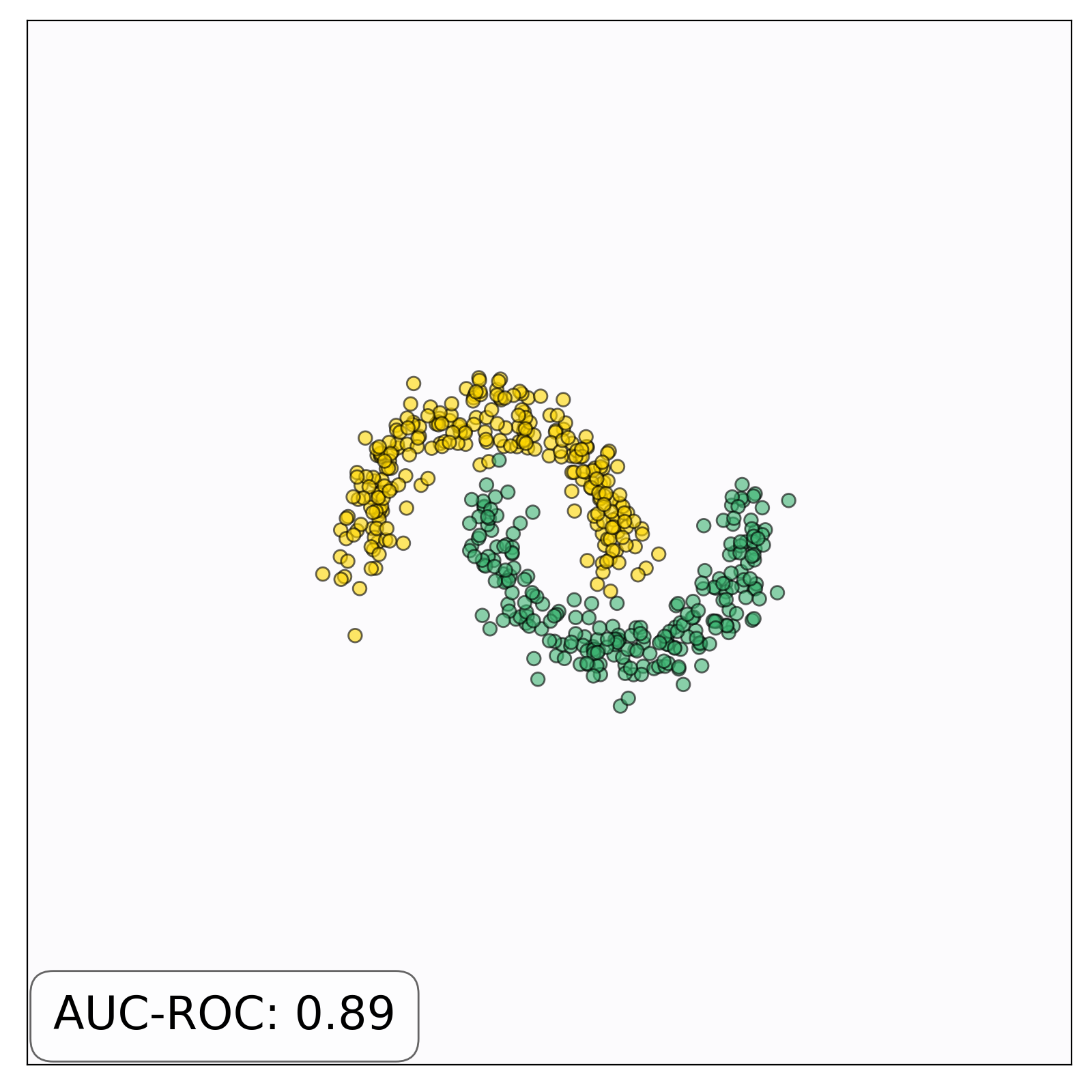} & \includegraphics[width=0.24\textwidth]{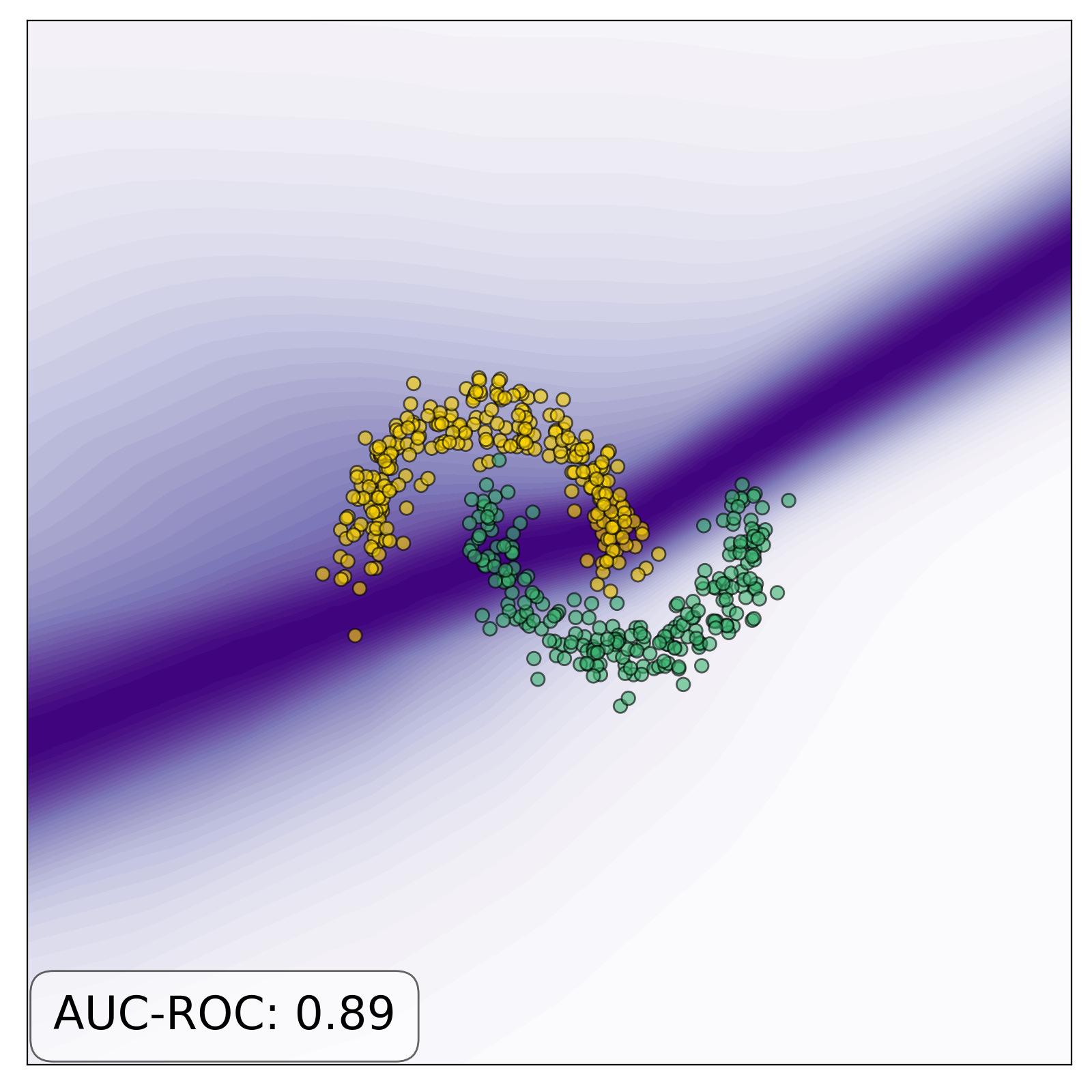}  & \includegraphics[width=0.24\textwidth]{img/mcdropout_mutual_information.png} \\
             & \includegraphics[width=0.25\textwidth]{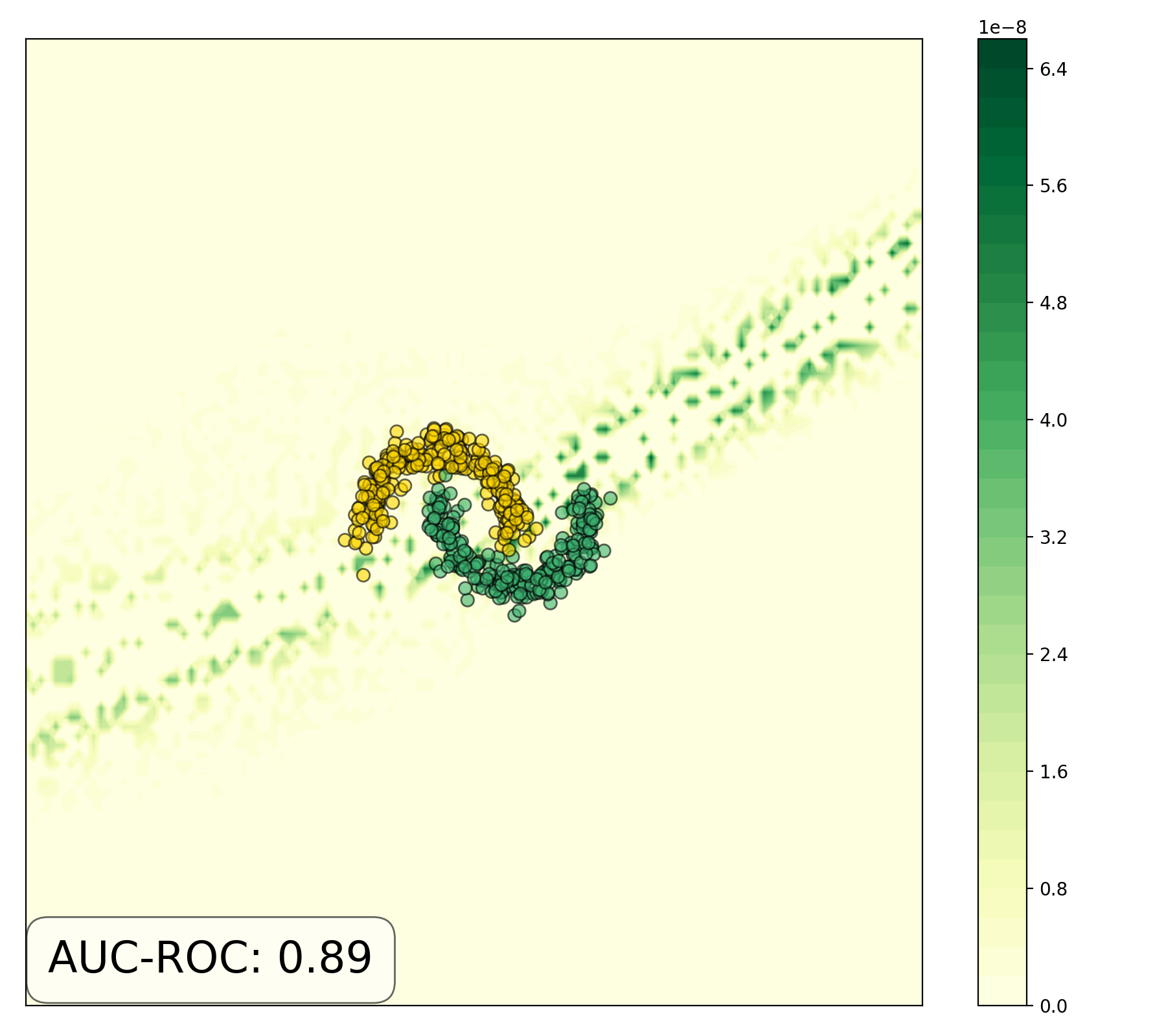} & \includegraphics[width=0.25\textwidth]{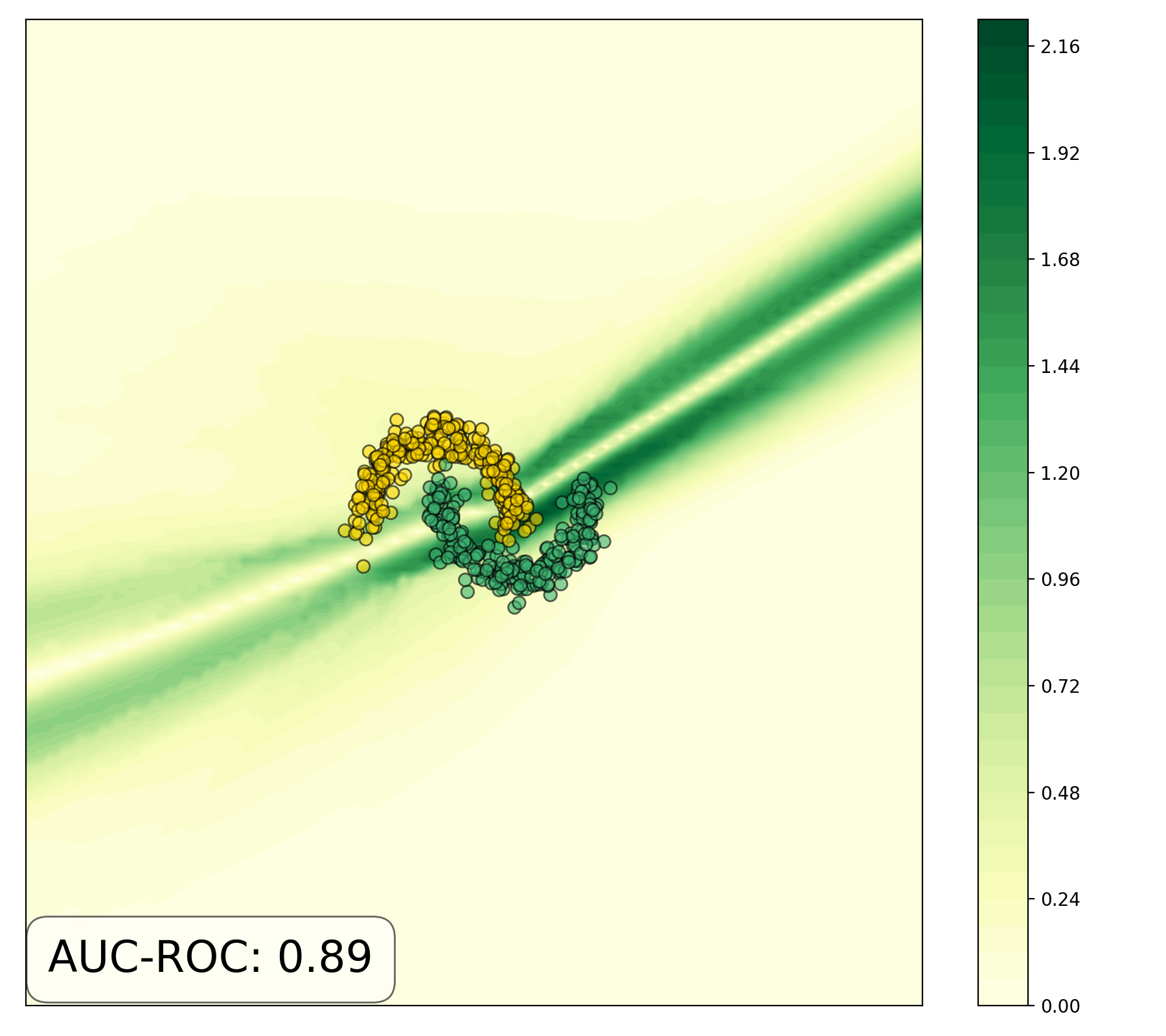}  & \includegraphics[width=0.25\textwidth]{img/mcdropout_mutual_information_grads.png} \\
             \midrule
             \multirow{2}{*}{\rotatebox{90}{\texttt{NNEnsemble}}\hspace{-3.5cm}}         & \includegraphics[width=0.24\textwidth]{img/nnensemble_var.png} & \includegraphics[width=0.24\textwidth]{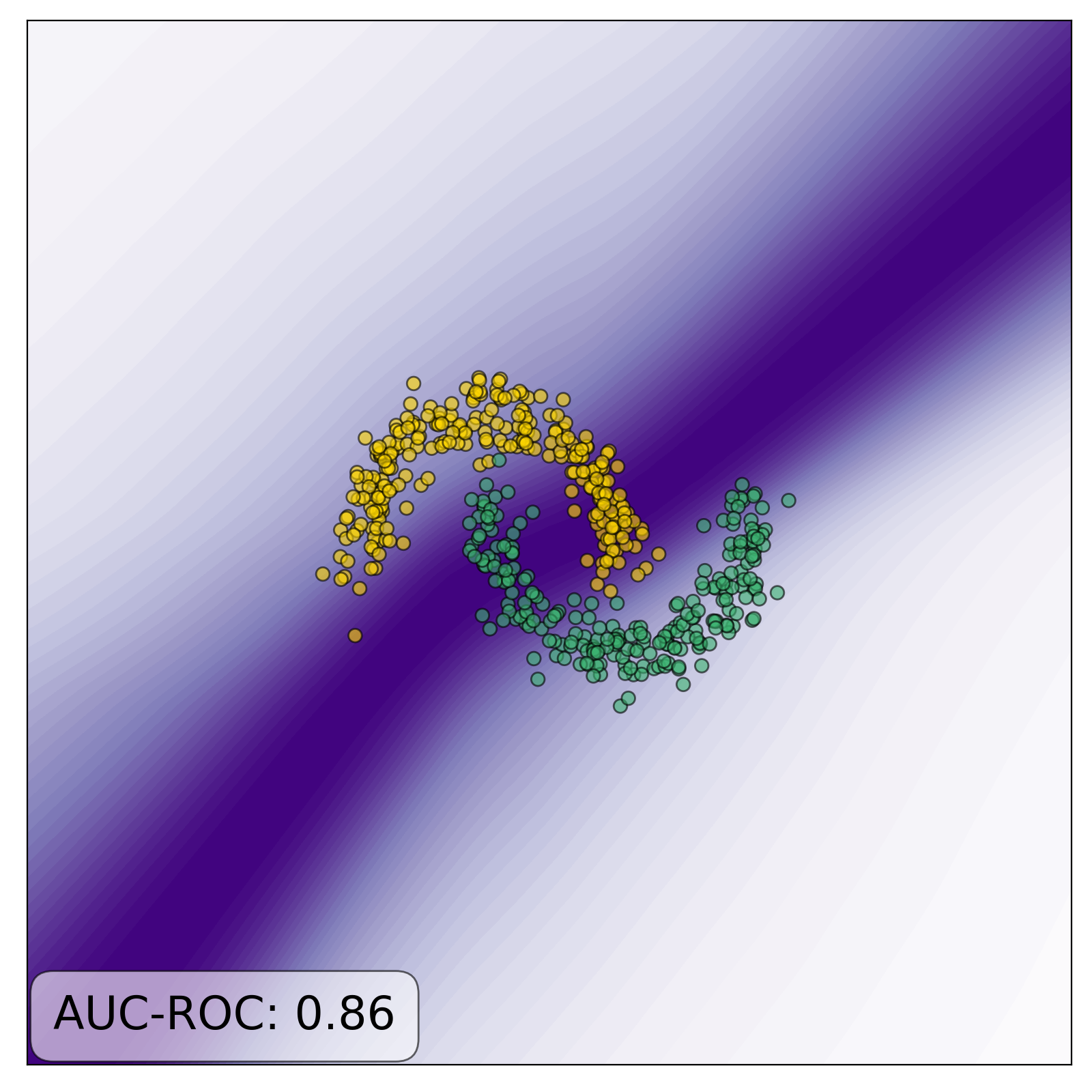}  & \includegraphics[width=0.24\textwidth]{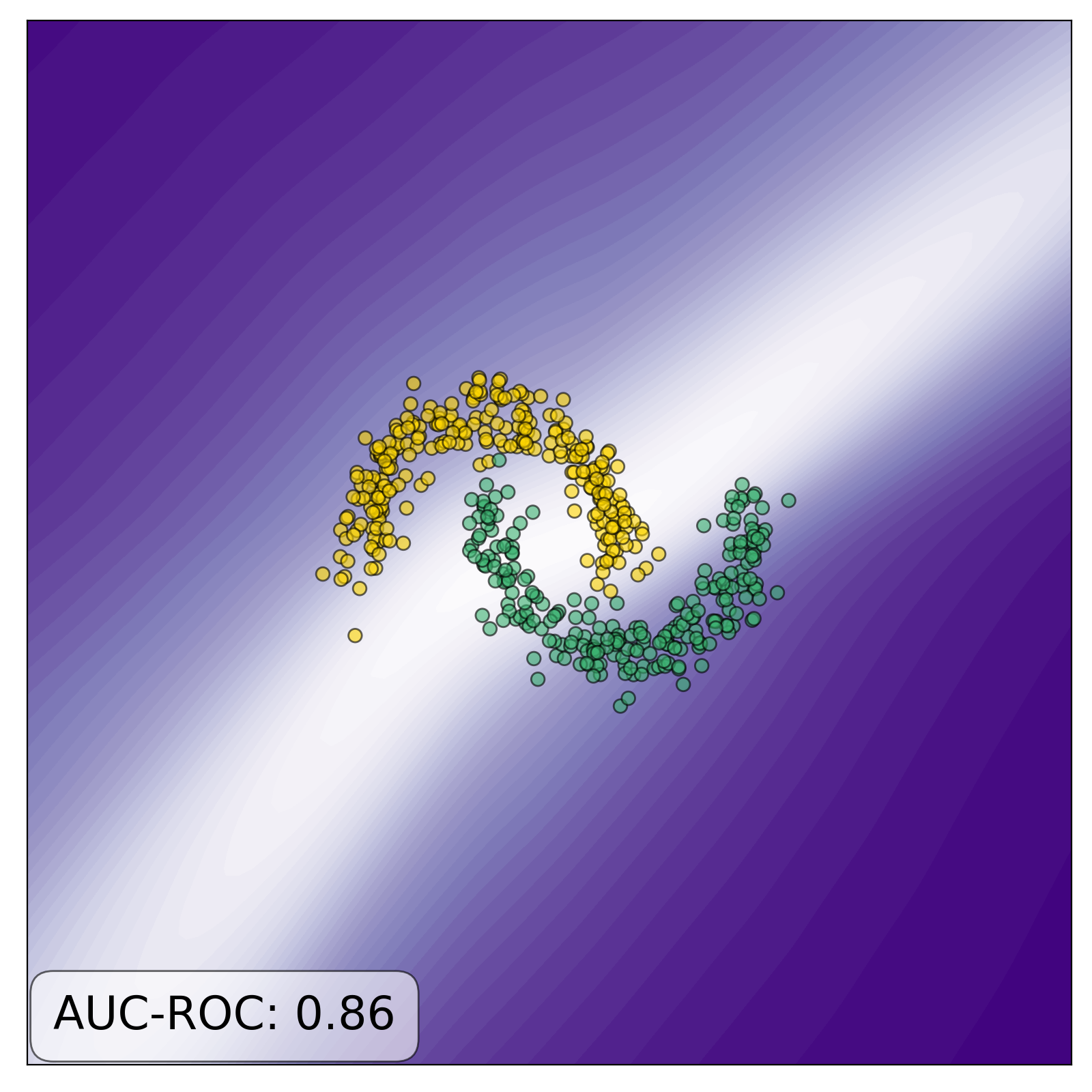} \\
              & \includegraphics[width=0.25\textwidth]{img/nnensemble_var_grads.png} & \includegraphics[width=0.25\textwidth]{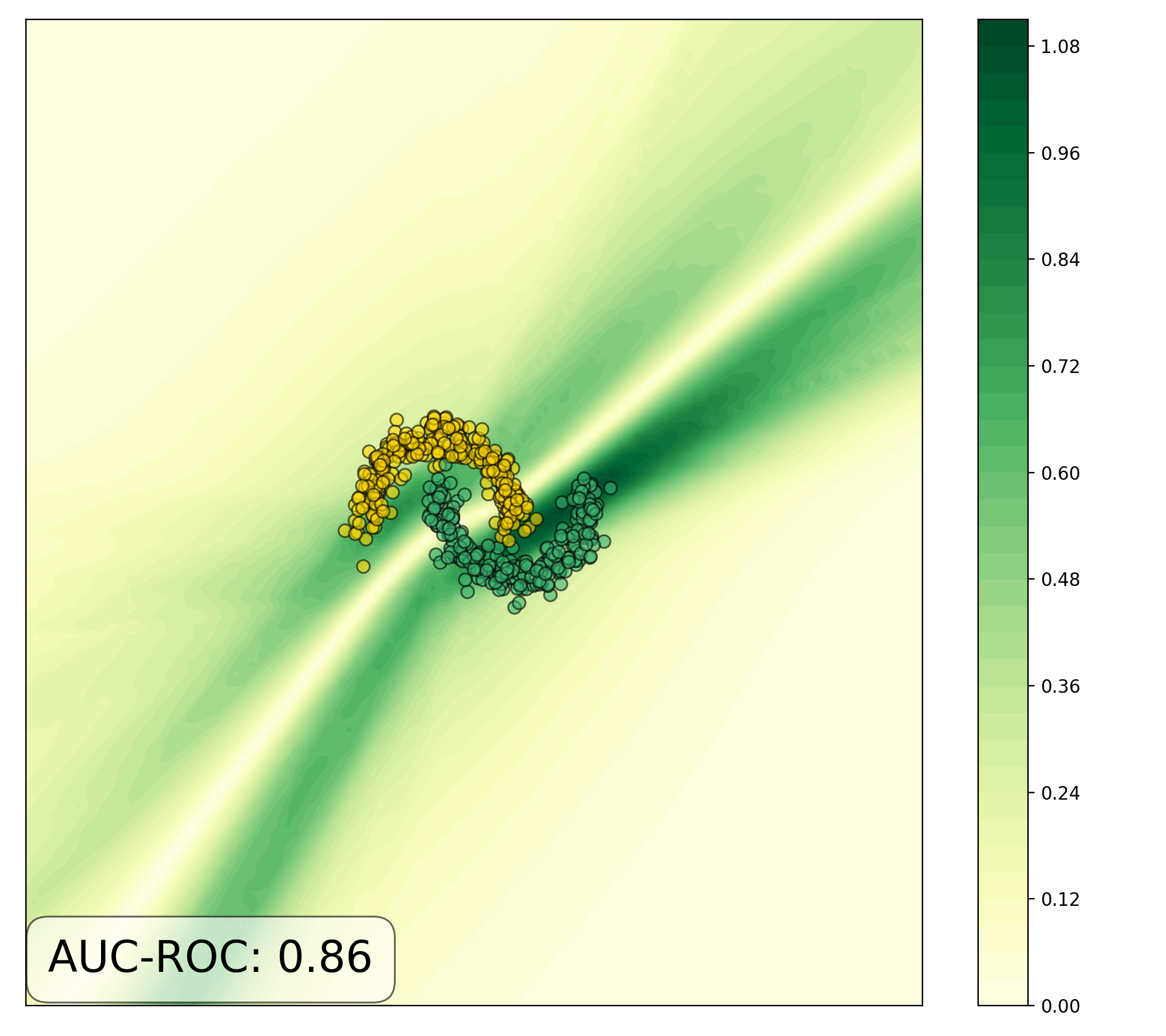}  & \includegraphics[width=0.25\textwidth]{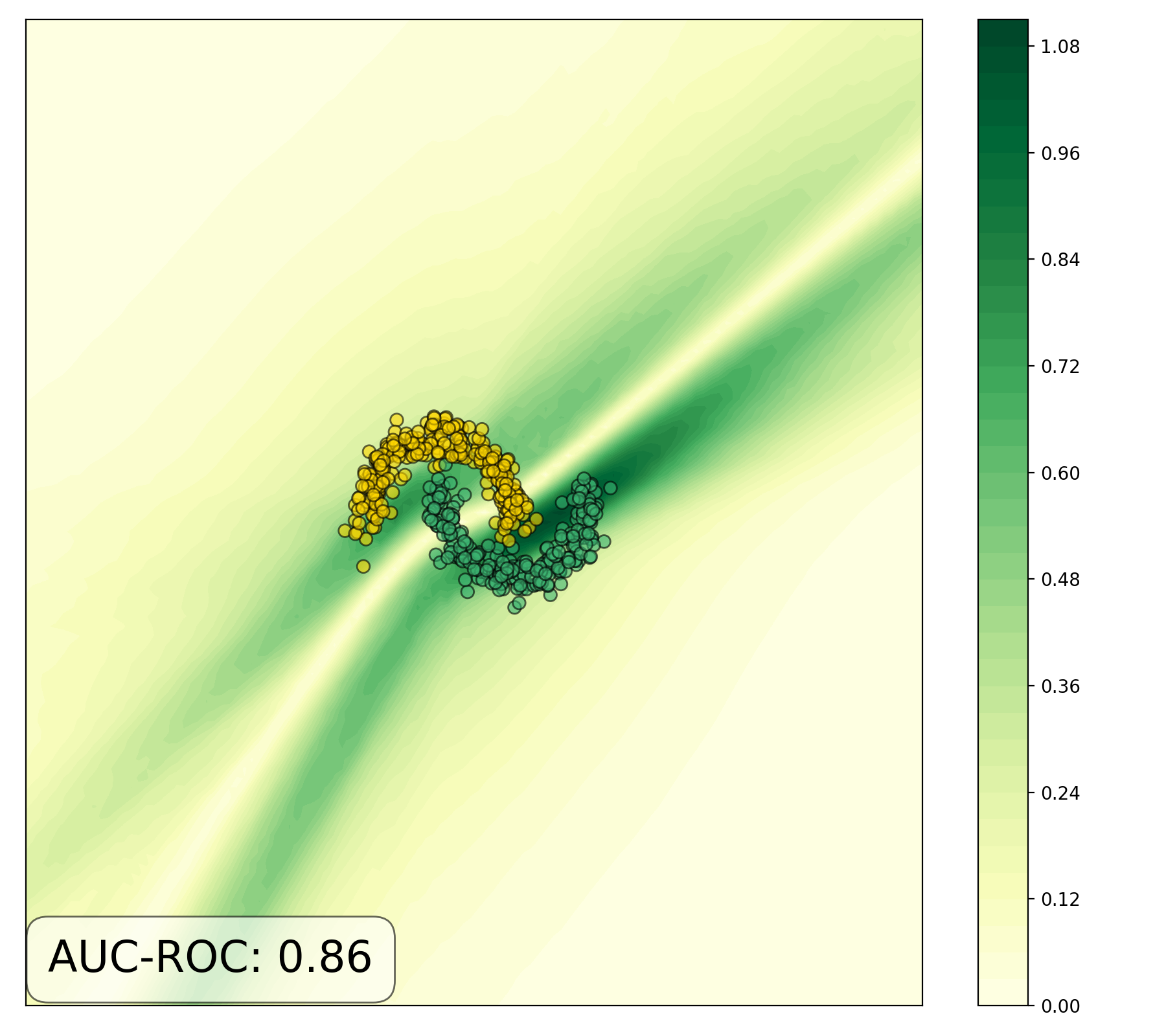} \\
              \midrule
            \rotatebox{90}{\hspace{-1.5cm}\texttt{AnchoredNNEnsemble}}  & \includegraphics[width=0.24\textwidth]{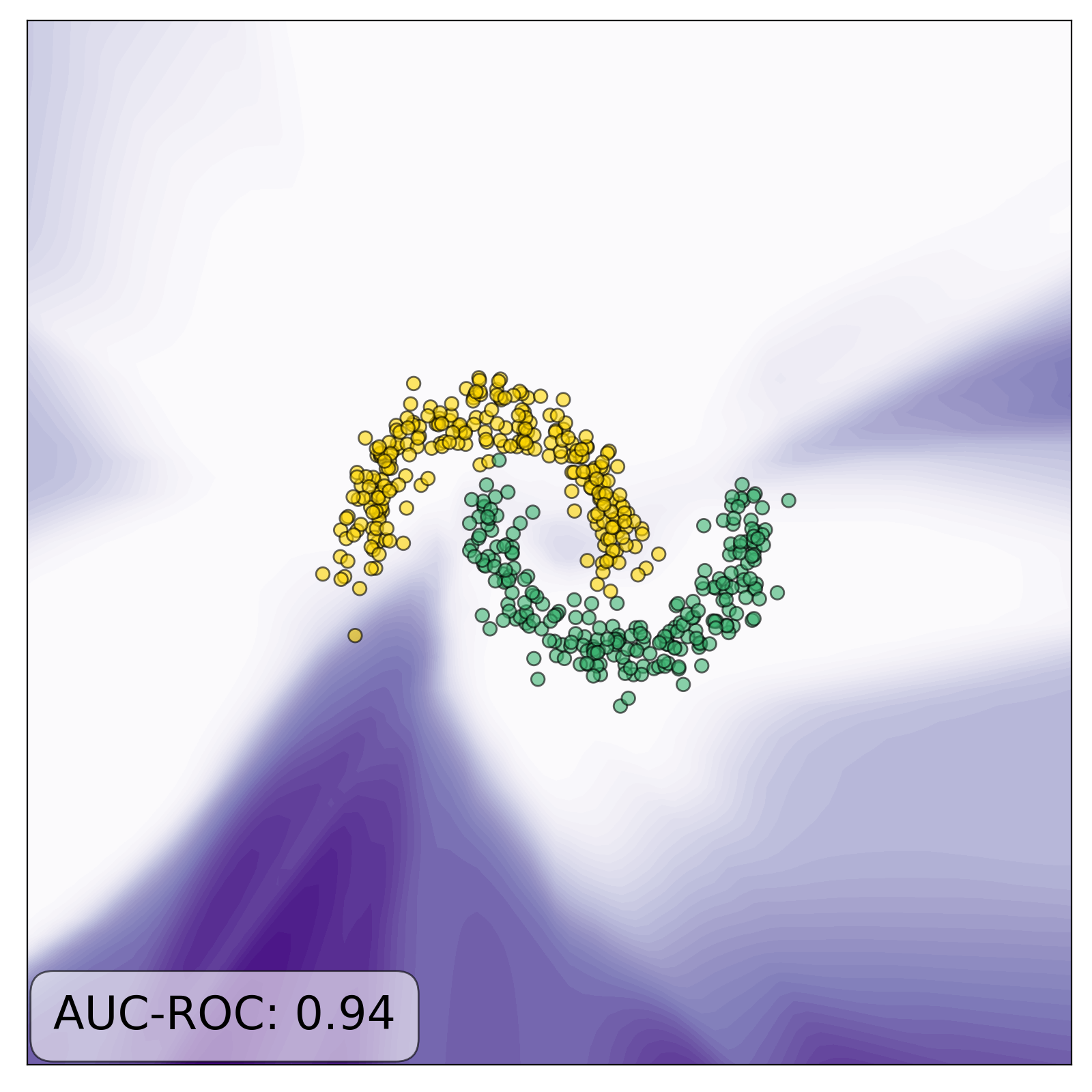} & \includegraphics[width=0.24\textwidth]{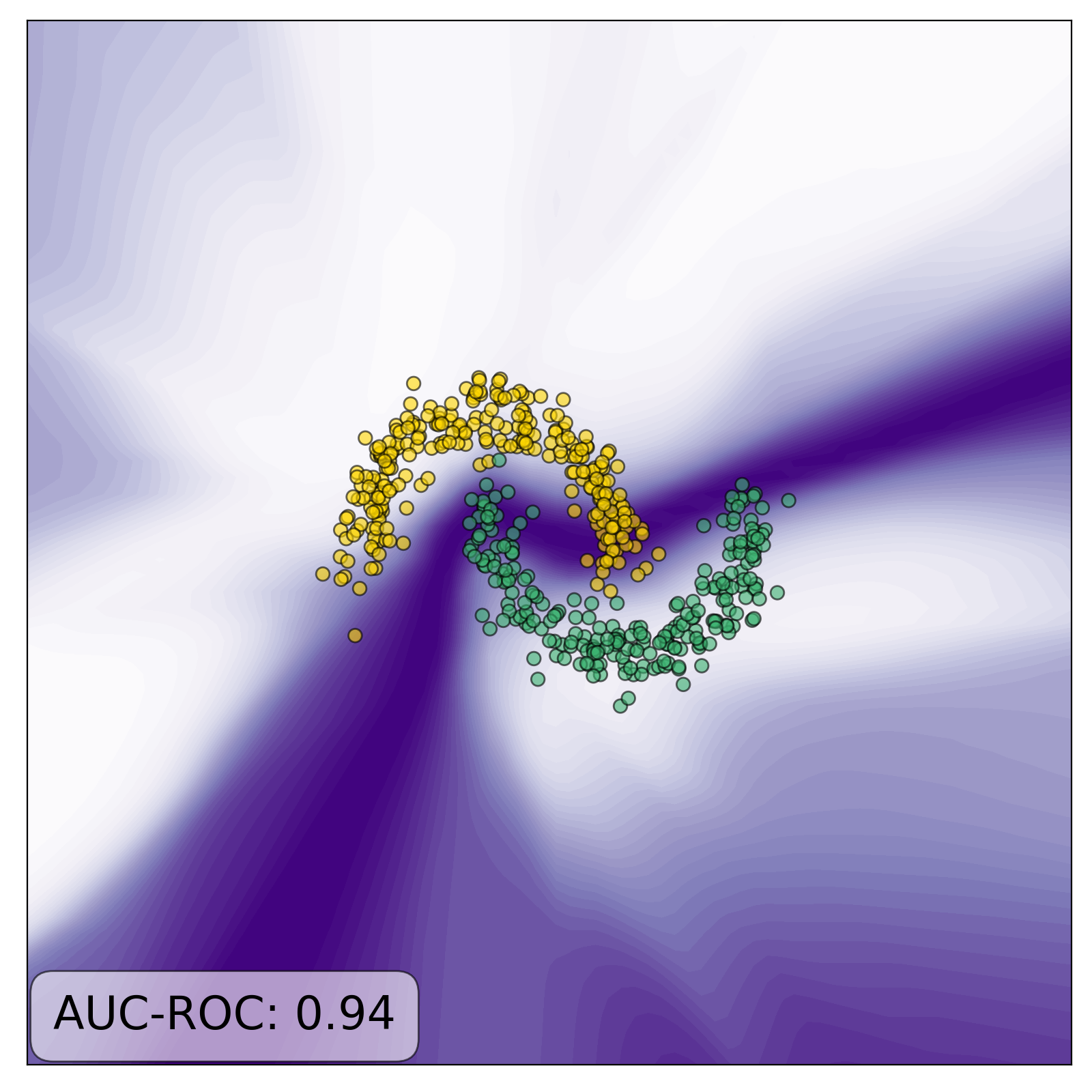}  & \includegraphics[width=0.24\textwidth]{img/anchorednnensemble_mutual_information.png} \\
            & \includegraphics[width=0.25\textwidth]{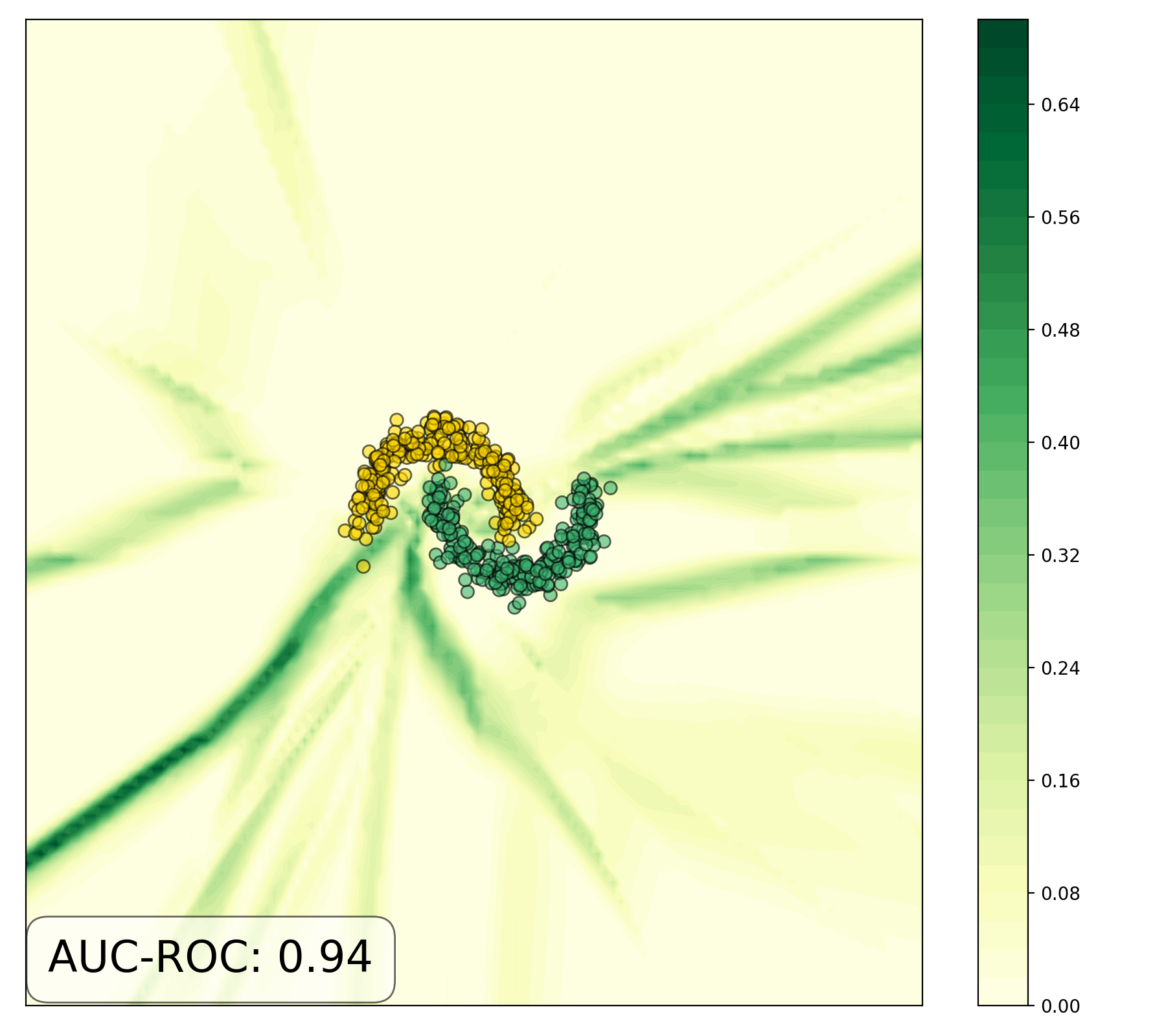} & \includegraphics[width=0.25\textwidth]{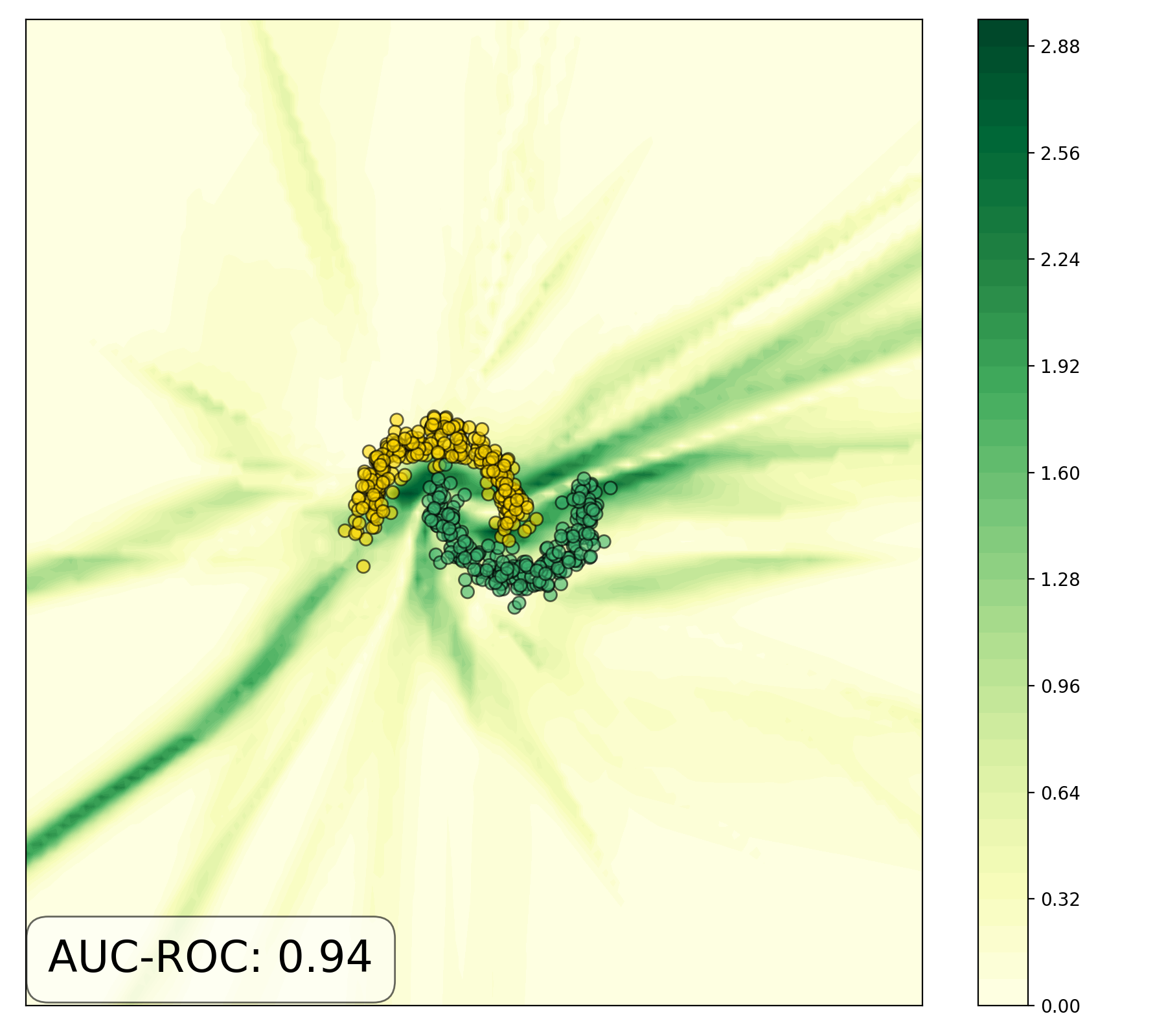}  & \includegraphics[width=0.25\textwidth]{img/anchorednnensemble_mutual_information_grads.png} \\
        \end{tabular}
    }
    \caption{Uncertainty measured by different metrics for multi-instance models (purple plots) and the gradient of the uncertainty score w.r.t to the input (yellow / green plot).}
    \label{fig:app-multiple-pred-nn}

\end{figure*}
\end{document}